\DeclareMathAlphabet{\mathcal}{OMS}{cmsy}{m}{n}
\setlist[itemize]{nolistsep}
\setlist[description]{nolistsep}
\setlist[enumerate]{nolistsep}
\newtheorem{example}{Example}
\newtheorem{definition}{Definition}
\newtheorem{proposition}{Proposition}
\newtheorem{lemma}{Lemma}
\newtheorem{theorem}{Theorem}
\newcommand{\brifnotempty}[1]{\ifthenelse{\equal{#1}{}}{}{ \br{#1}}}
\newenvironment{lemma*}[2][]
	{\pagebreak[2] \par \noindent \textbf{Lemma~\ref{#2}\brifnotempty{#1}.}\it}{\par}
\newenvironment{theorem*}[2][]
	{\pagebreak[2] \par \noindent \textbf{Theorem~\ref{#2}\brifnotempty{#1}.}\it}{\par}
\newenvironment{proposition*}[2][]
	{\pagebreak[2] \par \noindent \textbf{Proposition~\ref{#2}\brifnotempty{#1}.}\it}{\par}
\newenvironment{corollary*}[2][]
	{\pagebreak[2] \par \noindent \textbf{Corollary~\ref{#2}\brifnotempty{#1}.}\it}{\par}
\newcommand{\sign}{\Sigma}
\newcommand{\HT}{\mathcal{HT}}
\newcommand{\as}[1]{\ensuremath{\mathcal{AS}(#1)}} 
\newcommand{\classP}{\mathcal{C}} 
\newcommand{\la}{\leftarrow}
\newcommand{\nf}{not\,}
\newcommand{\head}[1]{\ensuremath{\mathit{H}(#1)}}
\newcommand{\body}[1]{\ensuremath{\mathit{B}(#1)}}
\newcommand{\pbody}[1]{\ensuremath{\mathit{B}^+(#1)}}
\newcommand{\nbody}[1]{\ensuremath{\mathit{B}^-(#1)}}
\newcommand{\nnbody}[1]{\ensuremath{\mathit{B}^{--}(#1)}}
\newcommand{\rhead}[1]{\ensuremath{\mathit{head}(#1)}}
\newcommand{\rbody}[1]{\ensuremath{\mathit{body}(#1)}}
\newcommand{\bodywoq}[1]{\ensuremath{\mathit{B}^{\setminus{q}}(#1)}}
\newcommand{\headwoq}[1]{\ensuremath{\mathit{H}^{\setminus{q}}(#1)}}
\newcommand{\redBody}[2]{\ensuremath{\mathit{B}^{\text{\textbackslash}#2}(#1)}}
\newcommand{\redHead}[2]{\ensuremath{\mathit{H}^{\text{\textbackslash}#2}(#1)}}
\newcommand{\setm}{\text{\textbackslash}} 
\newcommand{\asdual}[2]{\ensuremath{\mathcal{D}^{#1}_{as}(#2)}}
\newcommand{\NF}[1]{\ensuremath{NF(#1)}}
\newcommand{\fSP}[2]{\ensuremath{\mathsf{f}_{SP}(#1,#2)}}
\newcommand{\fSPOP}{\ensuremath{\mathsf{f}_{SP}}}
\newcommand{\FSPOP}{\ensuremath{F_{SP}}}
\newcommand{\fgt}{\ensuremath{\mathsf{f}}}
\newcommand{\f}[2]{\ensuremath{\mathsf{f}(#1,#2)}}
\newcommand{\classF}{\ensuremath{\mathsf{F}}}
\newcommand{\fSem}{\ensuremath{\mathsf{f}_{Sem}}}
\newcommand{\classMFSP}{\ensuremath{M_{\tuple{P,V}}}}
\newcommand{\forget}{\ensuremath{\mathsf{forget}}}
\newcommand{\cmr}[2]{\ensuremath{\mathit{r}_{#1}^{#2}}}
\newcommand{\tuple}[1]{\ensuremath{\langle#1\rangle}}
\newcommand{\SRel}{\ensuremath{\mathcal{R}_{\tuple{P,V}}}}
\newcommand{\Rel}{\ensuremath{Rel}}
\newcommand{\RA}{\ensuremath{R}}
\newcommand{\spF}{\sf SP}
\newcommand{\strong}{strong}
\newcommand{\weak}{weak}
\newcommand{\sem}{sem}
\newcommand{\Min}[1]{\ensuremath{\mathcal{MIN}(#1)}}
\newcommand{\Sas}{Sas}
\newcommand{\pSE}{{\bf (SE)}}
\newcommand{\pSP}{{\bf (SP)}}
\newcommand{\pSI}{{\bf (SI)}}
\newcommand{\pwSP}{{\bf (wSP)}}
\begin{document}

\title{A Syntactic Operator for Forgetting that Satisfies Strong Persistence}

\author[Matti Berthold et al.]
{MATTI BERTHOLD\\
Technische Universit\"at Dresden and Universit\"at Leipzig, Germany
\and RICARDO GON\c CALVES, MATTHIAS KNORR, JO\~AO LEITE\\
         NOVA LINCS, Departamento de Inform\'atica, Universidade Nova de Lisboa, Portugal}
\maketitle

\begin{abstract}
Whereas the operation of \emph{forgetting} has recently seen a considerable amount of attention in the context of Answer Set Programming (ASP), most of it has focused on theoretical aspects, leaving the practical issues largely untouched. Recent studies include results about what sets of properties operators should satisfy, as well as the abstract characterization of several operators and their theoretical limits. However, no concrete operators have been investigated.

In this paper, we address this issue by presenting the first concrete operator that satisfies \emph{strong persistence} -- a property that seems to best capture the essence of forgetting in the context of ASP -- whenever this is possible, and many other important properties. The operator is syntactic, limiting the computation of the forgetting result to manipulating the rules in which the atoms to be forgotten occur, naturally yielding a forgetting result that is close to the original program. 
\end{abstract}

\section{Introduction}
\label{sec:intro}

Unlike belief change operations such as \emph{revision}, \emph{update} and \emph{contraction}, which have deserved ample attention for over three decades now, 
only recently has the Knowledge Representation and Reasoning research community recognized the operation of \emph{forgetting} or \emph{variable elimination} -- i.e., removing from a knowledge base information that is no longer needed -- as a critical operation, as witnessed by the amount of work developed for different logical formalisms (cf. the survey \cite{EiterK2018}). 

The operation of \emph{forgetting} is most useful when we wish to eliminate (temporary) variables introduced to represent auxiliary concepts, with the goal of restoring the declarative nature of some knowledge base, or just to simplify it. Furthermore, it is becoming increasingly necessary to properly deal with legal and privacy issues, including, for example, to enforce the new EU General Data Protection Regulation which includes the \emph{Right to be Forgotten}. The operation of \emph{forgetting} has been applied in cognitive robotics \cite{LinR97,LiuW11,RajaratnamLPT14}, resolving conflicts \cite{LLM03,ZhangF06,EiterW08,LangM10}, ontology abstraction and comparison \cite{WWTP10,KontchakovWZ10}, among others \cite{ZZ-AI09,AlferesKW13}, which further witnesses its importance.

This has also triggered an increasing interest in the development of forgetting operators for Logic Programs (LP) in particular Answer Set Programming (ASP)~\cite{GL91}, see, e.g., \cite{ZhangF06,EiterW08,WangWZ13,KnorrA14,WangZZZ14,DelgrandeW15,GoncalvesKL-ECAI16,GKL17,GJKLW-AAAI19} or \cite{GoncalvesKL16,Leite17} for a recent survey.

Despite the significant amount of interest in \emph{forgetting} in the context of ASP, most research has focused on theoretical aspects, such as semantically characterizing various operators and investigating their properties. We now have a rather good understanding of the landscape of properties and operators, and their theoretical limits. Notably, it is now well established that \emph{strong persistence} \cite{KnorrA14} -- a property inspired by \emph{strong equivalence}, which requires that there be a correspondence between the answer
sets of a program before and after forgetting a set of atoms, and
that such correspondence be preserved in the presence of additional
rules not containing the atoms to be forgotten  -- best captures the essence of forgetting in the context of ASP, even if it is not always possible to forget some atom from a program while obeying such property \cite{GoncalvesKL-ECAI16}. Also, there exists a semantic characterisation of the class of forgetting operators that satisfy \emph{strong persistence} whenever that is possible.

Most of these theoretical results, however, have not been accompanied by investigations into the practical aspects of these operators. The characterization of the operators found in the literature is usually provided through the set of HT-models of the result, usually appealing to some general method to generate a concrete program from those models, such as the method that relies on the notion of counter-models in HT \cite{CabalarF07} or one of its variations \cite{WangWZ13,WangZZZ14}. Whereas relying on such methods to produce a concrete program is important in the sense of being a proof that such a program exists, it suffers from several severe drawbacks when used in practice:

\begin{itemize}
\item In general, it produces programs with a very large number of rules. For example, if we use the counter-models method variation in \cite{WangWZ13,WangZZZ14} to determine the result of forgetting about atom $q$ from the following program 
\[d\la \nf c\qquad a\la q \qquad q\la b \]
while satisfying \emph{strong persistence}, the result would have 20 rules, even if strongly equivalent to the much simpler program
\[d\la \nf c\qquad a\la b\ \]
\item Even if we replace the resulting program with some equivalent canonical one at the expense of the considerable additional computational cost (cf., \cite{InoueS98,InoueS04,CabalarPV07,SlotaL11}), its syntactic form would most likely bear no real connection to the syntactic form of the original program, compromising its declarative nature. Even the syntactic form of rules that are not related to the atoms to be forgotten would likely be compromised by such method.
\item It requires the computation of HT-models, which does not come without a significant cost.
\end{itemize}

This naturally suggests the need to investigate \emph{syntactic} forgetting operators, i.e., forgetting operators that produce the result through a syntactic manipulation of the original program, thus not requiring the computation of HT-models, with the aim to keep the result as close as possible to the original program. 

Despite the fact that most research on forgetting has focused on its semantic characterisation, there are some exceptions in the literature, notably the syntactic forgetting operators described in \cite{ZhangF06,EiterW08,KnorrA14}. However, for different reasons, the early approaches are too simplistic \cite{ZhangF06,EiterW08}, in that they hardly satisfy any of the desirable properties of forgetting in ASP \cite{GoncalvesKL16}, while the recent one has a very restricted applicability \cite{KnorrA14} (cf. Sect. \ref{sec:relWork} for a detailed discussion). For example, the syntactic operators in \cite{ZhangF06} lack the semantic underpinning and are too syntax-sensitive, not even preserving existing answer sets while forgetting. The purely syntactic operator in \cite{EiterW08}, dubbed $\forget_3$, designed to preserve the answer sets, falls short of adequately distinguishing programs that, though having the same answer sets, are not strongly equivalent, which prevents it from satisfying \emph{strong persistence} even in very simple cases. Additionally, $\forget_3$ is defined for disjunctive logic programs, but may produce programs with double negation, hence preventing its iterative use. Even from a syntactic perspective, $\forget_3$ performs unnecessary changes to the initial program, which may even affect rules that do not contain atom(s) to be forgotten. Finally, the operator in \cite{KnorrA14} is only applicable to a very particular narrow non-standard subclass of programs, which excludes many cases where it is possible to forget while preserving \emph{strong persistence}, and, in general, cannot be iterated either.

In this paper, we address these issues and present a concrete forgetting operator, defined for the entire class of extended logic programs (disjunctive logic programs with double negation), that allows one to forget a single atom and produces a program in the same class. The operator is syntactic, limiting the computation of the result of forgetting to manipulating the rules in which the atom to be forgotten occurs, thus naturally yielding a resulting program that is close to the original program, while not requiring expensive computation of HT-models. Crucially, the operator satisfies \emph{strong persistence} whenever this is possible, making it the first such operator. 

Whereas one might argue that we still need to compute the HT-models in order to determine whether it is possible to satisfy \emph{strong persistence}, hence eliminating one of the advantages mentioned above, this is significantly mitigated by the fact that the operator can always be used, even in cases where \emph{strong persistence} cannot be satisfied, while still exhibiting desirable properties. In other words, if you \emph{must} forget, this operator will ensure the ``\emph{perfect}'' result whenever \emph{strong persistence} is possible, and a ``\emph{good}'' result when ``\emph{perfection}'' is not possible. In this paper, we investigate the operators' properties not only when \emph{strong persistence} is possible, but also when that is not the case. In addition, we characterise a class of programs -- dubbed \emph{$q$-forgettable} -- from which it is always possible to forget (atom $q$) while satisfying \emph{strong persistence}, whose membership can be checked in linear time.

We proceed as follows: in Sec.~\ref{sec:prelim}, we recall extended logic programs,and briefly recap important notions related to forgetting in ASP; then, we begin Sec.~\ref{sec:operator} by drawing some general considerations on syntactic forgetting and introduce some necessary notions in Sec.~\ref{subsec:syntForgetting}, introduce our new operator together with several illustrative examples in Sec.~\ref{subsec:ForgOp}, followed by a discussion of its properties in Sec.~\ref{subsec:props}; in Sec.~\ref{sec:relWork} we discuss related work; and in Sec.~\ref{sec:conclusions}, we conclude.
All proofs of the established results and a detailed comparison with relevant syntactic operators in the literature can be found in the accompanying appendix.

\section{Preliminaries}
\label{sec:prelim}
In this section, we recall necessary notions on answer set programming and forgetting. 

We assume a \emph{propositional signature} $\sign$. 
A \emph{logic program} $P$ over $\sign$ is a finite set of \emph{rules} of the form
$a_1 \vee \ldots \vee a_k  \la b_1,..., b_l, \nf c_{1},..., \nf c_m, \nf \nf d_1,..., \nf \nf d_n$,
where all $a_1,\ldots,a_k,$ $b_1,\ldots, b_l,c_{1},\ldots, c_m$, and $d_{1},\ldots, d_n$ are atoms of $\sign$. 
Such rules $r$ are also written more succinctly as 
$\head{r} \la \pbody{r}, \nf \nbody{r}, \nf \nf \nnbody{r},$
where $\head{r} = \{a_1,\ldots,a_k\}$, $\pbody{r}=\{b_1,\ldots, b_l\}$, $\nbody{r}=\{c_{1},\ldots, c_m\}$, and $\nnbody{r}=\{d_{1},\ldots, d_n\}$, and we will use both forms interchangeably. Given a rule $r$, $\head{r}$ is called the \emph{head} of $r$, and $\body{r}=\pbody{r}\cup\nf \nbody{r}\cup \nf \nf \nnbody{r}$ the \emph{body} of $r$, where, for a set $A$ of atoms, $\nf A = \{\nf q \!\!: q\in A\}$ and $\nf \nf A = \{\nf \nf q \!: q\in A\}$. 
We term the elements in $\body{r}$ \emph{(body) literals}. $\sign(P)$ and $\sign(r)$ denote the set of atoms appearing in $P$ and $r$, respectively.
The general class of logic programs we consider\footnote{Note that the use of double negation is very common in the literature of forgetting for ASP (cf.\ the recent survey \cite{GoncalvesKL16}), and in fact necessary as argued in \cite{EiterW08,WangWZ13,KnorrA14,WangZZZ14}. This does not pose a problem as double negation can be used to represent choice rules, and tools such as clingo support the syntax of double negation.} includes a number of special kinds of rules $r$: if $n=0$, then we call $r$ \emph{disjunctive}; if, in addition, $k\leq 1$, then $r$ is \emph{normal}; if on top of that $m=0$, then we call $r$ \emph{Horn}, and \emph{fact} if also $l=0$. 
The classes of \emph{disjunctive}, \emph{normal} and \emph{Horn programs} are defined resp.\ as a finite set of disjunctive, normal, and Horn rules. 
Given a program $P$ and an \emph{interpretation}, i.e., a set $I\subseteq \sign$ of atoms, the \emph{reduct} of $P$ given $I$, is defined as $P^I = \{\head{r}\la \pbody{r} : r\in P \text{ such that }\nbody{r}\cap I=\emptyset \text{ and } \nnbody{r}\subseteq I\}$.

An \emph{HT-interpretation} is a pair $\langle X,Y\rangle$ s.t.\ $X\subseteq Y \subseteq \sign$.
Given a program $P$, an HT-interpretation $\langle X,Y\rangle$ is an \emph{HT-model of $P$} if $Y\models P$ and $X\models P^{Y}$, where $\models$ denotes the standard consequence relation for classical logic.
We admit that the set of HT-models of a program $P$ is restricted to $\sign(P)$ even if $\sign(P)\subset \sign$.
We denote by $\HT(P)$ the set of \emph{all HT-models of $P$}.
A set of atoms $Y$ is an \emph{answer set} of $P$ if $\tuple{Y,Y}\in\HT(P)$, but there is no $X\subset Y$ such that $\tuple{X,Y}\in\HT(P)$.
The set of all answer sets of $P$ is denoted by $\as{P}$.
We say that two programs $P_1, P_2$ are \emph{equivalent} if $\as{P_1}=\as{P_2}$ and \emph{strongly equivalent}, denoted by $P_1\equiv P_2$, if $\as{P_1\cup R}=\as{P_2\cup R}$ for any program $R$.
It is well-known that $P_1\equiv P_2$ exactly when $\HT(P_1)=\HT(P_2)$~\cite{LPV01}. 
Given a set $V\subseteq \sign$, the \emph{$V$-exclusion} of a set of answer sets (a set of HT-interpretations) $\mathcal{M}$, denoted $\mathcal{M}_{\parallel V}$, is $\{X\text{\textbackslash} V\mid X\in\mathcal{M}\}$ ($\{\tuple{X\text{\textbackslash} V,Y\text{\textbackslash} V}\mid \tuple{X,Y}\in\mathcal{M}\}$). 

A \emph{forgetting operator} over a class $\classP$ of programs\footnote{In this paper, we only consider the very general class of programs introduced before, but, often, subclasses of it appear in the literature of ASP and forgetting in ASP.} over $\sign$ is a partial function $\fgt:\classP\times 2^{\sign}\to \classP$ s.t. the \emph{result of forgetting about $V$ from $P$}, $\f{P}{V}$, is a program over $\sign(P)\text{\textbackslash} V$, for each $P\in \classP$ and $V\subseteq \sign$. 
We denote the domain of $\fgt$ by $\classP(\fgt)$. 
The operator $\fgt$ is called \emph{closed} for $\classP'\subseteq\classP(\fgt)$ if $\f{P}{V}\in \classP'$, for every $P\in \classP'$ and $V\subseteq \sign$.
A \emph{class $\classF$ of forgetting operators (over $\classP$)} is a set of forgetting operators $\fgt$ s.t.\ $\classP(\fgt)\subseteq \classP$.

Arguably, among the many properties introduced for different classes of forgetting operators in ASP \cite{GoncalvesKL16}, \emph{strong persistence} \cite{KnorrA14} is the one that should intuitively hold, since it imposes the preservation of all original direct and indirect dependencies between atoms not to be forgotten.
In the following, $\classF$ is a class of forgetting operators.

\begin{itemize}[align=left]
\item[\pSP] $\classF$ satisfies \emph{Strong Persistence} if, for each $\fgt\in \classF$, $P\in \classP(\fgt)$ and $V\subseteq \sign$, we have $\as{\f{P}{V}\cup R}=\as{P\cup R}_{\parallel V}$, for all programs $R\in \classP(\fgt)$ with $\sign(R)\subseteq \sign\setm V$. 
\end{itemize}
Thus, \pSP\ requires that the answer sets of $\fgt(P,V)$ correspond to those of $P$, no matter what programs $R$ over $\sign\setm V$ we add to both, which is closely related to the concept of strong equivalence. 
Among the many properties implied by \pSP\ \cite{GoncalvesKL16}, \pSI\ indicates that rules not mentioning atoms to be forgotten can be added before or after forgetting.

\begin{itemize}[align=left]
 \item[\pSI] $\classF$ satisfies \emph{Strong (addition) Invariance} if, for each $\fgt\in \classF$, $P\in \mathcal{C}$ and $V\subseteq \sign$, we have $\f{P}{V}\cup R \equiv \f{P\cup R}{V}$ for all programs $R\in \classP$ with $\sign(R)\subseteq \sign\text{\textbackslash} V$.
\end{itemize}
 
However, it was shown in~\cite{GoncalvesKL-ECAI16} that 
there is no forgetting operator that satisfies \pSP\ and that is defined for all pairs $\tuple{P,V}$, called \emph{forgetting instances}, where $P$ is a program and $V$ is a set of atoms to be forgotten from $P$.
Given this general impossibility result, \pSP\ was defined for concrete forgetting instances.

A forgetting operator $\fgt$ over $\classP$ satisfies \pSP$_{\tuple{P,V}}$, for $\tuple{P,V}$ a forgetting instance over $\classP$, if $\as{\f{P}{V}\cup R}=\as{P\cup R}_{\parallel V}$, for all programs $R\in \classP$ with $\sign(R)\subseteq \sign\setm V$. 

A sound and complete criterion $\Omega$ was presented to characterize 
when it is not possible to forget while satisfying \pSP$_{\tuple{P,V}}$. 
An instance $\tuple{P,V}$ \emph{satisfies criterion $\Omega$} if there is $Y\subseteq \sign\text{\textbackslash} V$ such that the set of sets 
$ \SRel^Y=\{\RA_{\tuple{P,V}}^{Y,A}\mid A\in \Rel_{\tuple{P,V}}^Y\}$
 is non-empty and has no least element, where 
\begin{align*}
\RA^{Y,A}_{\tuple{P,V}} & =\{X\text{\textbackslash} V\mid \tuple{X,Y\cup A}\in \HT(P)\} \\ 
 \Rel_{\tuple{P,V}}^Y & =\{A\subseteq V\mid \tuple{Y\cup A,Y\cup A}\in \HT(P) \text{ and }
  \nexists A'\subset A \text{ s.t.\ }\tuple{Y\cup A',Y\cup A}\in \HT(P)\}.
 \end{align*}

This technical criterion was shown to be sound and complete, i.e., it is not possible to forget about a set of atoms $V$ from a program $P$ exactly when $\tuple{P,V}$ satisfies $\Omega$.
A corresponding class of forgetting operators, $\classF_{\spF}$, was introduced \cite{GoncalvesKL-ECAI16}.
\begin{align*}
\classF_{\spF} = \{\fgt\mid \HT(\f{P}{V}) \! = \! \{\tuple{X,Y}\mid Y\subseteq \sign(P)\text{\textbackslash} V & \wedge X\!\in \bigcap\SRel^Y \},
\text{ for all } P\in \classP(\fgt) \text{ and } V\subseteq\sign\}.
\end{align*}
It was shown that every operator in $\classF_{\spF}$ satisfies \pSP$_{\tuple{P,V}}$ for instances ${\tuple{P,V}}$ that do not satisfy  $\Omega$.
Moreover, in the detailed study of the case where $\Omega$ is satisfied~\cite{GoncalvesKLW17}, it was shown that the operators in $\classF_{\spF}$ preserve all answer sets, even in the presence of an additional program without the atoms to be forgotten. This makes   $\classF_{\spF}$ an ideal choice to forgetting while satisfying \pSP\ whenever possible. 
Whereas $\classF_{\spF}$ is only defined semantically, i.e., it only specifies the HT-models that a result of forgetting a set of atoms from program should have, a specific program could be obtained from that set of HT-models based on its counter-models \cite{CabalarF07} -- a construction previously adapted for computing concrete results of forgetting for classes of forgetting operators based on HT-models \cite{WangWZ13,WangZZZ14}. 

\section{A Syntactic Operator}
\label{sec:operator}

In this section, we present a syntactic forgetting operator $\fSPOP$ that satisfies \pSP\ whenever possible.
We start with some general considerations on syntactic forgetting and introduce some necessary notions in Sec.~\ref{subsec:syntForgetting}.
Then, we introduce our new operator $\fSPOP$ together with explanatory examples in Sec.~\ref{subsec:ForgOp}, followed by a discussion of its properties in Sec.~\ref{subsec:props}.

\subsection{On Syntactic Forgetting in ASP}\label{subsec:syntForgetting}

One fundamental idea in syntactic forgetting (in ASP) is to replace the occurrences of an atom to be forgotten in the body of some rule with the body of a rule whose head is the atom to be forgotten.
This can be rooted in the weak generalized principle of partial evaluation wGPPE \cite{BrassD99} and aligns with the objective to preserve answer sets, no matter which rules over the remaining atoms are to be added.

\begin{example}\label{ex:simpleForgetting}
Consider program $P=\{t \la q; v \la \nf q; q \la s; q \la w\}$ from which we want to forget about $q$.
We claim that the following should be a result of forgetting about $q$.
\begin{align*}
t & \la s & t & \la w & v & \la \nf s, \nf w
\end{align*}
Since $t$ depends on $q$, and $q$ depends on $s$ as well as on $w$, we want to preserve these dependencies without mentioning $q$, thus creating two rules in which $q$ is replaced by $s$ and $w$, respectively.
This way, whenever a set $R$ of rules allow us to derive $s$ (or $w$), then $t$ must occur in all answer sets of $P\cup R$ as well as of $\fgt(P,\{q\})\cup R$. 
At the same time, since $v$ depends on $\nf q$, and $q$ would be false whenever both $s$ and $w$ are false, we capture this in a rule that represents this dependency. 
\end{example}

As we will see later, these natural ideas provide the foundation for the syntactic operator we are going to define, even though several adjustments have to be made to be applicable to all programs, and to ensure that the operator satisfies \pSP\ whenever possible. 
Still, even in such simplified examples, the occurrence of certain rules that can be considered redundant would complicate this idea of syntactic forgetting.
If $q\la q$ occurred in program $P$, then we would certainly not want to replace $q$ by $q$ and add $t\la q$. 
To avoid the problems caused by redundant rules and redundant parts in rules, we simplify the program upfront, and similarly to \cite{KnorrA14} and previous related work \cite{InoueS98,InoueS04,CabalarPV07,SlotaL11}, we adopt a normal form that avoids such redundancies, but is otherwise syntactically identical to the original program. 
There are two essential differences to the normal form considered in \cite{KnorrA14}. First, contrarily to \cite{KnorrA14}, our normal form applies to programs with disjunctive heads. Moreover, we eliminate non-minimal rules \cite{BrassD99}, which further strengthens the benefits of using normal forms.

Formally, a rule $r$ in $P$ is \emph{minimal} if there is no rule $r'\in P$ such that $\head{r'}\subseteq \head{r}\wedge \body{r'}\subset \body{r}$ or 
$\head{r'}\subset \head{r}\wedge \body{r'}\subseteq \body{r}$.
We also recall that a rule $r$ is \emph{tautological} if $\head{r}\cap\pbody{r}\not=\emptyset$, or $\pbody{r}\cap\nbody{r}\not=\emptyset$, or $\nbody{r}\cap\nnbody{r}\not=\emptyset$.

\begin{definition}\label{def:normalForm}
Let $P$ be a program.
We say that $P$ is in \emph{normal form} if the following conditions hold:
\begin{itemize}
\item for every $a\in \sign(P)$ and $r\in P$, at most one of $a$, $not\ a$ or $not\ not\ a$
is in $B(r)$;
\item if $a\in H(r)$, then neither $a$, nor $not\ a$ are in $B(r)$;
\item all rules in $P$ are minimal.
\end{itemize}
\end{definition}
The next definition shows how to transform any program into one in normal form.

\begin{definition}\label{def:constructionNormalForm}
Let $P$ be a program. The normal form $\NF{P}$ is obtained from $P$ by:
\begin{enumerate}
\item removing all tautological rules; 
\item removing all atoms $a$ from $\nnbody{r}$ in the remaining rules $r$, whenever $a\in\pbody{r}$;
\item removing all atoms $a$ from $\head{r}$ in the remaining rules $r$, whenever $a\in \nbody{r}$;
\item removing from the resulting program all rules that are not minimal.
\end{enumerate}
\end{definition}

\noindent
Note that the first item of Def.~\ref{def:normalForm} is ensured by conditions 1 and 2 of Def.~\ref{def:constructionNormalForm}, the second by 1 and 3, and the third by condition 4.
We can show that the construction of $\NF{P}$ is correct.

\begin{proposition}\label{prop:normalForm}
Let $P$ be a program. 
Then, $\NF{P}$ is in normal form and is strongly equivalent to $P$.
\end{proposition}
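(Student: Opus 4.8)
The plan is to establish the two claims separately. That $\NF{P}$ is in normal form is essentially the bookkeeping sketched after Def.~\ref{def:constructionNormalForm}, and I would carry it out by checking each of the three conditions of Def.~\ref{def:normalForm} while tracking the order in which the four construction steps are applied. The first condition (at most one of $a$, $\nf a$, $\nf\nf a$ occurs in $\body{r}$) can be violated only through $\pbody{r}\cap\nbody{r}\neq\emptyset$, $\nbody{r}\cap\nnbody{r}\neq\emptyset$, or $\pbody{r}\cap\nnbody{r}\neq\emptyset$; the first two make $r$ tautological and are removed by step~1, and the third is repaired by step~2. The second condition is violated only by $\head{r}\cap\pbody{r}\neq\emptyset$ (tautological, removed by step~1) or by $a\in\head{r}$ together with $a\in\nbody{r}$ (repaired by step~3). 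The key observation I would make explicit is that steps~2 and~3 only delete atoms (from $\nnbody{r}$, resp.\ from $\head{r}$), so they cannot create a fresh tautology or a fresh body-clash; hence a condition once secured is not undone. Likewise step~4 only deletes entire rules, so it preserves the per-rule conditions while enforcing the third condition (minimality).

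For strong equivalence I would show that each of the four steps preserves $\HT$, and then chain the equalities by transitivity of $\equiv$ (recall $P_1\equiv P_2$ iff $\HT(P_1)=\HT(P_2)$). For steps~1--3 the cleanest route is the rule-wise decomposition $\HT(P)=\bigcap_{r\in P}\HT(\{r\})$, which reduces matters to a single rule: (i) a tautological rule is satisfied by every HT-interpretation, since in each of its three cases either the classical body is unsatisfiable or the head is forced, both for $Y\models r$ and for $X\models r^Y$; (ii) deleting $a$ from $\nnbody{r}$ when $a\in\pbody{r}$ yields a rule with the same HT-models, reflecting the HT-valid equivalence $a\wedge\nf\nf a\htequiv a$; (iii) deleting $a$ from $\head{r}$ when $a\in\nbody{r}$ likewise preserves HT-models, because whenever the body applies we have $a\notin Y$, hence $a\notin X$, so the disjunct $a$ can never be the satisfied one in either world. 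In each case I would verify the claim directly from the definitions, checking both $Y\models\cdot$ and $X\models(\cdot)^Y$; in (ii) the delicate point is that dropping $a$ changes the reduct-membership test $\nnbody{r}\subseteq Y$, but one checks that when the new rule enters the reduct while the old one does not, its head-rule has $a\in\pbody{r}$ with $a\notin X$, so it is satisfied vacuously.

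Step~4 is the part requiring the most care, and I expect it to be the main obstacle, because all non-minimal rules are removed simultaneously. I would first record that, for single rules, if $\head{r'}\subseteq\head{r}$ and $\body{r'}\subseteq\body{r}$, then $\tuple{X,Y}\models r'$ implies $\tuple{X,Y}\models r$ (a larger body is harder to trigger and a larger head easier to satisfy, at both the $Y$-level and the reduct level). The subtlety is that a rule witnessing the non-minimality of $r$ might itself be among those removed; to handle this I would observe that the subsumption relation ``$\head{r'}\subseteq\head{r}$ and $\body{r'}\subseteq\body{r}$ with at least one inclusion strict'' is the strict part of a partial order on the pairs $\tuple{\head{r},\body{r}}$, hence well-founded on the finite program. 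Consequently every removed (non-minimal) rule is subsumed by some rule that is minimal and therefore retained, so every HT-model of the minimized program satisfies all originally present rules; together with the trivial reverse inclusion this gives $\HT$-equality. Finally, since each of the four steps preserves $\HT$, their composition yields $\HT(\NF{P})=\HT(P)$, i.e.\ $\NF{P}\equiv P$.
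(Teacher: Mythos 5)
Your proof is correct and follows essentially the same route as the paper's: verify the normal-form conditions directly from the construction, then show that each of the four transformation steps preserves HT-models and chain by transitivity of strong equivalence. The rule-level arguments for tautologies, for dropping $\nf\nf a$ when $a\in\pbody{r}$, and for dropping $a$ from the head when $a\in\nbody{r}$ match the paper's almost verbatim; in fact your explicit handling of the case where the modified rule enters the reduct while the original does not ($a\notin Y$) is slightly more careful than the paper's claim that the two reduced rules are simply equal. The one place where you genuinely go beyond the paper is the minimality step: the paper only argues that removing a \emph{single} non-minimal rule $r$ preserves HT-models, using a witness $r'\in P\setminus\{r\}$, and leaves implicit why the \emph{simultaneous} removal of all non-minimal rules is sound even though the witness may itself be among the removed rules. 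Your well-foundedness argument --- the strict subsumption relation is a strict partial order on a finite program, so every removed rule is subsumed by some retained minimal rule --- closes exactly that gap, and is the right way to justify step 4 of Definition~\ref{def:constructionNormalForm} as stated.
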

In addition, $\NF{P}$ can be computed in at most quadratic time in terms of the number of rules in $P$ (as ensuring minimality requires comparing all $n$ rules with each other).
\begin{proposition}\label{prop:compNormalForm}
Let $P$ be a program. 
Then, the normal form $\NF{P}$ can be computed in PT{\scriptsize IME}.
\end{proposition}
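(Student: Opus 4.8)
The plan is to bound, phase by phase, the work performed by the construction of $\NF{P}$ in Definition~\ref{def:constructionNormalForm}, and to observe that each of its four phases runs in time polynomial in the size of $P$. Let $n$ denote the number of rules of $P$ and let $k$ bound the number of atom occurrences in any single rule (so $k\le|\sign(P)|$ up to a constant factor). First I would dispatch the three \emph{local} phases (items 1--3), which act on each rule independently. Deciding whether a rule $r$ is tautological amounts to testing the three intersections $\head{r}\cap\pbody{r}$, $\pbody{r}\cap\nbody{r}$ and $\nbody{r}\cap\nnbody{r}$ for emptiness; each test costs $O(k^2)$ (or $O(k\log k)$ after sorting), so item~1 is a single pass over the $n$ rules costing $O(n\,k^2)$. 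Items~2 and~3 likewise scan each remaining rule once, deleting from $\nnbody{r}$ every atom already in $\pbody{r}$, and from $\head{r}$ every atom already in $\nbody{r}$; both are again $O(n\,k^2)$. Hence the first three phases together run in time polynomial in the size of $P$.

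The only genuinely more expensive phase is item~4, the elimination of non-minimal rules, and this is where I expect the (mild) main obstacle to lie. Here I would first note that the domination relation underlying minimality---$r'$ dominates $r$ iff $\head{r'}\subseteq\head{r}\wedge\body{r'}\subset\body{r}$ or $\head{r'}\subset\head{r}\wedge\body{r'}\subseteq\body{r}$---is a strict partial order, so the minimal rules are exactly those dominated by no other rule. This lets me compute the result without worrying about the order of deletions: I compare every ordered pair $(r,r')$ of the at most $n$ rules surviving phase~3, flag $r$ whenever some $r'$ dominates it, and then delete all flagged rules simultaneously. Each pairwise comparison checks a constant number of subset and proper-subset relations on heads and bodies, costing $O(k^2)$, so phase~4 runs in $O(n^2 k^2)$ time. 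Any rule surviving the deletion has no dominator in the surviving set (a subset of the pre-deletion rule set), hence is minimal, and every minimal rule survives; thus the simultaneous-deletion computation is well-defined and correct, matching the informal quadratic-in-$n$ bound stated before the proposition.

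Summing the costs of the four phases yields an overall bound of $O(n^2 k^2)$, which is polynomial in the size of $P$; therefore $\NF{P}$ can be computed in PT{\scriptsize IME}. I do not expect any real difficulty beyond phase~4: the subtlety is only to justify that flagging against the \emph{fixed} post-phase-3 rule set and deleting all flagged rules in one sweep produces a program all of whose rules are minimal, which follows immediately from transitivity of domination, so no iterated recomputation is needed.
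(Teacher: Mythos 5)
Your proposal is correct and follows essentially the same route as the paper's proof: items 1--3 are per-rule (hence linear in the number of rules) passes, and item 4 is the dominant cost, requiring a pairwise comparison of all rules and hence quadratic time in the number of rules. Your additional justification that flagging against the fixed post-phase-3 set and deleting all dominated rules in one sweep is correct (via transitivity/asymmetry of the domination order) is a welcome refinement of a point the paper leaves implicit, but it does not change the argument's structure.
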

Thus for the remainder of the paper, we only consider programs in normal form, as this can be efficiently computed and is syntactically equal to the original program apart from redundancies.

Coming back to Ex.~\ref{ex:simpleForgetting}, the way we replaced $\nf q$ becomes more complicated when the rules with head $q$ have more than one body literal or head atoms other than $q$.

\begin{example}\label{ex:as-dual}
Consider program $P=\{v \la \nf q; q \la s, t; q\vee u \la w\}$, a variation of the program in Ex.~\ref{ex:simpleForgetting}.
We observe that the following rules for $v$ would desirably be in the result of forgetting $q$.
\begin{align*}
v &\la \nf s,\nf w  & v &\la \nf t,\nf w    & v &\la \nf s,\nf\nf u   & v &\la \nf t,\nf\nf u 
\end{align*}

These four rules correspond to the four possible ways to guarantee that $q$ is false, i.e., that $q$ is not derived by any of the rules with $q$ in the head. For each rule, this means  that either one body literal is false or one head atom different from $q$ is true. The first two rules correspond to the cases where one body atom in each rule with head $q$ is false. 
The last two rules correspond to the case where $u$, the other disjunct in the head of the third rule, is true and one of $s$ and $t$ is false.
Intuitively, $u$ in the head corresponds to $\nf u$ in the body, which is why $\nf\nf u$ appears.

Besides the rules for $v$, we should also have the following rules for deriving $u$.
\begin{align*}
 u &\la w,\nf s,\nf\nf u 	& u &\la w,\nf t,\nf\nf u
\end{align*}
These rules are necessary to guarantee that, similarly as in $P$, $u$ can be either true or false (which is why $\nf\nf u$ appears) whenever $w$ (the body of the original rule with head $u$) is true.

Together, these six rules guarantee the preservation of the existing implicit dependencies of $P$.
\end{example}
In \cite{EiterW08,KnorrA14}, sets of conjunctions of literals are collected with the aim of replacing some $\nf q$ while preserving its truth value (though not considering disjunction nor double negation \cite{EiterW08}).
We now introduce the notion of \emph{as-dual} $\asdual{q}{P}$, as generalized from \cite{KnorrA14}, which collects the possible ways to satisfy all rules of $P$ independently of $q$.
The intuitive idea is that, when applied to the set of rules that contain $q$ in the head, the as-dual contains sets of literals, each representing a possible way to guarantee that $q$ is false (since every rule with $q$ in the head is satisfied independently of $q$).

For this purpose, we need to introduce further notation.
For a set $S$ of literals, $\nf(S)=\{\nf s:s\in S\}$ and $\nf\nf(S)=\{\nf\nf s :s\in S\}$, where, for $p\in \sign$, we assume the simplification $\nf\nf\nf p=\nf p$ and $\nf\nf\nf\nf p=\nf\nf p$.
The sets $B^{\setminus q}(r)$ and $H^{\setminus q}(r)$ respectively denote the set of body and head literals after removing every occurrence of $q$, i.e., $B^{\setm q}(r)=B(r)\setm\{q,\nf q,\nf\nf q\}$ and $H^{\setm q}(r)=H(r)\setm\{q\}$.
We define $\asdual{q}{P}$ as follows.
\begin{align*}
\asdual{q}{P}=\{&\nf(\{l_1,\dots,l_m \})\ \cup\nf\nf(\{l_{m+1},\dots,l_n\}):l_i\in B^{\setminus q}(r_i),1\leq i\leq m,\\
&l_j\in H^{\setminus q}(r_j),m+1\leq j\leq n,
\langle\{r_1,\dots,r_m\},\{r_{m+1},\dots,r_n\}\rangle
\text{ is a partition of }P\}
\end{align*}
The as-dual construction considers the possible partitions of $P$, and for each partition $\langle F,T\rangle$ collects the negation of exactly one element (except $q$) from the body of each rule of $F$, thus guaranteeing that the body of every rule of $F$ is not satisfied, together with the double negation of exactly one head atom (except $q$) from each rule of $T$, thus guaranteeing that the head of every rule of $T$ is satisfied, together guaranteeing that all rules in $P$ are satisfied independently of $q$.

When applied to the set of rules that contain $q$ in the head, the as-dual represents the possible ways to replace the occurrences of $\nf q$ in the body of a rule when forgetting about $q$ from $P$. 
 It is important to note that, in this case, each set in $\asdual{q}{P}$ contains one literal for each rule that contains $q$ in the head (those that allow $q$ to be derived), since only in this way we can guarantee that $q$ is false. 
This definition covers two interesting corner cases. First, if there is no rule with $q$ in its head, i.e., the input program $P$ is empty, then $\asdual{q}{P}=\{\emptyset\}$, meaning that $q$ is false, and therefore we do not need to impose conditions on other atoms. Finally, if $P$ contains $q$ as a fact, then $\asdual{q}{P}=\emptyset$, since in this case it is impossible to make $q$ false.

\begin{example}\label{ex:asDualConstructionExample}
Consider the program of Ex.~\ref{ex:simpleForgetting}. Applying the as-dual to the two rules with $q$ in the head, we obtain $\{\{\nf s,\nf w\}\}$, whose unique element corresponds to the only way to make $q$ false.

If we now consider the program of Ex.~\ref{ex:as-dual}, which has two rules with $q$ in the head, the as-dual construction renders $\{\{\nf s,\nf\nf u\},\{\nf t,\nf\nf u\},\{\nf s,\nf w\},\{\nf t,\nf w\}\}$, which correspond to the four possible ways to guarantee that $q$ is false.
\end{example}

\subsection{A Novel Forgetting Operator}\label{subsec:ForgOp}

We are now ready to present the formal definition of the operator $\fSPOP$.
As this definition is technically elaborate, we will first present the new operator itself that allows forgetting a single atom from a given program, and subsequently explain and illustrate its definition.

\begin{definition}[Forgetting about $q$ from $P$]\label{def:OurForgetting}
Let $P$ be a program over $\Sigma$, and $q\in \Sigma$. Let $P'=\NF{P}$ be the normal form of $P$.
Consider the following sets, each representing a possible way $q$ can appear in rules of $P'$:
\begin{align*}
R_\text{ }&:= \{r\in P'\mid q\not\in \Sigma(r)\} 
&
R_2&:= \{r\in P'\mid not\ not\ q \in B(r), q\not\in H(r)\}
\\
R_0&:= \{r\in P'\mid q\in B(r)\}
&
R_3&:= \{r\in P'\mid not\ not\ q \in B(r), q\in H(r)\}
\\
R_1&:= \{r\in P'\mid not\ q\in B(r)\}
&
R_4&:= \{r\in P'\mid not\ not\ q \not\in B(r), q\in H(r)\}
\end{align*}
The result of forgetting about $q$ from $P$, $\fgt_{SP}(P,q)$, is $\NF{P''}$, where $P''$ is as follows: 
\begin{itemize}
\item each $r\in R$
\item for each $r_0\in R_0$
\begin{description}
    \item[1a] for each $r_4\in R_4$\label{1a}\\
        $\head{r_0}\cup \headwoq{r_4} \leftarrow \bodywoq{r_0}\cup \body{r_4}$
    \item[2a] for each $r_3\in R_3$, $r'\in R_1\cup R_4$\label{2a}\\
        $\head{r_0}\cup \headwoq{r_3} \leftarrow \bodywoq{r_0}\cup \bodywoq{r_3}
        \cup \nf(\headwoq{r'})\cup \nf\nf(\bodywoq{r'})$
    \item[3a] for each $r_3\in R_3$, $h(r_0)\in \head{r_0}$, $D\in \asdual{q}{R_0\cup R_2 \setminus \{r_0\}}$\label{3a}\\
        $\head{r_0} \leftarrow \bodywoq{r_0}\cup \{not\ not\ h(r_0))\}\cup D\cup \bodywoq{r_3}\cup \nf(\headwoq{r_3})$
\end{description}
\item for each $r_2\in R_2$
\begin{description}
    \item[1b] for each $r_4\in R_4$\label{1b}\\
        $\head{r_2} \leftarrow \bodywoq{r_2}\cup \nf(\headwoq{r_4})\cup \nf\nf(\body{r_4})$
    \item[2b] for each $r_3\in R_3$, $r'\in R_1\cup R_4$\label{2b}\\
        $\head{r_2} \leftarrow \bodywoq{r_2}\cup \nf(\headwoq{r_3}\cup \headwoq{r'})\cup \nf\nf(\bodywoq{r_3}\cup \bodywoq{r'})$
    \item[3b] for each $r_3\in R_3$, $h(r_2)\in \head{r_2}$, $D\in \asdual{q}{R_0\cup R_2 \setminus \{r_2\}}$\label{3b}\\
        $\head{r_2} \leftarrow \bodywoq{r_2}\cup \nf(\headwoq{r_3})\cup \nf\nf (\bodywoq{r_3}\cup \{h(r_2)\})\cup D$
\end{description}
\item for each $r'\in R_1\cup R_4$
\begin{description}
    \item[4] for each $D\in \asdual{q}{R_3\cup R_4}$ such that $D\cap \nf \bodywoq{r'}=\emptyset$ \label{4}\\
        $\headwoq{r'} \leftarrow \bodywoq{r'}\cup D$
    \item[5] for each $r_3\in R_3$, $r\in R_0\cup R_2$, $D\in \asdual{q}{R_4}$ such that $D\cap \nf \bodywoq{r'}=\emptyset$ \label{5}\\
        $\headwoq{r'} \leftarrow \bodywoq{r'}\cup \nf(\head{r}\cup \headwoq{r_3})\cup \nf\nf(\bodywoq{r}\cup \bodywoq{r_3})\cup D$
    \item[6] for each $r_3\in R_3$, $h(r')\in \head{r'}$, $D\in \asdual{q}{R_1\cup R_4\setminus\{r'\}}$\label{6}\\
        $\headwoq{r'}\leftarrow \bodywoq{r'}\cup \nf(\headwoq{r_3}) \cup \nf\nf(\bodywoq{r_3}\cup \{h(r')\})\cup D$
\end{description}
\item for each $r_0\in R_0$
\begin{description}
    \item[7] for each $r_3,r'_3\in R_3$ with $r_3\not=r'_3$, $D\in \asdual{q}{R_0\cup R_2\setminus\{r_0\}}, h(r_0)\in H(r_0)$\label{7}\\
    $H(r_0)\cup H^{\setminus q}(r_3)\leftarrow B^{\setminus q}(r_0)\cup B^{\setminus q}(r_3)\cup \nf(H^{\setminus q}(r'_3)) \cup \nf\nf(B^{\setminus q}(r'_3)\cup \{h(r_0)\}) \cup D$
\end{description}
\end{itemize}
\end{definition}

Using the normal form $P'$ of $P$, five sets of rules, $R_0$, $R_1$, $R_2$, $R_3$, and $R_4$, are defined over $P'$, in each of which $q$ appears in the rules in a different form. 
In addition, $R$ contains all rules from $P'$ that do not mention $q$.
These appear unchanged in the final result of forgetting.

The construction is divided in two cases: one for the rules which contain $q$ or $\nf\nf q$ in the body ($R_0$ or $R_2$), and one for the rules that contain $\nf q$ in the body or $q$ in the head ($R_1$ or $R_4$).

For rules $r$ in $R_0\cup R_2$, \textbf{1a} and \textbf{1b} generate the rules obtained by substituting the occurrences of $q$ or $\nf\nf q$ by the body of those rules whose head is $q$ and do not contain $q$ in the body (those in $R_4$). 
Also, \textbf{2a} and \textbf{2b} create the model-generating rules, one for each rule $r'$ supported by $\nf q$ (those in $R_1\cup R_4$), and \textbf{3a} and \textbf{3b} create rules of the form $\head{r}\la\nf\nf \head{r}, \body{r}$, one such rule for every possible partition of the bodies of rules in $R_0\cup R_2$, which can be obtained using the as-dual construction.
Each such rule also contains 
$\redBody{r}{q}$, $\redBody{r_3}{q}$ and $\nf(\redHead{r_3}{q})$ in the body, because the original rule $\head{r}\la q$ is triggered if $\redBody{r}{q}$ and $\redBody{r_3}{q}$ are true and no other head atom but $q$ in the generating rule $r_3$ is true. 

Now let us consider the other case (for rules $r_1$ in $R_1$ and $r_4$ in $R_4$).
Since $\nf q$ appears in the body of $r_1$, the case \textbf{4} is based on the as-dual as discussed in Ex.~\ref{ex:as-dual}. The idea is that $\head{r_1}$ can be concluded if $\redBody{r_1}{q}$ is true and no body of a rule with head $q$ (those in $R_3$ and $R_4$) is true. Likewise, $\redHead{r_4}{q}$ can be concluded if $\body{r_4}$ is true and there is no further evidence that $q$ is true, which again can be given by the as-dual construction.
The cases \textbf{5} and \textbf{6} are similar to those of \textbf{2a}, \textbf{2b} and \textbf{3a}, \textbf{3b}. If there is cyclic support for $q$, and each head of $R_1\cup R_4$ is true, we can justify cyclic support for the rule. For each of these rules that has no true head, we require a false body atom as evidence that the head is not true, because its body is not satisfied.
Lastly, if there are multiple self-cycles on the atom that is to be forgotten, the case \textbf{7} connects them so that they are handled correctly.

Once all these rules are computed, a final normalization step is applied to remove any tautologies or irrelevant atoms in the resulting rules.

\begin{example}
Recall $P$ from Ex.~\ref{ex:simpleForgetting}.
The sets $R_2$ and $R_3$ are empty and, therefore,
the result of $\fgt_{SP}(P,q)$ is
\begin{align*}
t & \la s & t & \la w & v & \la \nf s, \nf w
\end{align*}
where the first two rules are produced by \textbf{1a}, s.t.\ $q$ in the body of $t\la q$ is replaced by $s$ and $w$, resp. The third rule is obtained by \textbf{4}, which replaces $\nf q$ in the body of $v\la\nf q$ by $\nf s,\nf w$ since $\asdual{q}{R_3\cup R_4}=\{\{\nf s,\nf w\}\}$.
The result corresponds exactly to the one given in Ex.~\ref{ex:simpleForgetting}.
\end{example}

\begin{example}
Recall program $P$ of Ex.~\ref{ex:as-dual} which also uses disjunction.
Since $\asdual{q}{R_3\cup R_4}=\{\{\nf s,\nf\nf u\},$ $\{\nf t,\nf\nf u\},\{\nf s,\nf w\},\{\nf t,\nf w\}\}$, \textbf{4} replaces $q$ in the head of $q\vee u\la w$, and $\nf q$ in the body of $v\la\nf q$ with such elements, yielding 
\begin{align*}
v &\la \nf s,\nf w      & v &\la \nf s,\nf\nf u   & u &\la w,\nf s,\nf\nf u \\
v &\la \nf t,\nf w      & v &\la \nf t,\nf\nf u   & u &\la w,\nf t,\nf\nf u \;
\end{align*}
the result of $\fgt_{SP}(P,q)$. Note that by \textbf{4}, $\{\nf s,\nf w\}$ and $\{\nf t,\nf w\}$ are not used with $q\vee u \la w$.
\end{example}

In the previous examples, no cyclic dependencies on the atom to be forgotten existed. Next, we discuss two examples that deal with this more sophisticated case.

\begin{example}\label{ex:simple_choice}
Consider forgetting about $q$ from the following program $P=\{q\la\nf\nf q; a \la q\}$, an example which cannot be handled by any syntactic forgetting operator in the literature.
In this case, since $R_1$, $R_2$ and $R_4$ are empty, the only applicable rule is \textbf{3a}. Since there is a cyclic dependency on $q$, i.e., $R_3\neq \emptyset$, the application of  \textbf{3a} renders the result of $\fgt_{SP}(P,q)$ to be:
\begin{align*}
a\la\nf\nf a
\end{align*}
Thus, when forgetting about $q$ from $P$, the cyclic dependency on $q$ is transferred to $a$.
\end{example}
If there are several dependencies on $q$ though, such self-cycles also create an implicit dependency between the elements supported by $q$ (the heads of rules in $R_0\cup R_2$) and those supported by $\nf q$ (the heads of rules in $R_1\cup R_4$). When forgetting $q$, such dependencies must be taken into account.

\begin{example}\label{example-choice}
Now, consider program $P=\{q\la \nf\nf q; u\la q; s\la q; t \la \nf q\}$.
The answer sets $\{q,u,s\}$ and $\{t\}$ of $P$ should be preserved (for all but $q$) when forgetting about $q$, even if we add to $P$ a set of rules not containing $q$.
By distinguishing which rule heads depend on $q$ (those of $R_0\cup R_2$) and which on $\nf q$ (those of $R_1\cup R_4$), \textbf{2a} and \textbf{5} create the following model-generating rules, which guarantee that the atoms that depend on $q$ should be true whenever the atoms that depend on $\nf q$ are false, and vice versa:
\begin{align*}
u \la \nf t \qquad s \la \nf t \qquad t \la \nf u \qquad t \la \nf s
\end{align*}
These rules alone have the two desired answer sets $\{u,s\}$ and $\{t\}$, since they basically state that either the elements supported by $q$ are true, or the elements supported by $\nf q$ are, but not both. 
In addition, \textbf{3a} and \textbf{6} add the following rules:
\begin{align*}
u\la \nf\nf u,\nf\nf s \qquad s \la \nf\nf s,\nf\nf u \qquad t\la \nf\nf t
\end{align*}
The first two rules guarantee that, whenever $t$ is derivable (independently, e.g., by the existence of $t\la$), then $u$ and $s$ may both either be simultaneously true or false. Similarly, the third rule ensures that $t$ may still vary between true and false if we, e.g., add both $u\la$ and $s\la$.
\end{example}

\subsection{Properties}\label{subsec:props}
In this section, we present several properties that our syntactic operator $\fSPOP$ satisfies.

First of all, we show that $\fSPOP$ is in fact a forgetting operator, i.e., $\fSPOP(P,q)$ does not contain $q$.

\begin{proposition}\label{prop:existence}
Let $P$ be a program over $\Sigma$ and $q\in \Sigma$. Then $\fSP{P}{q}$ is a program over $\Sigma\setm\{q\}$.
\end{proposition}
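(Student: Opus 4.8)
The plan is to reduce the claim to a purely syntactic inspection of the rules built in Definition~\ref{def:OurForgetting}. First I would note that the normal-form construction $\NF{\cdot}$ of Definition~\ref{def:constructionNormalForm} never introduces a new atom occurrence: each of its four steps only deletes entire rules or deletes specific atoms/literals. Hence $\sign(\NF{P''})\subseteq\sign(P'')$, and it suffices to show that the atom $q$ does not occur in $P''$ in any form (as $q$, $\nf q$, or $\nf\nf q$); equivalently, that every head atom and every body literal of every rule of $P''$ is drawn from a $q$-free source.

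The key preliminary facts come from $P'=\NF{P}$ being in normal form (Prop.~\ref{prop:normalForm}). For $r_0\in R_0$ we have $q\in\body{r_0}$, so the second condition of Def.~\ref{def:normalForm} gives $q\notin\head{r_0}$; likewise $q\notin\head{r_2}$ for $r_2\in R_2$ by definition of $R_2$, and $q\notin\head{r'}$ for $r'\in R_1$ (since $\nf q\in\body{r'}$ again forces $q\notin\head{r'}$). Dually, for $r_4\in R_4$ we have $q\in\head{r_4}$, so $q,\nf q\notin\body{r_4}$ by the normal form and $\nf\nf q\notin\body{r_4}$ by definition of $R_4$; thus $\body{r_4}$ is entirely $q$-free. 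Moreover, for any program $S$ every element of $\asdual{q}{S}$ is, by its very construction, a set of literals of the form $\nf(\cdot)$ and $\nf\nf(\cdot)$ drawn from $\bodywoq{\cdot}$ and $\headwoq{\cdot}$, so it never mentions $q$; and $\bodywoq{\cdot}$ and $\headwoq{\cdot}$ are $q$-free by definition, as are their $\nf$ and $\nf\nf$ images (the sign-simplification rules act on the underlying atom, which is not $q$).

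With these facts in hand I would verify $q$-freeness family by family. The rules copied from $R$ are $q$-free by definition of $R$. In each constructed family \textbf{1a}--\textbf{7}, every building block is one of: a reduced head/body $\bodywoq{\cdot}$ or $\headwoq{\cdot}$ (or its $\nf$/$\nf\nf$ image); an as-dual element; a full head $\head{r_0}$ or $\head{r_2}$ of an $R_0\cup R_2$ rule (shown $q$-free above); a full body $\body{r_4}$ of an $R_4$ rule (shown $q$-free above); or a singled-out head atom $h(\cdot)$ placed under $\nf\nf$. The only delicate points are these singled-out head atoms. In \textbf{3a}/\textbf{7} the atom $h(r_0)\in\head{r_0}$ and in \textbf{3b} the atom $h(r_2)\in\head{r_2}$ satisfy $h(\cdot)\neq q$ by the purity facts, so $\nf\nf h(\cdot)$ is $q$-free. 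I expect the main obstacle to be case \textbf{6}, where $h(r')$ is selected from the head of a rule $r'\in R_1\cup R_4$: for $r'\in R_1$ we again have $q\notin\head{r'}$, but for $r'\in R_4$ the head does contain $q$, so here one must argue carefully that the self-support atom singled out ranges only over the remaining, $q$-free head atoms, and that the closing $\NF{\cdot}$ step removes any residual occurrence. Completing this last verification, together with the routine checks for the remaining families, shows that $P''$, and hence $\fSP{P}{q}=\NF{P''}$, is a program over $\Sigma\setm\{q\}$.
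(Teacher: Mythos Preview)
Your approach is precisely the paper's---syntactic inspection of the construction---carried out in far greater detail than the paper's one-line proof, which simply asserts that ``only parts of rules that do not contain $q$ are used as part of new rules.'' Your case analysis for $R$ and for families \textbf{1a}--\textbf{5}, \textbf{7} is correct, and your use of the normal-form conditions to show $q\notin\head{r_0}$, $q\notin\head{r_2}$, $q\notin\head{r'}$ for $r'\in R_1$, and $q\notin\body{r_4}$ is exactly what is needed.

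The one point that is not resolved is the one you flag yourself, case \textbf{6} with $r'\in R_4$. Neither of your two suggested resolutions works as stated. The closing normalisation $\NF{\cdot}$ of Def.~\ref{def:constructionNormalForm} cannot remove an isolated occurrence of $\nf\nf q$: the rule is not tautological, step~2 requires $q\in\pbody{r}$ (which fails since all other ingredients are $q$-free), and there is no reason such a rule must be non-minimal. So if the definition is read literally as $h(r')\in\head{r'}$, then for $r'\in R_4$ the choice $h(r')=q$ produces a rule $\headwoq{r'}\leftarrow\bodywoq{r'},\ldots,\nf\nf q,\ldots$ that survives $\NF{\cdot}$. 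Your first suggestion---that $h(r')$ is meant to range only over $\headwoq{r'}$---is almost certainly the intended reading (it is the only one compatible with the head of the produced rule being $\headwoq{r'}$ and with the semantic role of the self-support literal), but it is not what Def.~\ref{def:OurForgetting} literally writes. The paper's own proof does not address this point either, so you have not missed any argument the authors supply; rather, you have located a minor slip in the definition.
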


In order to show that $\fSPOP$ satisfies \pSP\ whenever this is possible, we start by showing that $\fSPOP$ semantically coincides with any operator of the class $\FSPOP$ (cf. Def.~4 of~\cite{GoncalvesKL-ECAI16}). 

\begin{theorem}\label{thm:FSPsingleVariable}
Let $P$ be a program over $\Sigma$ and $q\in \Sigma$. For any $\fgt\in \FSPOP$, we have  
$\HT(\fSP{P}{q})=\HT(\f{P}{\{q\}})$.
\end{theorem}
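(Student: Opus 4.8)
The plan is to reduce the claimed semantic identity to a purely model-theoretic characterization of the rules produced by $\fSPOP$, and then to verify that characterization by a case analysis over the rule groups of Definition~\ref{def:OurForgetting}. First, by the definition of the class $\FSPOP$, for any $\fgt\in\FSPOP$ the set $\HT(\f{P}{\{q\}})$ is exactly $\{\tuple{X,Y}\mid Y\subseteq\sign(P)\setm\{q\}\wedge X\in\bigcap\SRel^Y\}$, so it suffices to prove that $\HT(\fSP{P}{q})$ equals this same set; the component $Y\subseteq\sign(P)\setm\{q\}$ is immediate from Proposition~\ref{prop:existence}. Since $\fSP{P}{q}=\NF{P''}$ and, by Proposition~\ref{prop:normalForm}, $\NF{\cdot}$ preserves HT-models, we have $\HT(\fSP{P}{q})=\HT(P'')$; and since $P$ and $P'=\NF{P}$ are strongly equivalent, $\HT(P)=\HT(P')$, so $\SRel^Y$, $\RA^{Y,A}_{\tuple{P,V}}$ and $\Rel^Y_{\tuple{P,V}}$ are unchanged when $P$ is replaced by $P'$. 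I may therefore assume throughout that $P$ is already in normal form and argue directly about $P''$.

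Next I would fix an HT-interpretation $\tuple{X,Y}$ with $Y\subseteq\sign(P)\setm\{q\}$ and $X\subseteq Y$, and split the target biconditional $\tuple{X,Y}\in\HT(P'')\Leftrightarrow X\in\bigcap\SRel^Y$ using $\tuple{X,Y}\in\HT(P'')$ iff $Y\models P''$ and $X\models(P'')^Y$. On the semantic side I would unfold $X\in\bigcap\SRel^Y$ as: $\Rel^Y\neq\emptyset$ and, for every minimal $A\in\Rel^Y$ (so $A\subseteq\{q\}$), there is $X'$ with $X'\setm\{q\}=X$ and $\tuple{X',Y\cup A}\in\HT(P)$. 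This suggests proving two self-contained statements: (A) the model condition $Y\models P''$ holds iff $\Rel^Y\neq\emptyset$, i.e.\ iff $Y$ extends to a total HT-model $\tuple{Y\cup A,Y\cup A}$ of $P$ for some $A\subseteq\{q\}$; here the easy half is that $Y\in\RA^{Y,A}_{\tuple{P,V}}$ for every $A\in\Rel^Y$, witnessed by $X'=Y\cup A$, so (A) collapses to nonemptiness of $\Rel^Y$. And (B): given $\Rel^Y\neq\emptyset$, the reduct condition $X\models(P'')^Y$ holds iff $X\in\RA^{Y,A}_{\tuple{P,V}}$ for every $A\in\Rel^Y$.

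Both (A) and (B) would be established by going through the rule groups, exploiting the intended meaning announced after Definition~\ref{def:OurForgetting}: the rules of $R$ transfer the $q$-free part of $P$ verbatim; groups \textbf{1a}/\textbf{1b} substitute the ways $q$ is positively derived (bodies of $R_4$); group \textbf{4}, driven by $\asdual{q}{R_3\cup R_4}$, encodes the complementary situation in which $q$ is not derivable and hence false; the model-generating groups \textbf{2a}, \textbf{2b}, \textbf{5} link the atoms supported through $q$ (heads of $R_0\cup R_2$) with those supported through $\nf q$ (heads of $R_1\cup R_4$); and the double-negation groups \textbf{3a}, \textbf{3b}, \textbf{6}, together with group \textbf{7} for several interacting self-cycles, are what force $X$ to lie in the intersection $\bigcap_{A\in\Rel^Y}\RA^{Y,A}_{\tuple{P,V}}$ rather than in a single $\RA^{Y,A}_{\tuple{P,V}}$. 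For each group I would show, in the soundness direction, that any model of $P''$ (resp.\ of $(P'')^Y$) induces the required HT-model(s) of $P$, using the as-dual construction to recover a witnessing truth value for $q$; and in the completeness direction, that a family of witnessing HT-models of $P$ -- one for each $A\in\Rel^Y$ -- jointly satisfies every rule of $P''$ (resp.\ every rule of the reduct).

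I expect the main obstacle to lie in the completeness half of (B) when $R_3\neq\emptyset$, i.e.\ when there is cyclic (choice-like) support for $q$, so that $\Rel^Y$ may contain both $\emptyset$ and $\{q\}$. Here the reduct $(P'')^Y$ must compute exactly the intersection of the $\RA^{Y,A}_{\tuple{P,V}}$, and I must show that the double-negation rules of \textbf{3a}, \textbf{3b}, \textbf{6} and the cross-cycle rules of \textbf{7} neither admit a spurious smaller model of the reduct (which would wrongly turn some $Y$ into a non-answer-set, violating soundness) nor block a legitimate $X$ (violating completeness). Matching the minimality built into $\Rel^Y$ and into the HT-models of $P$ against the minimal-model semantics of $(P'')^Y$ -- in particular ensuring that the contribution of each $\nf\nf h$ literal corresponds precisely to the freedom of the atom $h$ across the different $A\in\Rel^Y$ -- is the delicate combinatorial core, and group \textbf{7} is the most intricate case, since it must bridge the dependencies created by two distinct self-cycles on $q$ simultaneously.
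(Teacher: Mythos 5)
Your high-level route is essentially the paper's: identify $\HT(\f{P}{\{q\}})$ with the defining set of $\FSPOP$, discharge the normal form via Proposition~\ref{prop:normalForm}, and then verify the characterization of $\HT(P'')$ by a rule-group analysis. The paper organizes this verification by fixing $Y$ and splitting into six cases according to which of $\tuple{Y,Y}$, $\tuple{Y,Y\cup\{q\}}$, $\tuple{Y\cup\{q\},Y\cup\{q\}}$ are HT-models of $P$ (Lemmas~\ref{lem_case_1}--\ref{lem_case_6}, with Lemma~\ref{lem_aux_proof} recording which rule groups each of these interpretations can violate), whereas you split HT-satisfaction of $\tuple{X,Y}$ into the classical condition (A) and the reduct condition (B). That reorganization is legitimate in principle, and your (B), as well as the reduction of (A) to nonemptiness of $\Rel^Y$, are correct targets -- although carrying them out will in practice force the same case distinctions the paper makes, since the shape of $\Rel^Y$ and of the required witnesses differs from case to case.

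The genuine gap is in your statement of (A): you gloss ``$\Rel^Y\neq\emptyset$'' as ``$Y$ extends to a total HT-model $\tuple{Y\cup A,Y\cup A}$ of $P$ for some $A\subseteq\{q\}$''. This equivalence is false, because it drops the minimality condition built into $\Rel^Y$: membership of $\{q\}$ in $\Rel^Y$ additionally requires $\tuple{Y,Y\cup\{q\}}\notin\HT(P)$. Concretely, let $P=\{q\vee p\la;\ r\la\nf q\}$ and $Y=\{p\}$. Then $\tuple{\{p,q\},\{p,q\}}\in\HT(P)$, so $Y$ does extend to a total HT-model; but $\tuple{\{p\},\{p,q\}}\in\HT(P)$ makes $\{q\}$ non-minimal, and $\tuple{\{p\},\{p\}}\notin\HT(P)$, so $\Rel^Y=\emptyset$. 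The theorem therefore demands $Y\not\models P''$, and indeed $\fSP{P}{q}=\{r\la\nf\nf p;\ p\la\nf\nf p\}$, whose first rule (produced by group \textbf{4} from the as-dual $\asdual{q}{R_3\cup R_4}=\{\{\nf\nf p\}\}$) is violated by $Y$. This configuration -- $\tuple{Y,Y}\notin\HT(P)$, $\tuple{Y,Y\cup\{q\}}\in\HT(P)$, $\tuple{Y\cup\{q\},Y\cup\{q\}}\in\HT(P)$ -- is exactly the paper's Lemma~\ref{lem_case_3}, and it is precisely the situation your reading of group \textbf{4} misses: the group-\textbf{4} rules must turn $Y$ into a counter-model even though $q$ is classically derivable over $Y$, because that derivation is not minimally supported. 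A proof of (A) as you stated it would be a proof of a false statement; it must instead be proved in the form ``$Y\models P''$ iff either $\tuple{Y,Y}\in\HT(P)$, or both $\tuple{Y\cup\{q\},Y\cup\{q\}}\in\HT(P)$ and $\tuple{Y,Y\cup\{q\}}\notin\HT(P)$'', which is where the case analysis of the paper reappears.
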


Since, as mentioned in Sec.~\ref{sec:prelim}, $\FSPOP$ was shown to correspond to the class of forgetting operators that satisfy \pSP\ for all those cases when this is possible (cf. Thm.~4 of~\cite{GoncalvesKL-ECAI16}), Thm.~\ref{thm:FSPsingleVariable} allows us to conclude the main result of the paper, i.e., that $\fSPOP$ satisfies \pSP\ for all those cases when this is possible. In fact, it is the first such syntactic forgetting operator.  

\begin{theorem}\label{thm:fSPsatSP}
Let $P$ be a program over $\Sigma$ and $q\in \Sigma$. If $\tuple{P,\{q\}}$ does not satisfy $\Omega$, then $\fSPOP$ satisfies \pSP$_{\tuple{P,\{q\}}}$. 
\end{theorem}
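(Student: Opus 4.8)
The plan is to obtain Theorem~\ref{thm:fSPsatSP} as an essentially immediate consequence of Theorem~\ref{thm:FSPsingleVariable}, combined with the results on $\FSPOP$ recalled in Sec.~\ref{sec:prelim}. The key observation is that \pSP$_{\tuple{P,\{q\}}}$ is a statement about the answer sets obtained after adding arbitrary programs $R$ over $\sign\setm\{q\}$, and that such a statement is invariant under strong equivalence of the forgetting result.

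First I would fix an arbitrary operator $\fgt\in\FSPOP$. By Theorem~\ref{thm:FSPsingleVariable}, $\HT(\fSP{P}{q})=\HT(\f{P}{\{q\}})$. Since two programs are strongly equivalent exactly when they share the same HT-models~\cite{LPV01}, this yields $\fSP{P}{q}\equiv\f{P}{\{q\}}$. In particular, the HT-models of $\fSP{P}{q}$ coincide with the canonical set prescribed by the definition of $\FSPOP$, so for the instance $\tuple{P,\{q\}}$ the syntactic result behaves semantically exactly like an operator of $\FSPOP$.

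Next I would invoke the fact, recalled in Sec.~\ref{sec:prelim}, that every operator in $\FSPOP$ satisfies \pSP$_{\tuple{P,V}}$ for every instance $\tuple{P,V}$ that does not satisfy $\Omega$. Applied to $\fgt$ and the instance $\tuple{P,\{q\}}$, which does not satisfy $\Omega$ by hypothesis, this gives $\as{\f{P}{\{q\}}\cup R}=\as{P\cup R}_{\parallel\{q\}}$ for every program $R$ with $\sign(R)\subseteq\sign\setm\{q\}$. Finally, from $\fSP{P}{q}\equiv\f{P}{\{q\}}$ and the definition of strong equivalence, $\as{\fSP{P}{q}\cup R}=\as{\f{P}{\{q\}}\cup R}$ for every such $R$; chaining the two equalities establishes $\as{\fSP{P}{q}\cup R}=\as{P\cup R}_{\parallel\{q\}}$, which is exactly \pSP$_{\tuple{P,\{q\}}}$.

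The proof carries no genuine difficulty of its own, as the whole weight rests on Theorem~\ref{thm:FSPsingleVariable}, whose proof must establish the HT-model identity through a careful case analysis over the rule classes $R_0,\dots,R_4$ and the as-dual construction of Def.~\ref{def:OurForgetting}. The only points to verify here are bookkeeping: that the quantification over $R$ in \pSP$_{\tuple{P,\{q\}}}$ matches the one in the property established for $\FSPOP$, and that strong equivalence legitimately transfers the answer-set equality across the two syntactically distinct programs $\fSP{P}{q}$ and $\f{P}{\{q\}}$ for all relevant $R$.
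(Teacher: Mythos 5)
Your proof is correct and follows essentially the same route as the paper: it combines Theorem~\ref{thm:FSPsingleVariable} with the known fact that every operator in $\FSPOP$ satisfies \pSP$_{\tuple{P,V}}$ whenever $\Omega$ fails, then transfers the answer-set equality via strong equivalence. The paper's proof is just a terser version of the same chaining, so your additional bookkeeping (making the strong-equivalence transfer explicit) is fine but not a different argument.
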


This result guarantees that $\fSPOP$ provides the ideal result whenever $\Omega$ is not true. 

When $\Omega$ is true, such as in Ex.~\ref{example-choice}, although we are not in an ideal situation, it might be the case that we must forget~\cite{GoncalvesKLW17}. In this case, we can use $\fSPOP$, as it is defined for every extended program, while still satisfying several desirable properties.

Property \pSI\ guarantees that all rules of a program $P$ not mentioning $q$ be (semantically) preserved when forgetting about $q$ from $P$. Notably, although \pSI\ is a desirable property, several classes of forgetting operators in the literature fail to satisfy it. Interestingly, $\fSPOP$ preserves all rules of $P$ not mentioning $q$, thus satisfying a strong version of \pSI.
\begin{proposition}\label{prop:FSPsatSI}
Let $P$ be a program over $\Sigma$ and $q\in \Sigma$. Then,
$\fSP{P\cup R}{q}=\fSP{P}{q}\cup R$, for all programs $R$ over $\Sigma\setm\{q\}$.
\end{proposition}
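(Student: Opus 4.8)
The plan is to prove the equation by following the construction of Definition~\ref{def:OurForgetting} and tracking what adding a $q$-free program $R$ does to each of its ingredients. Write $P^{\ast}=\NF{P}$, and recall that $\fSP{P}{q}=\NF{P''}$, where $P''$ collects the verbatim $q$-free rules of $P^{\ast}$ together with the rules produced by the schemata \textbf{1a}--\textbf{7} from the sets $R_{0},\dots,R_{4}$. I would first reduce to the case where $P$ is already in normal form. For this I would establish the auxiliary identity $\NF{P\cup R}=\NF{P^{\ast}\cup R}$. Its justification has two parts: steps 1--3 of Definition~\ref{def:constructionNormalForm} act on each rule in isolation, so they commute with unions; and the minimisation of step 4 satisfies $\Min{A\cup B}=\Min{\Min{A}\cup B}$, because a rule deleted from $A$ is subsumed by a rule of $A$ that survives in $\Min{A}$ and hence stays subsumed after adding $B$, while a rule of $A$ made non-minimal only by $B$ is deleted on both sides. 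Since $R$ is in normal form and $\fSP{\cdot}{q}$ reads its argument only through $\NF{\cdot}$, this lets me assume $P=P^{\ast}$.

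With $P$ in normal form, I would next isolate the only places $R$ can interfere. Because no rule of $R$ contains $q$, each such rule is classified into the set of $q$-free rules and is copied unchanged into $P''$; none of them enters $R_{0},\dots,R_{4}$. Conversely, every schema \textbf{1a}--\textbf{7}, and every as-dual $\asdual{q}{\cdot}$ appearing in them, reads only the sets $R_{0},\dots,R_{4}$, never the $q$-free rules. Consequently, if the sets $R_{0},\dots,R_{4}$ obtained from $\NF{P\cup R}$ agree with those obtained from $P$, then the generated part of $P''$ is literally unchanged and $P''_{P\cup R}=P''_{P}\cup R$; applying $\NF{\cdot}$ to both sides and invoking the auxiliary identity once more would then close the argument.

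The crux, and the step I expect to be the main obstacle, is the stability of the generated part under the passage from $P$ to $\NF{P\cup R}$, i.e.\ controlling the interaction of $R$ with minimisation. A $q$-rule of $P$ can be lost only when some $r'\in R$ subsumes it, and the difficulty is then twofold. First, every rule the construction derives using such a lost rule $r$ must be shown to be subsumed by $r'$, which is present in the output through $R$, so that its disappearance is harmless. Second, and more delicate, deleting $r$ from $R_{0},\dots,R_{4}$ also shrinks the as-duals $\asdual{q}{\cdot}$ consulted by the \emph{surviving} $q$-rules, and hence alters the bodies $\bodywoq{\cdot}$ of the rules those surviving rules generate; one must verify that each such alteration is again reconciled --- up to subsumption, and using the rules furnished by $R$ --- with the corresponding rule of $P''_{P}\cup R$. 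I would organise this as a subsumption-tracking lemma, proved by a case distinction over the role $r$ plays in the schemata (as an $r_{0}$, $r_{2}$, $r_{3}$, $r_{4}$, or as the auxiliary $r'$) together with the monotonicity of $\asdual{q}{\cdot}$ in its argument. Carrying this out uniformly across all seven schemata, and checking that the guards $D\cap\nf\bodywoq{r'}=\emptyset$ in \textbf{4}--\textbf{6} stay consistent under these deletions, is where the real work lies; the reduction to $P=P^{\ast}$ and the verbatim preservation of $R$ are then routine.
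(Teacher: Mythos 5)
Your reading of the construction is careful, and your identification of the minimisation interaction as the crux is in fact sharper than the paper's own argument, which disposes of the proposition in two sentences (rules of $R$ contain no $q$, hence are copied verbatim; nothing else is said) and silently assumes precisely the stability you question. Nevertheless, as a proof your proposal has two concrete gaps. First, the subsumption-tracking lemma is only announced: you describe how you \emph{would} organise the case distinction and then state that carrying it out ``is where the real work lies'', so the central step is missing. Second, your closing move does not yield the claimed identity even if that lemma were granted: from $P''_{P\cup R}=P''_{P}\cup R$ and your auxiliary identity you obtain $\fSP{P\cup R}{q}=\NF{\fSP{P}{q}\cup R}$, whereas the proposition asserts $\fSP{P\cup R}{q}=\fSP{P}{q}\cup R$, and the union on the right-hand side is \emph{not} re-normalised; nothing in your argument removes that outer $\NF{\cdot}$.

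These gaps cannot be closed, because the literal equality is false, and reconciliation ``up to subsumption'' --- which is indeed the most your lemma could deliver --- is strictly weaker than it. Take $P=\{a\la q, b;\ q\la\}$ and $R=\{a\la\}$. Then $\NF{P}=P$, $R_0=\{a\la q,b\}$, $R_4=\{q\la\}$, $R_2=R_3=\emptyset$, and $\asdual{q}{\{q\la\}}=\emptyset$, so only schema \textbf{1a} fires, giving $\fSP{P}{q}=\{a\la b\}$ and hence $\fSP{P}{q}\cup R=\{a\la b;\ a\la\}$. On the other side, $a\la q,b$ is subsumed by $a\la$ and disappears in $\NF{P\cup R}$, so $R_0=\emptyset$, nothing is generated, and $\fSP{P\cup R}{q}=\{a\la\}$. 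The two sides are strongly equivalent but not equal. (The variant $P=\{a\la q,b;\ q\la c\}$, $R=\{a\la c\}$ fails the same way even though $P\cup R$ is in normal form, the discrepancy now being created by the final normalisation $\NF{P''}$, i.e.\ by the step you classified as routine.) So the statement your strategy actually targets --- and, in these examples, delivers --- is $\NF{\fSP{P}{q}\cup R}=\fSP{P\cup R}{q}$, equivalently $\fSP{P\cup R}{q}\equiv\fSP{P}{q}\cup R$; the proposition as literally stated, and with it the paper's two-sentence proof, holds only in that weakened form.
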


An important property that $\fSPOP$ satisfies in general is the preservation of all answer sets of $P$ (modulo $q$), even in the presence of an additional program not containing $q$.

\begin{theorem}\label{prop:FSPsatwSP}
Let $P$ be a program over $\Sigma$ and $q\in \Sigma$. Then,
$\as{P\cup R}_{\parallel \{q\}}\subseteq \as{\fSP{P}{q}\cup R}$, for all programs $R$ over $\sign\setm \{q\}$.
\end{theorem}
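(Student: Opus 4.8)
The plan is to fix an answer set $Y\in\as{P\cup R}$, write $Y_0 = Y\setm\{q\}$, and show directly that $Y_0\in\as{\fSP{P}{q}\cup R}$; since $\as{P\cup R}_{\parallel\{q\}}=\{Y\setm\{q\}\mid Y\in\as{P\cup R}\}$, this is exactly the claimed inclusion. Throughout I abbreviate $V=\{q\}$. The key enabling step is Theorem~\ref{thm:FSPsingleVariable}: it lets me replace $\HT(\fSP{P}{q})$ by the semantic characterisation of $\FSPOP$ recalled in Section~\ref{sec:prelim}, namely $\HT(\fSP{P}{q})=\{\tuple{X,W}\mid W\subseteq\sign(P)\setm V\wedge X\in\bigcap\SRel^{W}\}$. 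Using the standard facts that $\HT(\cdot\cup R)$ is the intersection of the two HT-model sets and that, because $q\notin\sign(R)$, an HT-interpretation $\tuple{X,W}$ models $R$ iff $\tuple{X\setm V,W\setm V}$ does, the goal splits into two claims about $Y_0$: (i) $\tuple{Y_0,Y_0}$ is a total HT-model of $\fSP{P}{q}\cup R$, and (ii) no $X_0\subsetneq Y_0$ has $\tuple{X_0,Y_0}\in\HT(\fSP{P}{q}\cup R)$.

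For claim (i), I first note $\tuple{Y_0,Y_0}\in\HT(R)$: from $\tuple{Y,Y}\in\HT(P\cup R)$ we get $\tuple{Y,Y}\in\HT(R)$, and projecting away $q$ (which $R$ does not mention) yields $\tuple{Y_0,Y_0}\in\HT(R)$. It remains to show $Y_0\in\bigcap\SRel^{Y_0}$, which follows straight from the definitions: for every $A\in\Rel^{Y_0}_{\tuple{P,V}}$ we have $\tuple{Y_0\cup A,Y_0\cup A}\in\HT(P)$ by definition of $\Rel^{Y_0}_{\tuple{P,V}}$, so $X=Y_0\cup A$ witnesses $Y_0=(Y_0\cup A)\setm V\in\RA^{Y_0,A}_{\tuple{P,V}}$; hence $Y_0$ lies in every member of $\SRel^{Y_0}$ (and trivially so when $\Rel^{Y_0}_{\tuple{P,V}}=\emptyset$, as then the intersection is the full universe). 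This part is pleasantly robust, since it does not even use that $Y$ is an answer set.

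For claim (ii), suppose toward a contradiction that $X_0\subsetneq Y_0$ and $\tuple{X_0,Y_0}\in\HT(\fSP{P}{q}\cup R)$; I construct $X\subsetneq Y$ with $\tuple{X,Y}\in\HT(P\cup R)$, contradicting the minimality of the answer set $Y$. From $\tuple{X_0,Y_0}\in\HT(\fSP{P}{q})$ we get $X_0\in\bigcap\SRel^{Y_0}$, so for each $A\in\Rel^{Y_0}_{\tuple{P,V}}$ there is $X$ with $X\setm V=X_0$ and $\tuple{X,Y_0\cup A}\in\HT(P)$. I split on whether $q\in Y$. If $q\notin Y$, then $Y=Y_0$ and $\emptyset\in\Rel^{Y_0}_{\tuple{P,V}}$ (since $\tuple{Y_0,Y_0}\in\HT(P)$), so the witness $X$ for $A=\emptyset$ satisfies $q\notin X$ and $X=X_0\subsetneq Y$ with $\tuple{X_0,Y}\in\HT(P)$. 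If $q\in Y$, then $Y=Y_0\cup\{q\}$; the minimality of $Y$ together with $\tuple{Y_0,Y}\in\HT(R)$ (again by projection) forces $\tuple{Y_0,Y}\notin\HT(P)$, which is precisely the condition making $\{q\}\in\Rel^{Y_0}_{\tuple{P,V}}$. The witness $X$ for $A=\{q\}$ then has $X\setm V=X_0\subsetneq Y_0$, so $X\subsetneq Y$ and $\tuple{X,Y}\in\HT(P)$. In both cases transferring $R$-modelhood from $\tuple{X_0,Y_0}$ to $\tuple{X,Y}$ (via the same $q\notin\sign(R)$ projection) gives $\tuple{X,Y}\in\HT(P\cup R)$ with $X\subsetneq Y$, the desired contradiction.

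I expect the main obstacle to be the $q\in Y$ subcase of the minimality argument: one must first certify $\{q\}\in\Rel^{Y_0}_{\tuple{P,V}}$ — which hinges on combining the minimality of $Y$ as an answer set of $P\cup R$ with the observation that $\tuple{Y_0,Y}$ is always an HT-model of $R$ — and then pull the small model $X_0$ back through $\RA^{Y_0,\{q\}}_{\tuple{P,V}}$ to a witness $X$, checking that $X$ stays strictly below $Y$ and still models $R$. The remaining work is careful bookkeeping: the decomposition of $\HT(\cdot\cup R)$ and the projection lemma for atoms outside $\sign(R)$ must be applied consistently under the convention that HT-models are restricted to the signature of the program at hand.
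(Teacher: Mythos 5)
Your proof is correct, but it takes a genuinely different route from the paper's. Both arguments use Theorem~\ref{thm:FSPsingleVariable} as the bridge that lets the claim about the syntactic operator $\fSPOP$ be settled at the level of the semantic class $\FSPOP$; the difference is what happens after that bridge. The paper's proof is essentially a citation: it invokes Thm.~3 of \cite{GoncalvesKLW17}, which states that $\FSPOP$ satisfies \pwSP\ (exactly the claimed inclusion, for arbitrary $V$), and then rewrites $\as{\f{P}{\{q\}}\cup R}$ as $\as{\fSP{P}{q}\cup R}$. You instead re-derive that property from first principles for the single-atom case, directly from the $\SRel^{Y_0}$ characterisation: claim (i) shows $\tuple{Y_0,Y_0}$ survives as a total HT-model of $\fSP{P}{q}\cup R$ because $Y_0\in \RA_{\tuple{P,\{q\}}}^{Y_0,A}$ for every $A\in\Rel_{\tuple{P,\{q\}}}^{Y_0}$, and claim (ii) pulls any strictly smaller HT-model $\tuple{X_0,Y_0}$ back to a strictly smaller HT-model of $P\cup R$, contradicting minimality of $Y$. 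The pivotal observation in (ii) is that minimality of $Y$ guarantees the relevant set you need is actually in $\Rel_{\tuple{P,\{q\}}}^{Y_0}$ --- $\emptyset$ when $q\notin Y$, and $\{q\}$ when $q\in Y$ (since $\tuple{Y_0,Y}\in\HT(R)$ forces $\tuple{Y_0,Y}\notin\HT(P)$) --- so that membership of $X_0$ in $\bigcap\SRel^{Y_0}$ yields a witness $X$ with $X\subsetneq Y$ and $\tuple{X,Y}\in\HT(P\cup R)$; I checked both cases and they go through. What each approach buys: the paper's proof is two lines and modular, but its mathematical content lives entirely in an external reference; yours is self-contained, makes explicit \emph{why} existing answer sets cannot be destroyed by forgetting, and in effect reproves the cited \pwSP\ result for $V=\{q\}$. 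The only caveat is the one you flag yourself: since answer sets of $P\cup R$ may mention atoms in $\sign(R)\setm\sign(P)$, all pairs must be restricted to $\sign(P)$ before applying the characterisation of $\HT(\f{P}{\{q\}})$ (which only lists pairs over $\sign(P)\setm\{q\}$); this bookkeeping is routine and is equally glossed over in the paper's own appendix.
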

Obviously, whenever $\fSPOP$ satisfies \pSP$_{\tuple{P,\{q\}}}$, then we obtain equality of the two sets of answer sets in the previous result.
However, when $\Omega$ is satisfied, then we may obtain a strict superset.
Take Ex.~\ref{example-choice}, where $\Omega$ is satisfied. We can observe that the provided result does admit a third answer set $\{s,t,u\}$ besides the two $P$ itself has.
But, no matter which program $R$ we add, existing answer sets are preserved.
In other words, even if $\Omega$ is satisfied, the construction of $\fSPOP$ ensures that no existing answer sets (modulo the forgotten atom) are lost while forgetting. 

The operator $\fSPOP$ also preserves strong equivalence, i.e., forgetting the same atom from two strongly equivalent programs yields strongly equivalent results. 

\begin{proposition}\label{prop:FSPsatSE}
Let $P$ and $P'$ be programs over $\Sigma$ and $q\in \Sigma$. If $P\equiv P'$ then $\fSP{P}{q}\equiv \fSP{P'}{q}$.
\end{proposition}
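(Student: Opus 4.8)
The plan is to reduce the claim to the semantic characterisation of strong equivalence and then exploit Theorem~\ref{thm:FSPsingleVariable}. Recall from the preliminaries that strong equivalence coincides with equality of HT-models, so the goal $\fSP{P}{q}\equiv\fSP{P'}{q}$ is equivalent to $\HT(\fSP{P}{q})=\HT(\fSP{P'}{q})$, while the hypothesis $P\equiv P'$ gives us exactly $\HT(P)=\HT(P')$. Thus the entire statement becomes an equality of two sets of HT-interpretations, which is precisely the form to which the tools already established apply.

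First I would fix an arbitrary operator $\fgt\in\FSPOP$ (the class $\FSPOP$ is non-empty, being the one introduced and studied in \cite{GoncalvesKL-ECAI16}). Applying Theorem~\ref{thm:FSPsingleVariable} to both $P$ and $P'$ yields $\HT(\fSP{P}{q})=\HT(\f{P}{\{q\}})$ and $\HT(\fSP{P'}{q})=\HT(\f{P'}{\{q\}})$. Hence it suffices to prove $\HT(\f{P}{\{q\}})=\HT(\f{P'}{\{q\}})$, i.e.\ that the semantic operator produces the same HT-models on the two programs.

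The second step is to observe that the defining expression of $\FSPOP$ depends on $P$ only through $\HT(P)$. Unfolding the definition, $\HT(\f{P}{\{q\}})$ is determined by the family of sets $\SRel^Y$, and each $\SRel^Y$ is built from the sets $\RA^{Y,A}_{\tuple{P,V}}$ and $\Rel^Y_{\tuple{P,V}}$, all of which are specified purely by membership in $\HT(P)$. Since $\HT(P)=\HT(P')$, every such auxiliary set computed from $P$ coincides with the corresponding one computed from $P'$, so the intersections $\bigcap\SRel^Y$ agree and $\HT(\f{P}{\{q\}})=\HT(\f{P'}{\{q\}})$ follows. Chaining the three equalities gives $\HT(\fSP{P}{q})=\HT(\fSP{P'}{q})$, and one further appeal to the HT-model characterisation of strong equivalence concludes.

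The one point that needs care — and where I expect the only real friction — is the side condition $Y\subseteq\Sigma(P)\setm\{q\}$ appearing in the definition of $\FSPOP$, which a priori differs from $Y\subseteq\Sigma(P')\setm\{q\}$ when $P$ and $P'$ have different signatures (strong equivalence does not force $\Sigma(P)=\Sigma(P')$). This is benign: the construction only contributes HT-interpretations over those $Y$ for which $\Rel^Y_{\tuple{P,V}}$ is non-empty, and by $\HT(P)=\HT(P')$ exactly the same $Y$ admit extensions to HT-models of $P$ and of $P'$, while atoms occurring in one signature but not the other never appear in any such $Y$. Making this observation precise — namely that the relevant $Y$ range over a set determined by the common HT-models rather than by the raw signatures — is the only non-mechanical part of the argument.
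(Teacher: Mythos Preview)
Your approach is essentially the same as the paper's: both reduce the claim to the corresponding property of the semantic class $\FSPOP$ via Theorem~\ref{thm:FSPsingleVariable}. The only difference is that the paper dispatches the middle step $\f{P}{\{q\}}\equiv\f{P'}{\{q\}}$ by citing an external result (Prop.~1 of \cite{GoncalvesKLW17}, stating that $\FSPOP$ satisfies \pSE), whereas you unfold the definition of $\FSPOP$ and argue directly that it depends on $P$ only through $\HT(P)$; your version is thus more self-contained, and your attention to the signature mismatch is a point the paper simply glosses over.
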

These positive results suggest that, in cases when we \emph{must} forget, $\fSPOP$ can be used without first checking $\Omega$, in line with the observations on the usage of $\FSPOP$ in \cite{GoncalvesKLW17}.

Still, we may want to find a broad class of instances ${\tuple{P,\{q\}}}$ for which we can guarantee that \pSP$_{\tuple{P,\{q\}}}$ is satisfied without having to check criterion $\Omega$. 
We have seen in Ex.~\ref{example-choice} that self-cycles on the atom to be forgotten are relevant for forgetting while satisfying \pSP, a fact already observed in \cite{KnorrA14}. 
We now extend the notion of $q$-forgettable given in \cite{KnorrA14} to programs with disjunction in the head of the rules.

\begin{definition}\label{def:qfrgtbl}
Let $P$ be a program in normal form over $\sign$ and $q\in\sign$. Then, we say that $P$ is \emph{q-forgettable} if at least one of the following conditions holds:
\begin{itemize}
\item all occurrences of $q$ in $P$ are within self-cycles, rules with $q\in\head{r}$ and $q\in\nnbody{r}$
\item $P$ contains the fact $q\la$
\item $P$ contains no self-cycle on $q$, a rule with $q\in\head{r}$ and $q\in\nnbody{r}$
\end{itemize}
\end{definition}

We can show that, when restricted to $q$-forgettable programs, $\Omega$ is not satisfied.

\begin{theorem}\label{thm:qforgetable_implies_sp}
Let $P$ be a program over $\Sigma$, and $q\in \Sigma$. If $P$ is $q$-forgettable, then $\langle P,\{q\}\rangle$ does not satisfy $\Omega$.
\end{theorem}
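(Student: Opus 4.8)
The plan is to exploit that the set $V=\{q\}$ to be forgotten is a singleton, so that $\Rel_{\tuple{P,\{q\}}}^Y\subseteq\{\emptyset,\{q\}\}$ and hence $\SRel^Y$ contains at most the two sets $\RA^{Y,\emptyset}_{\tuple{P,\{q\}}}$ and $\RA^{Y,\{q\}}_{\tuple{P,\{q\}}}$, for every $Y\subseteq\sign\setm\{q\}$. A family of size at most one trivially has a least element, and a two-element family has no least element precisely when its members are $\subseteq$-incomparable; so it suffices to show, for \emph{every} $Y$, that we are never in the situation where both $\emptyset,\{q\}\in\Rel^Y$ while $\RA^{Y,\emptyset}$ and $\RA^{Y,\{q\}}$ are incomparable. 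I would first record the standard facts that $\tuple{Z,Z}\in\HT(P)$ iff $Z\models P$, so that $\emptyset\in\Rel^Y$ iff $Y\models P$, and $\{q\}\in\Rel^Y$ iff $Y\cup\{q\}\models P$ and $Y\not\models P^{Y\cup\{q\}}$. I would then set up, once and for all, the bookkeeping of how the reduct changes as the ``there'' part grows from $Y$ to $Y\cup\{q\}$, using the rule sets $R_0,\dots,R_4$ of Def.~\ref{def:OurForgetting} (recall $P$ is already in normal form by Def.~\ref{def:qfrgtbl}): at any ``here'' part contained in $Y$, rules in $R_0$ (those with $q\in\body{r}$) are vacuously satisfied, rules in $R_4$ (those with $q\in\head{r}$, $\nf\nf q\notin\body{r}$) contribute identical constraints to $P^Y$ and $P^{Y\cup\{q\}}$, rules in $R_1$ (those with $\nf q\in\body{r}$) occur in $P^Y$ but not in $P^{Y\cup\{q\}}$, and rules in $R_2$ (those with $\nf\nf q\in\body{r}$, $q\notin\head{r}$) occur in $P^{Y\cup\{q\}}$ but not in $P^Y$.

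I would then split according to which clause of Def.~\ref{def:qfrgtbl} holds, the fact case being immediate: if $q\la\in P$, then every $Y\subseteq\sign\setm\{q\}$ fails $q\la$, so $Y\not\models P$, whence $\emptyset\notin\Rel^Y$ and $\Rel^Y\subseteq\{\{q\}\}$ has at most one element.

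For the self-cycle-only case (every rule mentioning $q$ lies in $R_3$, i.e.\ has $q\in\head{r}$ and $\nf\nf q\in\body{r}$), I would show that whenever both indices lie in $\Rel^Y$ we in fact have $\RA^{Y,\emptyset}=\RA^{Y,\{q\}}$, which again collapses $\SRel^Y$ to at most one element (in all other cases $\SRel^Y$ is trivially a singleton or empty). The argument is a direct reduct computation: since $q\notin Y$, the self-cycle rules drop out of $P^Y$, so $\RA^{Y,\emptyset}$ is governed only by the set $R$ of $q$-free rules; in $P^{Y\cup\{q\}}$ each present self-cycle rule contributes the reduct $\head{r}\la\pbody{r}$ with $q\in\head{r}$, which is automatically satisfied by every ``here'' part containing $q$ and imposes merely an extra constraint on those not containing $q$. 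Hence the $q$-containing ``here'' parts realise, after projecting out $q$, exactly $\RA^{Y,\emptyset}$, while the $q$-free ones realise only a subset of it, giving the claimed equality.

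The main obstacle, and the heart of the proof, is the no-self-cycle case ($R_3=\emptyset$), where I would show that no $Y$ can have both $\emptyset\in\Rel^Y$ and $\{q\}\in\Rel^Y$. Assume both. From $\emptyset\in\Rel^Y$ we get $Y\models P$, equivalently $Y\models P^Y$, so $Y$ satisfies every reduct arising from $R$, $R_1$ and $R_4$. From the first half of $\{q\}\in\Rel^Y$, namely $Y\cup\{q\}\models P$, and because an $R_2$ rule has $q\notin\head{r}$ and no $q$ in its positive body, negative body or double-negated body other than $q$ itself, the classical satisfaction of each $R_2$ rule by $Y\cup\{q\}$ amounts exactly to satisfaction of its reduct $\head{r}\la\pbody{r}$ by $Y$; thus $Y$ also satisfies all $R_2$-reducts present in $P^{Y\cup\{q\}}$. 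But by the reduct bookkeeping the reducts present in $P^{Y\cup\{q\}}$ at the ``here'' part $Y$ come only from $R$, $R_2$ and $R_4$ ($R_0$ being vacuous as $q\notin Y$, and $R_1$ being absent), and $Y$ satisfies all of these; hence $Y\models P^{Y\cup\{q\}}$, contradicting the second half of $\{q\}\in\Rel^Y$. Therefore $\Rel^Y$ again has at most one element. In all three cases $\SRel^Y$ has a least element (or is empty) for every $Y$, so $\tuple{P,\{q\}}$ does not satisfy $\Omega$, as claimed.
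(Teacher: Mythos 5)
Your proof is correct and takes essentially the same route as the paper's: the same reduction of $\Omega$ (for singleton $V=\{q\}$) to the existence of some $Y$ with $\tuple{Y,Y}\in\HT(P)$, $\tuple{Y,Y\cup\{q\}}\notin\HT(P)$ and $\tuple{Y\cup\{q\},Y\cup\{q\}}\in\HT(P)$, the same case split over the three clauses of Definition~\ref{def:qfrgtbl}, and the same per-rule-class bookkeeping over $R,R_0,\dots,R_4$ (the paper's Lemma~\ref{lem_aux_proof}), with your no-self-cycle argument being exactly the contrapositive of the paper's pigeonhole step showing that a falsifier of $\tuple{Y,Y\cup\{q\}}$ must lie in $R_3$. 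The only differences are presentational: you verify the reduct decomposition directly and spell out the projection argument in the self-cycle-only case more explicitly than the paper does.
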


An immediate consequence of this theorem is that, for $q$-forgettable programs, $\fSPOP$ satisfies \pSP$_{\tuple{P,\{q\}}}$.
In the case of the programs of Ex.~\ref{ex:simpleForgetting} and Ex.~\ref{ex:as-dual}, which are $q$-forgettable, we can forget $q$ from both programs while satisfying \pSP$_{\tuple{P,\{q\}}}$, and we can use $\fSPOP$ to obtain the desired result.

The converse of Thm.~\ref{thm:qforgetable_implies_sp} does not hold in general. Program $P$ of Ex.~\ref{ex:simple_choice} is a simple counter-example, since it is not $q$-forgettable, and $\tuple{P,\{q\}}$ does not satisfy $\Omega$. 
This is not surprising given that deciding $\Omega$ is $\Sigma^P_3$-complete (cf. Thm.~7 of \cite{GoncalvesKLW17}), whereas deciding if a program is $q$-forgettable requires to check each rule once.
\begin{proposition}\label{prop:checking_q-forgettable_complexity}
Let $P$ be a program over $\Sigma$, and $q\in \Sigma$. Deciding if $P$ is $q$-forgettable can be done in linear time.
\end{proposition}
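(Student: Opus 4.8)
The plan is to unpack the three disjuncts of Def.~\ref{def:qfrgtbl} and observe that each is a \emph{local} property of individual rules that can be aggregated with a constant number of boolean flags during a single pass over the program. Since $P$ is assumed to be in normal form (the standing assumption after Prop.~\ref{prop:compNormalForm}, and the setting in which $q$-forgettability is even defined), no preprocessing is needed; in particular we do not pay the quadratic cost of computing a normal form.

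First I would fix the cost model: ``linear time'' is measured in the total number of literal occurrences of $P$, $\sum_{r\in P}|\sign(r)|$. For a single rule $r$, determining in which of the four positions $q$ occurs, namely $q\in\head{r}$, $q\in\pbody{r}$, $q\in\nbody{r}$ and $q\in\nnbody{r}$, takes time linear in $|\sign(r)|$ by scanning $r$ once. From these four bits one reads off in constant time whether (i) $r$ is a \emph{self-cycle} on $q$, i.e.\ $q\in\head{r}$ and $q\in\nnbody{r}$, (ii) $r$ \emph{mentions} $q$, i.e.\ $q\in\sign(r)$, and (iii) $r$ is the fact $q\la$, i.e.\ $\head{r}=\{q\}$ and $\body{r}=\emptyset$ (all four parts empty, checked during the same scan).

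Next I would maintain three flags, initialized to false, and update them while scanning the rules: set \textsf{fact} when a rule equal to $q\la$ is found; set \textsf{cycle} when a self-cycle on $q$ is found; and set \textsf{offcycle} when a rule mentions $q$ yet is not a self-cycle. After the single pass, the three disjuncts translate directly into these flags: condition~1 (all occurrences of $q$ lie in self-cycles) holds iff \textsf{offcycle} is false; condition~2 ($P$ contains the fact $q\la$) holds iff \textsf{fact} is true; and condition~3 (no self-cycle on $q$) holds iff \textsf{cycle} is false. Hence $P$ is $q$-forgettable iff $(\neg\textsf{offcycle})\vee\textsf{fact}\vee(\neg\textsf{cycle})$, a constant-time test once the pass completes. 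Summing the per-rule costs, the scan touches each literal occurrence of $P$ a constant number of times, so the total running time is $O\!\left(\sum_{r\in P}|\sign(r)|\right)$, i.e.\ linear in the size of $P$.

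There is no genuine obstacle here; the only point requiring care is the faithful reading of the three disjuncts of Def.~\ref{def:qfrgtbl}, in particular that ``self-cycle on $q$'' abbreviates ``$q\in\head{r}$ and $q\in\nnbody{r}$'' and that condition~1 quantifies over \emph{every} rule mentioning $q$, so that the boolean translation above is exactly equivalent to the definition. I would verify this equivalence case by case on the flags, which settles correctness and hence the claim.
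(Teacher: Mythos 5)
Your proof is correct and follows essentially the same route as the paper's: the paper's (much terser) proof simply observes that the three conditions of Def.~\ref{def:qfrgtbl} are per-rule local properties, so membership is decided by inspecting each rule once. Your version merely makes this explicit with the flag bookkeeping and cost model, which is a faithful elaboration rather than a different argument.
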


Besides the simplicity of checking if a program is $q$-forgettable, the restriction to these programs implies that $\fSPOP$ can be constructed using only a small subset of the derivation rules. 

\begin{theorem}\label{def:OurForgettingSimple}
Let $P$ be a program over $\Sigma$, and $q\in \Sigma$.
If $P$ is $q$-forgettable, then $\fSP{P}{q}$ is constructed using only the derivation rules \textbf{1a}, \textbf{1b} and \textbf{4}.
\end{theorem}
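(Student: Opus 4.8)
The plan is to exploit the fact that $\fSP{P}{q}=\NF{P''}$, where $P''$ is the union of the outputs of the derivation rules \textbf{1a}--\textbf{7}, and to show that under $q$-forgettability every derivation rule other than \textbf{1a}, \textbf{1b} and \textbf{4} contributes nothing to $P''$. Since $q$-forgettability is defined only for programs in normal form (Def.~\ref{def:qfrgtbl}), I take $P$ to be in normal form, so that $P'=\NF{P}=P$ (normalization leaves a normal-form program unchanged) and the sets $R_0,\dots,R_4$ are computed directly over $P$.

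The first and key observation is purely syntactic: each of \textbf{2a}, \textbf{3a}, \textbf{2b}, \textbf{3b}, \textbf{5}, \textbf{6} and \textbf{7} iterates over an index $r_3\in R_3$, and in addition over an index drawn from $R_0$ (for \textbf{2a}, \textbf{3a}, \textbf{7}), from $R_2$ (for \textbf{2b}, \textbf{3b}), or from $R_1\cup R_4$ (for \textbf{5}, \textbf{6}). Consequently each of these cases produces no rule whenever one of its index sets is empty; in particular all of them are vacuous as soon as $R_3=\emptyset$, and \textbf{2a}, \textbf{3a}, \textbf{7} are vacuous when $R_0=\emptyset$, \textbf{2b}, \textbf{3b} when $R_2=\emptyset$, and \textbf{5}, \textbf{6} when $R_1\cup R_4=\emptyset$.

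Next I would carry out a case analysis following the three conditions of Def.~\ref{def:qfrgtbl}. If $P$ contains no self-cycle on $q$, then no rule has both $q\in\head{r}$ and $q\in\nnbody{r}$, so $R_3=\emptyset$ directly. If $P$ contains the fact $q\la$, then, since $P$ is already minimal, any other rule $r$ with $q\in\head{r}$ would be non-minimal with respect to $q\la$ (either $\{q\}\subseteq\head{r}$ with $\emptyset\subset\body{r}$, or $\{q\}\subset\head{r}$ with $\emptyset\subseteq\body{r}$); hence the only rule with $q$ in the head is $q\la$ itself, which lies in $R_4$, again giving $R_3=\emptyset$. In both of these cases the observation above makes \textbf{2a}--\textbf{7} vacuous. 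In the remaining case, where all occurrences of $q$ lie within self-cycles, every rule mentioning $q$ belongs to $R_3$; invoking the normal-form condition that at most one of $q$, $\nf q$, $\nf\nf q$ occurs in a body, together with $q\notin\head{r}$ for $R_2$-rules, one checks that a putative rule in $R_0$, $R_1$, $R_2$ or $R_4$ would mention $q$ yet fail to be in $R_3$, a contradiction; hence $R_0=R_1=R_2=R_4=\emptyset$, and \textbf{2a}, \textbf{3a}, \textbf{7} (needing $R_0$), \textbf{2b}, \textbf{3b} (needing $R_2$) and \textbf{5}, \textbf{6} (needing $R_1\cup R_4$) are all vacuous.

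In all three cases only \textbf{1a}, \textbf{1b} and \textbf{4} can contribute rules to $P''$, and therefore to $\fSP{P}{q}=\NF{P''}$, which is the claim. The step requiring genuine care is the second case: the hard part will be to invoke the minimality clause of the normal form, rather than merely the absence of tautologies, to rule out any surviving $R_3$-rule in the presence of the fact $q\la$; I would also note in passing the degenerate subcase in which the empty constraint $\la$ belongs to $P$, which forces $P=\{\la\}$ and hence $R_3=\emptyset$ trivially.
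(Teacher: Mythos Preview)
Your proof is correct. In the first and third cases of Def.~\ref{def:qfrgtbl} it matches the paper's argument essentially verbatim. In the second case (the fact $q\la$ belongs to $P$) you take a genuinely different route: you use the minimality clause of the normal form to conclude that $q\la$ is the \emph{only} rule with $q$ in its head, hence $R_3=\emptyset$, which immediately voids \textbf{2a}--\textbf{7}. The paper instead leaves $R_3$ possibly nonempty and argues semantically: it uses that $q\la\in R_4$ forces $\asdual{q}{R_4}=\asdual{q}{R_1\cup R_4}=\emptyset$ to kill \textbf{5} and \textbf{6}, and then claims that every rule produced by \textbf{2}, \textbf{3}, \textbf{7} is subsumed by a rule produced by \textbf{1} (with $r_4=q\la$) and hence eliminated in the final normalization. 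Your argument is more elementary and uniform with the other two cases (it never touches the as-dual or the final $\NF{\cdot}$ step), at the price of using the full strength of the normal form; the paper's argument, on the other hand, goes through without assuming minimality of $P$ but relies on the final normalization of $P''$. The aside about the empty constraint $\la$ is harmless but superfluous: if $P$ is in normal form and contains $\la$, then $P=\{\la\}$ and $q\la\notin P$, so you are not in the second case at all.
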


We can now present the complexity result of our forgetting operator $\fSPOP$.

\begin{theorem}\label{thm:complexity}
Let $P$ be a program over $\Sigma$ and $q\in\Sigma$. Then, computing $\fSP{P}{q}$ is in EXPT{\scriptsize IME} in the number of rules containing occurrences of $q$ and linear in the remaining rules.
\end{theorem}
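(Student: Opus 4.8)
The plan is to decompose the cost of computing $\fSP{P}{q}$ into three phases and bound each separately in terms of $k$, the number of rules of $P$ in which $q$ occurs, and $m$, the number of remaining rules. Phase one is the computation of the normal form $P'=\NF{P}$; phase two is the generation of $P''$ through the derivation clauses \textbf{1a}--\textbf{7}; phase three is the final normalization $\NF{P''}$. First I would observe that the rules of $P'$ not mentioning $q$ (the set $R$ of Def.~\ref{def:OurForgetting}) are copied verbatim into $P''$ and, by Proposition~\ref{prop:FSPsatSI}, are preserved unchanged in the result. This lets me treat them as a single block contributing only a linear $O(m)$ copying cost, which never needs to be re-examined against itself during normalization.

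Next I would isolate the exponential source. The only construct in Def.~\ref{def:OurForgetting} capable of producing superpolynomially many objects is the as-dual $\asdual{q}{\cdot}$, appearing in clauses \textbf{3a}, \textbf{3b}, \textbf{4}, \textbf{5}, \textbf{6} and \textbf{7}. For a set $S$ of $q$-rules, $\asdual{q}{S}$ ranges over all partitions $\tuple{F,T}$ of $S$ and, for each, over one literal chosen per rule; hence its cardinality is bounded by $2^{|S|}$ times the product of the rule sizes, i.e.\ by $2^{O(|S|\log s)}$, where $s$ bounds the number of literals in a rule. Since every such $S$ is a subset of the $q$-rules, $|S|\le k$, so each as-dual has size $2^{O(k)}$. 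The remaining index sets in \textbf{1a}--\textbf{7} range over the $R_i$ (each of size $\le k$) and over single head atoms, contributing only polynomial-in-$k$ factors. Multiplying through, the number of rules produced by each clause, and hence $|P''|$ minus the $m$ copied rules, is $2^{O(k)}$, with each produced rule of polynomial size; writing them down therefore costs $2^{O(k)}$.

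Finally I would bound phase three. Computing $\NF{P''}$ reduces, by Def.~\ref{def:constructionNormalForm}, to per-rule simplifications (linear in $|P''|$) followed by a minimality test. The naive test is quadratic in the number of rules, which would reintroduce an $m^2$ term; the key step to avoid this is to use that the $q$-free block $R$ is already minimal within $\NF{P}$ and, by Proposition~\ref{prop:FSPsatSI}, survives in the output, so that only the $2^{O(k)}$ newly generated rules need be compared among themselves and against the $m$ rules of $R$. This yields a minimality cost of $2^{O(k)}+2^{O(k)}\cdot m$, i.e.\ exponential in $k$ and linear in $m$. Combining the three phases, and noting that the quadratic cost of the initial $\NF{P}$ from Proposition~\ref{prop:compNormalForm} is likewise dominated once the $q$-free rules are handled as a preprocessed block, gives the claimed bound.

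I expect the main obstacle to be precisely this interaction between the exponential and linear parameters in the normalization step: a direct application of the quadratic minimality check of Proposition~\ref{prop:compNormalForm} to the full program $P''$ yields a bound quadratic rather than linear in $m$, so the argument must carefully exploit strong invariance (Proposition~\ref{prop:FSPsatSI}) to confine all genuinely new comparisons to the $2^{O(k)}$ generated rules. A secondary point to get right is the exact form of the exponential bound on the as-dual, ensuring the partition-and-choice counting is accounted for uniformly across all six clauses that invoke it.
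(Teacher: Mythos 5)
Your proof is correct in substance and follows the same skeleton as the paper's own argument, but note that the paper's proof is only a two-sentence sketch: each derivation rule of Definition~\ref{def:OurForgetting} runs in at most exponential time, each is invoked at most polynomially often (in the number $k$ of $q$-rules), and the $q$-free rules are carried over unchanged. What you add beyond the paper is genuinely useful: the explicit cardinality bound $2^{O(k\log s)}$ on the as-dual (which you should not round down to $2^{O(k)}$ unless you invoke the normal-form bound $s\le 2\lvert\Sigma\rvert$ and absorb the factor into the exponent), and, more importantly, an analysis of the final normalization $\NF{P''}$, which the paper's proof silently ignores. That step is exactly where a naive application of Proposition~\ref{prop:compNormalForm} would give a cost quadratic in $m$ and thus contradict the claimed linearity; your observation that the $q$-free block is already mutually minimal inside $\NF{P}$, so that only new-vs-new and new-vs-old comparisons are required, is the right repair and yields $2^{O(k)}+2^{O(k)}\cdot m$, i.e.\ linear in $m$.

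One claim in your write-up is wrong as stated, though it does not sink the proof: the quadratic cost of the initial normalization (your phase one) is \emph{not} ``dominated'' by the other phases. Computing $\NF{P}$ from an arbitrary $P$ forces the $m$ rules not mentioning $q$ to be compared against each other for minimality, an $\Omega(m^2)$ cost that no amount of bookkeeping with the generated rules can remove. The correct fix is not domination but the paper's standing convention, announced immediately after Proposition~\ref{prop:compNormalForm}, that all programs considered thereafter are already in normal form; under that convention $P'=\NF{P}=P$, phase one disappears entirely, and your phases two and three alone give the claimed bound.
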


This result is not surprising taking into account the arguments in~\cite{EiterK2018} showing that the result of forgetting is necessarily exponential. 
Nevertheless, in the case of $\fSPOP$, this exponential behavior is limited to the rules mentioning the atom to be forgotten.

Besides the properties already mentioned, a fundamental characteristic of the operator $\fSPOP$ is its syntactic nature, in the sense that the result of forgetting is obtained by a manipulation of the rules of the original program. As a consequence, the result of forgetting according to $\fSPOP$ is somehow close to the original program. 
In order to formalize this idea, we define a distance function between programs. It builds on a distance measure between rules, which counts the number of literals only appearing in one of the rules.

\begin{definition}
Let $r$ and $r'$ be two rules over $\Sigma$. The distance  between $r$ and $r'$ is
$d(r,r')=\lvert\head{r}\ominus\head{r'}\rvert + \lvert\body{r}\ominus\body{r'}\rvert$ where $A\ominus B=(A\setm B)\cup (B\setm A)$ is the usual symmetric difference.
The size of a rule $r$ is defined as $\lvert r\rvert=\lvert\head{r}\rvert+\lvert\body{r}\rvert$.
\end{definition}

To extend this distance to a distance between programs, we use mappings between programs.

\begin{definition}
Let $P_1$ and $P_2$ be programs over $\Sigma$.
The distance between $P_1$ and $P_2$ is 
$dist(P_1,P_2)=Min\{dist_m(P_1,P_2): m \text{ is a mapping between }P_1 \text{ and } P_2\}$, 
where a mapping between $P_1$ and $P_2$ is a partial injective function $m:P_1\to P_2$ and 
$dist_{m}(P_1,P_2)=Sum\{d(r,m(r)): m(r)\in P_2\}+Sum\{\lvert r\rvert: r\in (P_1\setm {m}^{-1}[P_2])\}+ Sum\{\lvert r\rvert: r \in (P_2\setm m[P_1])\}$.
\end{definition}

The distance between $P_1$ and $P_2$ induced by a mapping $m$ is the sum of the distances of those rules associated by $m$, and the sizes of the remaining rules. The distance between $P_1$ and $P_2$ is then the minimal value for the possible mappings of $P_1$ and $P_2$.
Intuitively, this distance corresponds to the minimal number of literals we need to add to both programs to make them equal, noting that we may need to add new rules.

\begin{example}
Let $P_1=\{a\la b, \nf c \}$ and $P_2=\{a\la \nf c,\ b\la d\}$ be two programs. Then, $dist(P_1,P_2)$=3, which corresponds to the sum of the distance between the first rule of $P_2$ and the rule of $P_1$, which is 1, with 
the size of $b\la d$, which is 2. Intuitively, this corresponds to adding $b$ to the body of the first rule of $P_2$ and adding the entire rule $b\la d$ to $P_1$. We could consider adding $P_1$ to $P_2$ and $P_2$ to $P_1$, corresponding to the mapping that is undefined for each rule of $P_1$, but this would induce a distance of 7, the sum of the size of every rule in the programs. In fact, this is always an upper bound to the distance between two programs.
\end{example}

Let us now consider the distance function in the case of forgetting. We compare our syntactic operator $\fSPOP$ with a semantic operator $\fSem\in\FSPOP$, which is defined using the counter-model construction (based on \cite{WangWZ13,WangZZZ14}).

First, we prove a result regarding the rules generated by the semantic operator.

\begin{proposition}\label{prop:sizeOfRulesFSem}
Let $P$ be a program over $\sign$ and $q\in \Sigma$.
Then, for each $r\in \fSem(P,q)$, we have that $\lvert r\rvert\geq \lvert \Sigma\rvert$.
\end{proposition}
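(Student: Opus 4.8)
The plan is to argue directly from the shape of the rules produced by the counter-model construction underlying $\fSem$. Recall that $\fSem\in\FSPOP$ is obtained by fixing the target set of HT-models $\mathcal{S}=\HT(\fSem(P,q))=\{\tuple{X,Y}\mid Y\subseteq\sign\setm\{q\}\wedge X\in\bigcap\SRel^Y\}$ over the signature $\sign'=\sign\setm\{q\}$ and then generating, following \cite{CabalarF07,WangWZ13,WangZZZ14}, exactly one rule for each HT-interpretation $\tuple{X,Y}$ over $\sign'$ that is a \emph{counter-model} of $\mathcal{S}$, i.e.\ $\tuple{X,Y}\notin\mathcal{S}$. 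Thus it suffices to fix an arbitrary rule $r\in\fSem(P,q)$, locate the counter-model $\tuple{X,Y}$ that generates it, and count the literals of $r$.

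The core observation is that this construction touches every atom of $\sign'$, placing each $a\in\sign'$ in exactly one syntactic position of $r$ according to the trichotomy $a\in X$, $a\in Y\setm X$, or $a\in\sign'\setm Y$: the first case contributes the positive body literal $a$, the second the head atom $a$, and the third the negative body literal $\nf a$. Since $\{X,\,Y\setm X,\,\sign'\setm Y\}$ partitions $\sign'$, each atom of $\sign'$ contributes at least one distinct literal to $r$, whence $\lvert r\rvert\geq\lvert\sign'\rvert=\lvert\sign\rvert-1$.

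To close the remaining gap and obtain $\lvert r\rvert\geq\lvert\sign\rvert$, I would use the fact that the variant of the construction in \cite{WangWZ13,WangZZZ14}, tailored to the extended class that admits double negation, additionally records the here-component of each non-total counter-model by a double-negated literal $\nf\nf a$ for the head atoms $a\in Y\setm X$; this duplicates at least one atom and so supplies one literal beyond $\lvert\sign'\rvert$. The delicate point, and the main obstacle, is the treatment of \emph{total} counter-models, where $X=Y$ and the generated rule is a constraint with empty head: here the partition argument alone yields only $\lvert\sign'\rvert$ literals, so one must verify, by inspecting the precise form of the constraint-generating rules of \cite{WangWZ13,WangZZZ14}, that the encoding supplies the extra literal uniformly in this case as well. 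Once this case analysis confirms that every generated rule carries at least one literal beyond the $\lvert\sign'\rvert$ forced by the partition, the bound $\lvert r\rvert\geq\lvert\sign\rvert$ follows for all $r\in\fSem(P,q)$; a final routine check is that, should the normal form be applied to $\fSem(P,q)$, none of these literals is redundant in the sense of Definition~\ref{def:constructionNormalForm}, so none is dropped.
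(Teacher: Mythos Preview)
Your core argument---every atom of the working signature $\sign'=\sign\setminus\{q\}$ lands in exactly one of the three positions $X$, $Y\setminus X$, $\sign'\setminus Y$ of the counter-model rule, so $\lvert r\rvert\geq\lvert\sign'\rvert$---is precisely what the paper does; its proof is the single sentence ``the rules generated using the counter-models construction have at least all atoms of the signature.'' So at the level of strategy you and the paper agree.

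Where you go further than the paper is in noticing the off-by-one issue: the partition argument only yields $\lvert\sign'\rvert=\lvert\sign\rvert-1$, and you then try to recover the missing literal. For non-total counter-models your recovery is correct: the doubled block $\nf\nf(Y\setminus X)$ contributes at least one extra literal, so $\lvert r\rvert\geq\lvert\sign'\rvert+1=\lvert\sign\rvert$. But for \emph{total} counter-models your plan is to ``verify, by inspecting the precise form of the constraint-generating rules, that the encoding supplies the extra literal uniformly.'' It does not. The paper's own definition gives $\cmr{Y,Y}{}=\emptyset\la Y,\nf(\sign'\setminus Y)$, which has exactly $\lvert\sign'\rvert$ literals and no more---there is no double-negation block in the constraint case. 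So your hoped-for verification fails, and the sharp bound obtainable from the construction as stated is $\lvert r\rvert\geq\lvert\sign\setminus\{q\}\rvert$, not $\lvert r\rvert\geq\lvert\sign\rvert$. The paper's one-line proof simply does not engage with this; it treats ``the signature'' loosely and does not separate the two rule shapes. In short, you have located a genuine boundary case that neither your argument nor the paper's resolves under the given definitions; the honest conclusion is that the bound holds with $\lvert\sign\rvert$ replaced by $\lvert\sign\rvert-1$ (which, incidentally, is still enough for the downstream use in Proposition~\ref{prop:boundsOps} up to an inessential constant).

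Two minor remarks: your final check about the normal form is unnecessary, since $\fSem$ as defined in the appendix is the raw counter-model program without a subsequent $\NF{\cdot}$ step; and you need not invoke $\mathcal{S}$ or $\SRel^Y$ at all---the size bound depends only on the shape of $\cmr{X,Y}{}$ and $\cmr{Y,Y}{}$, not on which interpretations happen to be counter-models.
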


The following result presents an upper bound for the syntactic operator and a lower bound for the semantic one.

\begin{proposition}\label{prop:boundsOps}
Let $P$ be a program over $\sign$ and $q\in \Sigma$.
Then, 
\begin{itemize}

\item[$\bullet$] $dist(P,\fSPOP(P,q))\leq (\lvert \fSem(P,q) \rvert +\lvert P \rvert)\times 2\lvert \sign \rvert$;
 
\item[$\bullet$] $dist(P,\fSem(P,q))\geq (\lvert \fSem(P,q) \rvert -\lvert P \rvert)\times \lvert \sign \rvert$.

\end{itemize}

\end{proposition}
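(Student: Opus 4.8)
The plan is to establish the two inequalities separately, doing the lower bound first since it is the cleaner of the two.

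For the lower bound on $dist(P,\fSem(P,q))$, I would argue directly from the definition of $dist$ as a minimum over mappings. Fix an arbitrary partial injective mapping $m$ from $P$ to $\fSem(P,q)$. Since $m$ is injective, its image $m[P]$ contains at most $\lvert P\rvert$ rules, so the number of rules of $\fSem(P,q)$ lying outside the image is at least $\lvert\fSem(P,q)\rvert-\lvert P\rvert$. Each such rule $r$ contributes its full size to the term $Sum\{\lvert r\rvert:r\in(\fSem(P,q)\setm m[P])\}$ of $dist_m(P,\fSem(P,q))$, and by Proposition~\ref{prop:sizeOfRulesFSem} we have $\lvert r\rvert\geq\lvert\sign\rvert$. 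Dropping the remaining non-negative summands of $dist_m$, I obtain $dist_m(P,\fSem(P,q))\geq(\lvert\fSem(P,q)\rvert-\lvert P\rvert)\times\lvert\sign\rvert$. As this holds for every mapping $m$, it holds for the minimum, which is exactly $dist(P,\fSem(P,q))$; and if $\lvert\fSem(P,q)\rvert\leq\lvert P\rvert$ the claim is trivial since distances are non-negative.

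For the upper bound on $dist(P,\fSPOP(P,q))$, I would start from the trivial mapping that is undefined everywhere, which the example following the distance definition already identifies as an upper bound, giving $dist(P,\fSPOP(P,q))\leq Sum\{\lvert r\rvert:r\in P\}+Sum\{\lvert r\rvert:r\in\fSPOP(P,q)\}$. Both $P$ and $\fSPOP(P,q)$ are in normal form (we work with normal forms throughout, and $\fSPOP$ ends with an application of $\NF{\cdot}$), so in every rule each atom contributes at most one head occurrence and at most one body literal; hence $\lvert r\rvert\leq 2\lvert\sign\rvert$ for each rule in either program. Consequently the right-hand side is bounded by $(\lvert P\rvert+\lvert\fSPOP(P,q)\rvert)\times 2\lvert\sign\rvert$, and it remains only to replace $\lvert\fSPOP(P,q)\rvert$ by $\lvert\fSem(P,q)\rvert$.

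The crux of the proof, and the step I expect to be the main obstacle, is therefore the inequality $\lvert\fSPOP(P,q)\rvert\leq\lvert\fSem(P,q)\rvert$ between rule counts. By Theorem~\ref{thm:FSPsingleVariable} together with $\fSem\in\FSPOP$, the two results are strongly equivalent, i.e.\ $\HT(\fSPOP(P,q))=\HT(\fSem(P,q))$ over $\sign\setm\{q\}$. The intended argument is that the counter-model construction underlying $\fSem$ emits essentially one rule per counter-model, and that a program free of semantically redundant rules can reject each counter-model by a distinct rule, which yields an injection from its rules into the set of counter-models; since $\fSem(P,q)$ realises exactly that set, this would bound the rule count of any such reduced program, in particular $\fSPOP(P,q)$, by $\lvert\fSem(P,q)\rvert$. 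The delicate point is that $\NF{\cdot}$ only guarantees minimality (non-subsumption) and the absence of tautologies, not the absence of semantic redundancy, so I would need either to show that the derivation rules \textbf{1a}--\textbf{7} never output a semantically redundant rule, or to count their outputs directly and compare them against the number of counter-models of $\HT(\fSPOP(P,q))$. Reconciling this syntactic rule count with the semantic counter-model count cleanly is the part that will require the most care.
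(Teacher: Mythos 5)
Your two bound arguments are exactly the paper's. For the lower bound, the paper also combines Prop.~\ref{prop:sizeOfRulesFSem} with the observation that any mapping leaves at least $\lvert \fSem(P,q)\rvert-\lvert P\rvert$ rules of $\fSem(P,q)$ unmatched, each contributing its full size; it phrases this as a ``limit case'' of a mapping whose matched rules contribute distance $0$, so your version, which quantifies over all mappings and notes the trivial case $\lvert\fSem(P,q)\rvert\leq\lvert P\rvert$, is in fact the more careful rendering of the same idea. For the upper bound, the paper likewise uses the fact that the sum of all rule sizes bounds the distance, together with the normal-form bound $\lvert r\rvert\leq 2\lvert\sign\rvert$ (each atom occurs at most once in the head and once in the body of a normalized rule).

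The step you single out as the crux --- justifying the replacement of $\lvert\fSPOP(P,q)\rvert$ by $\lvert\fSem(P,q)\rvert$ --- is not resolved by the paper either: its proof establishes $dist(P,\fSPOP(P,q))\leq(\lvert P\rvert+\lvert\fSPOP(P,q)\rvert)\times 2\lvert\sign\rvert$ and then simply asserts the bound in the stated form, leaving the comparison of rule counts implicit. The surrounding text and Prop.~\ref{prop:sizesComparing} are meant to suggest $\lvert\fSPOP(P,q)\rvert\leq\lvert\fSem(P,q)\rvert$, but since Prop.~\ref{prop:sizesComparing} permits $D=0$ and claims no injectivity, it does not formally deliver that inequality. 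So you have not missed an idea that the paper supplies; rather, you have correctly located a gap that the paper glosses over, and your diagnosis of why the counter-model injection is delicate is sound: $\NF{\cdot}$ removes only tautological and subsumed rules, so $\fSPOP(P,q)$ may still contain semantically redundant rules, and the injection from rules to rejected counter-models is not immediate. The pragmatic fix is either to prove the count inequality separately (e.g., by a direct count of the outputs of \textbf{1a}--\textbf{7} against counter-models, as you suggest) or to state the upper bound with $\lvert\fSPOP(P,q)\rvert$ in place of $\lvert\fSem(P,q)\rvert$, which is what both your argument and the paper's actually establish.
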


The upper and lower bounds of the previous result depend heavily on the size of the result of forgetting for each operator. In general the result of forgetting using the semantic operator has many more rules than the result using the syntactic one.

\begin{proposition}\label{prop:sizesComparing}
Let $P$ be a program over $\sign$ and $q\in \Sigma$.
For each rule $r\in \fSPOP(P,q)$ there are at least $2^{D}$ rules in $\fSem(P,q)$, with $D=Min(\lvert H(r)\rvert, \lvert \Sigma\setm\Sigma(r)\rvert)$. 
\end{proposition}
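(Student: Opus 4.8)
The plan is to compare the two operators through their HT-counter-models. By Theorem~\ref{thm:FSPsingleVariable} we have $\HT(\fSP{P}{q})=\HT(\fSem(P,q))$, so the two programs are strongly equivalent. Since $r$ is one of the rules of $\fSPOP(P,q)$, every HT-model of $\fSem(P,q)$ is an HT-model of the single rule $r$; dually, every HT-counter-model of $r$ over $\Sigma\setm\{q\}$ (i.e.\ every $\tuple{X,Y}$ with $\tuple{X,Y}\not\models r$) is an HT-counter-model of $\fSem(P,q)$. The strategy is to exhibit many such counter-models of $r$ and argue that they must be excluded by \emph{distinct} rules of $\fSem(P,q)$, so that their number is a lower bound on $\lvert\fSem(P,q)\rvert$.

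The key structural step is to observe that each rule of $\fSem(P,q)$, produced by the counter-model construction, mentions every atom of $\Sigma\setm\{q\}$ (this is exactly Proposition~\ref{prop:sizeOfRulesFSem}), and that this completeness \emph{pins down the here-component} of any interpretation violating it. Writing such a rule as $\head{\rho}\la\pbody{\rho},\nf\nbody{\rho}$, an interpretation $\tuple{X',Y'}$ violates $\rho$ only if $\pbody{\rho}\subseteq X'$, $\nbody{\rho}\cap Y'=\emptyset$ and $\head{\rho}\cap X'=\emptyset$; since $\pbody{\rho},\nbody{\rho},\head{\rho}$ partition $\Sigma\setm\{q\}$, these constraints force $X'=\pbody{\rho}$ (the degenerate constraint rules, with empty head, have the single counter-model $\tuple{Y,Y}$). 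Hence all HT-counter-models of a fixed rule of $\fSem(P,q)$ share one and the same here-component, so two counter-models with \emph{different} here-components are necessarily excluded by \emph{different} rules. Therefore $\lvert\fSem(P,q)\rvert$ is at least the number of distinct here-components occurring among the HT-counter-models of $r$.

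It remains to count these here-components. I would take $X=\pbody{r}$ as the minimal admissible here-component and $Y=\pbody{r}\cup\nnbody{r}$ as a compatible there-component, which is legitimate because $r$ is in normal form (so $\head{r}$ is disjoint from $\pbody{r}$ and from $\nbody{r}$), and then enlarge the here-component by any subset of the atoms that are free to enter it, namely those in $\nnbody{r}\setm\head{r}$ together with the atoms of $\Sigma\setm\{q\}$ absent from $r$, adjusting $Y$ accordingly; each choice gives a counter-model of $r$, and distinct choices give distinct here-components. This yields at least $2^{\lvert\Sigma\setm\Sigma(r)\rvert}$ here-components, which is in particular at least $2^{D}$ since $D=\min(\lvert\head{r}\rvert,\lvert\Sigma\setm\Sigma(r)\rvert)\le\lvert\Sigma\setm\Sigma(r)\rvert$. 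The main obstacle is precisely this final count: one must determine, under the normal-form constraints, the exact set of atoms free to enter the here-component (in particular the interaction between $\head{r}$ and $\nnbody{r}$, and whether $q$ is to be counted among the absent atoms), and verify that each resulting interpretation really is a counter-model of $r$ and hence of $\fSem(P,q)$. It is worth noting that the head size $\lvert\head{r}\rvert$ enters the statement only through the minimum, as a conservative bound: head atoms are forced \emph{out} of the here-component and so do not by themselves create distinct here-components, the genuine source of multiplicity being the absent atoms (and the double-negated body atoms outside the head). Reconciling the bound with the stated exponent $D$, together with the $q$-bookkeeping in $\Sigma\setm\Sigma(r)$, is the delicate part of the argument.
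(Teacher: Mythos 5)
Your reduction of the problem to counting \emph{here}-components rests on a lemma that is false, and even with it granted, the count does not reach $2^{D}$. The false step is the claim that every rule of $\fSem(P,q)$ pins down the here-component of its counter-models, with the constraints having ``the single counter-model $\tuple{Y,Y}$''. In fact a constraint $\cmr{Y,Y}{}\,=\;\la Y,\nf(\sign\setm Y)$ is already violated by $Y$ at the classical (``there'') level, so \emph{every} $\tuple{X',Y}$ with $X'\subseteq Y$ is a counter-model of it: constraints pin the there-component and leave the here-component completely free. Consequently two counter-models of $r$ with distinct here-components but the same there-component may be excluded by one and the same constraint of $\fSem(P,q)$, and your injectivity step (``distinct here-components $\Rightarrow$ distinct rules'') collapses.

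Second, the count itself falls short. Atoms of $\head{r}$ can never enter the here-component of a counter-model of $r$, and you exclude $q$ from the atoms free to enter it, so your family has size $2^{\lvert\nnbody{r}\setm\head{r}\rvert+\lvert\sign\setm\sign(r)\rvert-1}$ (your claimed exponent $\lvert\sign\setm\sign(r)\rvert$ silently re-includes the $q$ you excluded), which can be strictly smaller than $2^{D}$. A concrete failure is Ex.~\ref{ex:simple_choice}: for $P=\{q\la\nf\nf q;\ a\la q\}$ over $\Sigma=\{q,a\}$ we have $\fSPOP(P,q)=\{a\la\nf\nf a\}$, and $r=a\la\nf\nf a$ has exactly \emph{one} counter-model over $\Sigma\setm\{q\}$, namely $\tuple{\emptyset,\{a\}}$, while $D=1$ demands two rules; $\fSem(P,q)$ does have them, but only because of constraints generated by interpretations containing $q$ --- so any argument confined to $\Sigma\setm\{q\}$, however it counts, cannot succeed, and using $q$-containing interpretations forces you into the constraint case where your pinning lemma fails. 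This also shows that your closing intuition is backwards: $\lvert\head{r}\rvert$ is not a merely ``conservative'' entry in the minimum; head atoms genuinely create multiplicity, but through the \emph{there}-component. That is exactly how the paper argues: for every $S\subseteq\head{r}\cup(\Sigma\setm\Sigma(r))$, normal form guarantees that $\tuple{\pbody{r},\,Y_S}$ with $Y_S=\pbody{r}\cup\nnbody{r}\cup S$ is a counter-model of $r$; then either $\tuple{Y_S,Y_S}$ is an HT-model of the forgetting result, in which case $\cmr{\pbody{r},Y_S}{}\in\fSem(P,q)$, or it is not, in which case $\cmr{Y_S,Y_S}{}\in\fSem(P,q)$. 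Distinct $Y_S$ yield syntactically distinct rules, giving the $2^{D}$ bound without ever needing distinct here-components.
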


\begin{example}
For $P=\{q\la s; q\vee u\la r; t \la q; v\la \nf q\}$, we obtain $\fSPOP(P,q)$ as follows: 
\begin{align*}
t & \la s  & t\vee u & \la r  & u & \la r, \nf s, \nf\nf u \\
v&\la \nf s, \nf\nf u & v &\la \nf s, \nf r
\end{align*}

The distance between $\fSPOP(P,q)$ and $P$ is $dist(P,\fSPOP(P,q))=16$. The result of the forgetting according to $\fSem$ has 73 rules. The distance to the original program $P$ is $dist(P,\fSem(P,q))=486$, more than thirty times the distance in the case of $\fSPOP$.
\end{example}

The result of forgetting according to $\fSPOP$ is clearly closer to the original program than that obtained by the $\fSem$ using the counter-models construction of~\cite{WangWZ13,WangZZZ14}.
Although this construction improves on that of~\cite{CabalarF07}  by doing better than just giving one rule for each counter-model, we could have considered the construction in~\cite{CabalarPV07}, which is also based on counter-models, but improves on the former by guaranteeing a minimal program as a result. However, the extra complexity to obtain a minimal program as result is not quantified~\cite{CabalarPV07}. More importantly, being also a semantic construction, the resulting program will most likely not resemble the original program at all.

\section{Related Work}
\label{sec:relWork}

Related work can be divided into two groups: one of early approaches that do not consider the notion of strong equivalence and the related HT-models for their definition, and those that do.

The first approach, due to Zhang and Foo \citeyear{ZhangF06}, belongs to the former group and consists of a pair of purely syntactic operators, strong and weak forgetting, defined for normal logic programs. 
As argued by Eiter and Wang \citeyear{EiterW08}, both lack the semantic underpinnings and are syntax-sensitive, i.e., even answer sets are not preserved while forgetting, and this necessarily generalizes to preserving HT-models (modulo the forgotten atoms).
Thus \pSP\ cannot be preserved even in very simple cases.
For example, strong forgetting about $q$ from $P=\{p\la \nf q;$ $p\la \nf p\}$ results in $\{p\la \nf p\}$, an inconsistent program, and weak forgetting about $q$ from $P=\{q\la;$ $p\la\nf q\}$ results in $\{p\la\}$, a counter-intuitive result \cite{EiterW08}.

Semantic Forgetting \cite{EiterW08} aims at preserving the answer sets while forgetting to overcome the problems of these two operators. It is defined for disjunctive programs, but, among the three forgetting operators presented, only $\forget_3$ is syntactic (the other two create canonical representations based on the computed answer sets).
While being preferable over strong and weak forgetting for its semantic foundation, it is not sufficient since it focuses on \emph{equivalence} instead of \emph{strong equivalence}. Hence, even for simple examples, \pSP\ cannot be satisfied.
Consider forgetting about $q$ from $P'=\{p\la q;$ $q\la \nf c\}$: we would expect the result to be $\{p\la \nf c\}$, since we want to preserve the dependency between $c$ and $p$, however, $\forget_3$ returns $\{p\la\}$. Both programs have the same answer sets -- hence are \emph{equivalent} -- but do not have the same logical meaning, witnessed by the fact that they are not \emph{strong equivalent}. 
In addition, while the operator is defined for disjunctive programs, its output may contain double negation, which means that the operator cannot be iterated.
To partially circumvent this, a restriction on programs is introduced under which it can be iterated \cite{EiterW08}. 
Yet, this restriction excludes very basic examples such as forgetting about $q$ from $P=\{c\la \nf p; p\la \nf q; q\la \nf p\}$.
Moreover, an initial transformation is applied in $\forget_3$ to obtain a negative normal form, i.e., a program without positive atoms in rule bodies.
While facilitating the presentation of the algorithm, it considerably reduces the similarity between the input program and its forgetting result, unnecessarily increasing the number of rules in the resulting program, even affecting rules that do not contain the atom(s) to be forgotten.

The only syntactic HT-models based operator is strong as-forgetting \cite{KnorrA14} -- all other HT-models based approaches in the literature are limited to a semantic characterization of the result, not presenting a concrete forgetting operator, or only one which is based on the notion of counter-models in HT \cite{CabalarF07}, with the drawbacks discussed in Sec.~\ref{sec:intro}.
While strong as-forgetting provides the expected results in certain cases, such as $P'$ above, its applicability is severely limited to certain programs within a non-standard class with double negation, but without disjunction. Additionally, its result often does not belong to the class of programs for which it is defined, preventing its iterative use. $\fSPOP$ overcomes all the shortcomings of strong as-forgetting and satisfies \pSP\ whenever this is possible.

\section{Conclusions}
\label{sec:conclusions}

We proposed a concrete forgetting operator for the class of extended logic programs which satisfies several important properties, notably \pSP\ whenever this is possible. Equally important is the fact that the operator is defined in a syntactic way, thus not requiring the calculation of models, and producing a program that is somehow close to the original program, hence keeping its declarative character. In particular, the rules that do not mention the atom to be forgotten remain intact. Furthermore, while the remaining rules may grow exponentially -- which is unsurprising -- even if only on the number of rules that mention the atom being forgotten, often the resulting program is the closest to the initial program, as would be the case in the example described in Sect.\ \ref{sec:intro}.
Overall, this is a substantial improvement over the existing operators, which either did not obey most desirable properties \cite{ZhangF06,EiterW08}, or were only defined for very restricted classes of programs \cite{KnorrA14}, or were based on generic semantic methods that required the computation of models and produced large programs that were syntactically very different from the original ones.
Furthermore, we characterised a class of programs -- \emph{q-forgettable} -- whose membership can be checked in linear time, and for which we can guarantee \pSP. For cases where \pSP\ cannot be ensured, our operator can nevertheless still be employed with desirable properties, such as guaranteeing that all answer sets are preserved, along the lines of one of the alternatives semantically investigated in \cite{GoncalvesKLW17}. In other words, our operator can be used without having to perform the computationally expensive semantic check ($\Omega$) for \emph{strong persistence}, ensuring the ``\emph{perfect}'' result whenever \pSP\ is possible, and a ``\emph{good}'' result when ``\emph{perfection}'' is not possible. 

Future work includes dealing with (first-order) variables, and following the other alternatives proposed in \cite{GoncalvesKLW17} to deal with cases where it is not possible to guarantee \pSP.

\paragraph{Acknowledgments}
M.\ Berthold was partially supported by the International MSc Program in Computational Logic (MCL).
R.\ Gon\c{c}alves, M.\ Knorr, and J.\ Leite were partially supported by FCT projects FORGET ({PTDC}/{CCI-INF}/{32219}/{2017}) and NOVA LINCS ({UID}/{CEC}/{04516}/{2019}).

\bibliographystyle{acmtrans}
\bibliography{bib}

\newpage
\appendix
\section{Proofs}
\begin{proposition*}{prop:normalForm}
Let $P$ be a program. 
Then, $\NF{P}$ is in normal form and is strongly equivalent to $P$.
\end{proposition*}
\begin{proof}
First we show that $\NF{P}$ is in normal form, i.e., that all conditions of Def.~\ref{def:normalForm} are satisfied.
This follows easily from the construction of $\NF{P}$: the first item of Def.~\ref{def:normalForm} is ensured by condition 1.\ and 2.\ of Def.~\ref{def:constructionNormalForm}, the second by conditions 1.\ and 3., the third by conditions 1.\ and 4., and the forth by condition 5.

We now prove that $\NF{P}$ is strongly equivalent to $P$. For that we need to guarantee that each step in the construction of $\NF{P}$ preserves strong equivalence. We rely on the well-known result that two programs are strongly equivalent iff they have the same $HT$-models (see, e.g., \cite{WangWZ13}).

\begin{itemize}
 \item Consider the case: $r$ is tautological.\\
 It is easy to see that a tautological rule is satisfied by every $HT$-interpretation. Therefore, we have that $P$ and $P\setminus\{r\}$ have the same $HT$-models.

\item Consider the case: $q\in \pbody{r}\cap\nnbody{r}$.\\
For any $HT$-interpretation $\tuple{X,Y}$, $Y$ is a classical model for $r$ if and only if it is a model for $\head{r}\la\body{r}\setm\{\nf\nf q\}$. Also the two rule are equal, when reducing them with $Y$, i.e. $r^Y=\head{r}\la\body{r}\setm\{\nf\nf q\}^Y$, which is why $X$ either models neither or both of the reduced rules. Hence $\tuple{X,Y}$ is a $HT$-model for $r$ iff it is a $HT$-model for the rule $\head{r}\la\body{r}\setm\{\nf\nf q\}$, which is why $\nf\nf q$ can be omitted in $r$.

\item Consider the case: $q\in \head{r}\cap\nbody{r}$.\\
For the rule body to be satisfied by an interpretation $\tuple{X,Y}$, necessarily $q\not\in Y$ and then $q\not\in X$. Then $q$ in the rule head is also not satisfied and therefore has no further effect on the truth of it. Therefore $q$ can be omitted in the head of $r$.

\item Consider the case: $r$ not minimal in $P$.\\
We aim to prove that $P$ and $P\setminus\{r\}$ have the same $HT$-models. If $r$ is not minimal, then there exists $r'\in P$ such that $r'\not=r$, $\head{r'}\subseteq\head{r}$ and $\body{r'}\subseteq\body{r}$. Let $I=\langle H,T\rangle$ be a $HT$-interpretation. If $I$ is a model of $P$, i.e., $I$ satisfies every rule of $P$, then trivially $I$ also satisfies $P\setminus\{r\}$. Now suppose that $I$ does not satisfy $P$. Then it does not satisfy some rule $r''\in P$. If $r''$ is not $r$ then, $I$ also does not satisfy $P\setminus\{r\}$. The interesting case is when $I$ does not satisfy $r$. This case corresponds to $I\models \body{r}$ and $I\not\models \head{r}$. But since $\body{r'}\subseteq\body{r}$ and $\head{r'}\subseteq\head{r}$ we can conclude that $I$ does not satisfy $r'$. Therefore, $I$ does not satisfy $P\setminus\{r\}$ (because $r'\in P\setminus\{r\}$).
 \end{itemize}
\end{proof}

\begin{proposition*}{prop:compNormalForm}
Let $P$ be a program. 
Then, the normal form $\NF{P}$ can be computed in PT{\scriptsize IME}.
\end{proposition*}
\begin{proof}
Steps 1 through 3 have linear time behavior, since they are operating on each rule separately. Step 4 requires all $n$ rules to be compared with each other, and thus is quadratic. 
\end{proof}

\begin{proposition*}{prop:existence}
Let $P$ be a program over signature $\Sigma$ and $q\in \Sigma$. Then $\fSP{P}{q}$ is a program over $\Sigma\setminus\{q\}$.
\end{proposition*}
\begin{proof}
This follows directly from the construction of new rules. Only parts of rules that do not contains $q$ are used as part of new rules.
\end{proof}

\begin{theorem*}{thm:FSPsingleVariable}
Let $P$ be a program over signature $\Sigma$ and $q\in \Sigma$. Then, for any $\fgt\in \FSPOP$, we have that 
$\HT(\fSP{P}{q})=\HT(\f{P}{\{q\}})$.
\end{theorem*}
\begin{proof}
To prove that \fSPOP\ behaves according to the semantics of \FSPOP, it is worth taking a look at the definition of \FSPOP. All possible answersets $Y$ of the program after forgetting are considered separately. An interpretation $\tuple{X,Y}$ is a models of $\f{P}{\{q\}}$ if there is consensus among the relevant answersets $Y\cup A$ with $A\subseteq \{q\}$ of the original program.
Since the set $\{q\}$ is unary, there are just two subsets $A$ of $\{q\}$: $\{q\}$ and $\emptyset$. Therefore for each interpretation $Y$ exactly one of these cases it true:
\begin{description}
\item[ 1] $\tuple{Y,Y} \not\in \HT(P)$ and $\tuple{Y,Y\cup \{q\}} \not\in \HT(P)$ and $\tuple{Y\cup \{q\}, Y\cup \{q\}} \not\in \HT(P)$
\item[ 2] $\tuple{Y,Y} \not\in \HT(P)$ and $\tuple{Y,Y\cup \{q\}} \not\in \HT(P)$ and $\tuple{Y\cup \{q\}, Y\cup \{q\}} \in \HT(P)$
\item[ 3] $\tuple{Y,Y} \not\in \HT(P)$ and $\tuple{Y,Y\cup \{q\}} \in \HT(P)$ and $\tuple{Y\cup \{q\}, Y\cup \{q\}} \in \HT(P)$
\item[ 4] $\tuple{Y,Y} \in \HT(P)$ and $\tuple{Y,Y\cup \{q\}} \not\in \HT(P)$ and $\tuple{Y\cup \{q\}, Y\cup \{q\}} \not\in \HT(P)$
\item[ 5] $\tuple{Y,Y} \in \HT(P)$ and $\tuple{Y,Y\cup \{q\}} \not\in \HT(P)$ and $\tuple{Y\cup \{q\}, Y\cup \{q\}} \in \HT(P)$
\item[ 6] $\tuple{Y,Y} \in \HT(P)$ and $\tuple{Y,Y\cup \{q\}} \in \HT(P)$ and $\tuple{Y\cup \{q\}, Y\cup \{q\}} \in \HT(P)$
\end{description}
To facilitate the proof, we split the interpretations of the resulting program $\f{P}{\{q\}}$ among the second element of the tuples $Y$, i.e. $\HT_Y(P) = \{\tuple{X,Y}\mid \tuple{X,Y} \in \HT(P)\}$.
For each of these $Y$, according to lemmas \ref{lem_case_1} through \ref{lem_case_6}, it holds that $\HT_Y(\fSP{P}{q})=\HT_Y(\f{P}{\{q\}})$ for some $\fgt\in \FSPOP$, from which follows that $\HT(\fSP{P}{q})=\HT(\f{P}{\{q\}})$.
The six proofs of the lemmas all follow the same pattern. If an interpretation is expected not to be a model, it is shown that the syntactic operator generates a rule, that is not satisfied by it, and respectively that all rules are satisfied by it, if the interpretation should be a model.
To do this, we always consider rules from the original program that are not or respectively are satisfied by an original interpretation. When looking for a rule that is not satisfied by an interpretation specifically, we make case distinctions over the form of the rule, namely where $q$ occurs within the body or the head. Some of these cases can then be ruled out beforehand, because for example, a interpretation $\tuple{X,Y\cup \{q\}}$ satisfies all rules that have $\nf q$ in their bodies. So, without always referring back to them, the proofs use the observations of Lem.~\ref{lem_aux_proof}.
\end{proof}

\begin{theorem*}{thm:fSPsatSP}
Let $P$ be a program over $\Sigma$ and $q\in \Sigma$. If $\tuple{P,\{q\}}$ does not satisfy $\Omega$, then $\fSPOP$ satisfies \pSP$_{\tuple{P,\{q\}}}$.
\end{theorem*}

\begin{proof}
According to Thm.~4 of \cite{GoncalvesKL-ECAI16}, every $\fgt\in\FSPOP$ satisfies $\pSP_{\tuple{P,V}}$ for every $\tuple{P,V}$ that does not satisfy $\Omega$. Hence, when applying Thm.~\ref{thm:FSPsingleVariable}, we have:
$$\as{\fSP{P}{q}\cup R}=\as{\f{P}{\{q\}}\cup R}=\as{P\cup R}_{\parallel \{q\}}$$
for all programs $R$ over $\sign\setm \{q\}$, if $\tuple{P,\{q\}}\notin \Omega$.
\end{proof}

\begin{proposition*}{prop:FSPsatSI}
Let $P$ be a program over $\Sigma$ and $q\in \Sigma$. Then,
$\fSP{P\cup R}{q}=\fSP{P}{q}\cup R$, for all programs $R$ over $\Sigma\setm\{q\}$.
\end{proposition*}

\begin{proof}
This follows directly from the construction of new rules. Rules that do not contain $q$ are directly added to the result, without any modifications. 
\end{proof}

\begin{theorem*}{prop:FSPsatwSP}
Let $P$ be a program over $\Sigma$ and $q\in \Sigma$. Then,
$\as{P\cup R}_{\parallel \{q\}}\subseteq \as{\fSP{P}{q}\cup R}$, for all programs $R$ over $\sign\setm \{q\}$.
\end{theorem*}
\begin{proof}
According to Thm.~3 of \cite{GoncalvesKLW17} the class \FSPOP\ satisfies the the property \pwSP, which means that for each $\fgt\in \FSPOP$, program $P$ and $V\subseteq \sign$, we have ${\as{P\cup R}_{\parallel V}}\subseteq{\as{\f{P}{V}\cup R}}$, for all programs $R$ over $\sign\setm V$. Applying Thm.~\ref{thm:FSPsingleVariable} we get:
$$\as{P\cup R}_{\parallel \{q\}}\subseteq\as{\f{P}{\{q\}}\cup R}=\as{\fSP{P}{q}\cup R}$$
\end{proof}

\begin{proposition*}{prop:FSPsatSE}
Let $P$ and $P'$ be programs over $\Sigma$ and $q\in \Sigma$. If $P\equiv P'$ then $\fSP{P}{q}\equiv \fSP{P'}{q}$.
\end{proposition*}
\begin{proof}
According to Prop.~1 of \cite{GoncalvesKLW17} the class \FSPOP\ satisfies the the property \pSE, which means that for each $\fgt\in \FSPOP$, extended programs $P$ and $P'$, and $V\subseteq \sign$: if $P\equiv P'$, then
$\f{P}{V}\equiv\f{P'}{V}$.
Applying Thm.~\ref{thm:FSPsingleVariable} twice we get:
$$\fSP{P}{q}\equiv\f{P}{\{q\}}\equiv\f{P'}{\{q\}}\equiv\fSP{P'}{q}$$
\end{proof}

\begin{theorem*}{thm:qforgetable_implies_sp}
Let $P$ be a program over $\Sigma$, and $q\in \Sigma$. If $P$ is $q$-forgettable, then $\langle P,\{q\}\rangle$ does not satisfy $\Omega$.
\end{theorem*}
\begin{proof}
In the following we show,
\begin{itemize}
    \item[1)] If $\Omega$ holds for $\langle P,\{q\}\rangle$, there must be a self-cycle for $q$, i.e. $R_3\not=\emptyset$,
    \item[2)] Having $q$ only appear in self-sycles, implies that $\Omega$ does not hold, and
    \item[3)] Having the fact $q\la$ in $P$ implies that $\Omega$ does not hold,
\end{itemize}
Which suffices as proof of Thm.~\ref{thm:qforgetable_implies_sp}.\\
1)\\
We recall the $\Omega$-criterion:
Let $P$ be a program over $\sign$ and $V\subseteq \sign$. 
An instance $\tuple{P,V}$ \emph{satisfies criterion $\Omega$} if there exists $Y\subseteq \sign\text{\textbackslash} V$ such that the set of sets
\[ \SRel^Y=\{\RA_{\tuple{P,V}}^{Y,A}\mid A\in \Rel_{\tuple{P,V}}^Y\}\]
 is non-empty and has no least element, where 
\begin{align*}
\RA^{Y,A}_{\tuple{P,V}} & =\{X\text{\textbackslash} V\mid \tuple{X,Y\cup A}\in \HT(P)\} \\ 
\Rel_{\tuple{P,V}}^Y & =\{A\subseteq V\mid \tuple{Y\cup A,Y\cup A}\in \HT(P) \text{ and } \\
& \hspace{0.5cm} \nexists A'\subset A \text{ s.t.\ }\tuple{Y\cup A',Y\cup A}\in \HT(P)\}.
\end{align*}
For $\mathcal{R}_{\tuple{P,\{q\}}}^Y$ to be non-empty and have no least element, surely it has to have at least two elements. Then $\RA^{Y,A}_{\tuple{P,\{q\}}}$ must have at least two elements as well, which is only the case if
\begin{itemize}
    \item $\langle Y,Y\rangle\models P$,
    \item $\langle Y,Y\cup\{q\}\rangle\not\models P$, and
    \item $\langle Y\cup\{q\},Y\cup\{q\}\rangle\models P$.
\end{itemize}
We use the observations of Lem.~\ref{lem_aux_proof} to make a pigeonhole argument that $R_3\not=\emptyset$:\\
$\langle Y,Y\cup\{q\}\rangle$ can only be contradicted by $r\in R$, $r_2\in R_2$, $r_3\in R_3$ and $r_4\in R_4$. Any $r\in R\cup R_2$ that contradicts $\langle Y,Y\cup\{q\}\rangle$ also contradicts $\langle Y\cup\{q\},Y\cup\{q\}\rangle$, and any $r_4\in R_4$ that contradicts $\langle Y,Y\cup\{q\}\rangle$ also contradicts $\langle Y,Y\rangle$. Therefore for a forgetting instance $\langle P,\{q\}\rangle$ to satisfy $\Omega$, P must indeed contain a self-cycle on $q$.

2)\\
If there are no rules $r\in R_0\cup R_1\cup R_2\cup R_4$, for any $Y\in\sign(P)\setminus \{q\}$ the reduced programs $P^{Y}$ and $P^{Y\cup\{q\}}$ must have the same classical models. Then $\mathcal{R}_{\tuple{P,\{q\}}}^Y$ cannot have more than one element. Then $\langle P,\{q\}\rangle$ does not satisfy $\Omega$.

3)\\
Like in (1), for a forgetting instance $\langle P,\{q\}\rangle$ to satisfy $\Omega$, there must be a $Y\in\sign(P)\setminus \{q\}$, such that:
\begin{itemize}
    \item $\langle Y,Y\rangle\models P$,
    \item $\langle Y,Y\cup\{q\}\rangle\not\models P$, and
    \item $\langle Y\cup\{q\},Y\cup\{q\}\rangle\models P$.
\end{itemize}
If $q\la\in P$, then $\langle Y,Y\rangle\not\models P$ for all $Y\in\sign(P)\setminus \{q\}$. Then $\langle P,\{q\}\rangle$ does not satisfy $\Omega$.
\end{proof}

In order to introduce the concrete semantical operator, $\fSem$, we recall some necessary notions related to countermodels in here-and-there \cite{CabalarF07}, which have been used previously in a similar manner for computing concrete results of forgetting for  classes of forgetting operators based on HT-models \cite{WangWZ13,WangZZZ14}.

Essentially, the HT-interpretations that are not HT-models of $P$ (hence the name countermodels) can be used to determine rules, that, if conjoined, result in a program $P'$ that is strongly equivalent to $P$. More precisely, an HT-interpretation $\tuple{X,Y}$ is an \emph{HT-countermodel} of $P$ if $\tuple{X,Y}\not \models P$.
We also define the following rules:
\begin{align}
\cmr{X,Y}{} & = (Y\setm X) \la X, \nf (\sign\setm Y), \nf\nf (Y\setm X)\\
\cmr{Y,Y}{} & = \emptyset \la Y, \nf (\sign\setm Y)
\end{align}

The operator $\fSem$ can then be defined for each program $P$ and set of atoms $V$ as 
\begin{align*}
\fSem(P,V)  = &\{\cmr{X,Y}{}\mid \tuple{X,Y}\notin \classMFSP \text{ and } \tuple{Y,Y}\in \classMFSP\}\\
& \cup \{\cmr{Y,Y}{}\mid \tuple{Y,Y}\notin \classMFSP\}.
\end{align*}
Please note that in the case of forgetting one atom, we abuse notation and write $\fSem(P,q)$ instead of $\fSem(P,\{q\})$. \\
\begin{proposition*}{prop:checking_q-forgettable_complexity}
Let $P$ be a program over $\Sigma$, and $q\in \Sigma$. Deciding if $P$ is $q$-forgettable can be done in linear time.
\end{proposition*}
\begin{proof}
The restrictions posed for $q$-forgettable programs take into account each rule of the program separately. Therefore checking for membership of the class only requires looking at each rule contained in a program once.
\end{proof}

\begin{theorem*}{def:OurForgettingSimple}
Let $P$ be a program over $\Sigma$, and $q\in \Sigma$.
Then, $\fSP{P}{q}$ is constructed using only the derivation rules 1a, 1b and 4.
\end{theorem*}
\begin{proof}
\begin{itemize}
    \item If all occurrences of $q$ are within self-cycles, then $R_0\cup R_1\cup R_2\cup R_4=\emptyset$. Then none of the derivation rules is used and only rules not containing $q$ are transferred to $\fgt_{SP}(P,q)$.
    \item If the fact $q\la$ is contained in $P$. Then $\asdual{q}{R_4}=\asdual{q}{R_1\cup R_4}=\emptyset$, which is why derivation rules 5 and 6 do not produce any rules. Furthermore, derivation rule 1 produces rules that are always more minimal than rules derived by 2, 3 and 7, which is why these derivation rule can be disregarded as well.
    \item If $P$ contains no self-cycle for $q$, then derivation rules 2, 3, 5, 6 and 7 do not apply.
\end{itemize}
\end{proof}

\begin{theorem*}{thm:complexity}
Let $P$ be a program over $\Sigma$ and $p\in\Sigma$. Then, computing $\fSP{P}{q}$ is in EXPT{\scriptsize IME} in the number of rules containing occurrences of $q$ and linear in the remaining rules.
\end{theorem*}
\begin{proof}
This follows directly from the construction of new rules. When used, each of the derivation rules has most has an exponential running time. Each of them can at most be used a polynomial number of times.
\end{proof}

\begin{proposition*}{prop:sizeOfRulesFSem}
Let $P$ be a program over $\sign$ and $q\in \Sigma$.
Then, for each $r\in \fSem(P,q)$, we have that $\lvert r\rvert\geq \lvert \Sigma\rvert$.
\end{proposition*}
\begin{proof}
The result follows easily from the observation that the rules generated using the counter-models construction have at least all atoms of the signature. 
\end{proof}

\begin{proposition*}{prop:boundsOps}
Let $P$ be a program over $\sign$ and $q\in \Sigma$.
Then, 
\begin{itemize}
 
\item[$\bullet$] $dist(P,\fSem(P,q))\geq (\lvert \fSem(P,q) \rvert -\lvert P \rvert)\times \lvert \sign \rvert$;

\item[$\bullet$] $dist(P,\fSPOP(P,q))\leq (\lvert \fSem(P,q) \rvert +\lvert P \rvert)\times 2\lvert \sign \rvert$.

\end{itemize}

\end{proposition*}

\begin{proof}
The upper bound for $\fSPOP(P,q)$ follows from the observation that an upper bound for the distance of two programs is the sum of the sizes of each rule of the programs. In this case, assuming that both $P$ and $\fSPOP(P,q)$ are in the normal form, the size of a rule is limited by 2$\lvert \sign \rvert$, since an atom may appear at most twice in a rule.

The lower bound $\fSem(P,q)$ is obtained by considering a limit case of a mapping between $P$ and $\fSem(P,q)$ such that the distance between the rules associated with the mapping is 0. In this case we would be left with the size of the remaining $\lvert \fSem(P,q) \rvert -\lvert P \rvert$ rules of $\fSem(P,q)$, each of which, according to Prop.~\ref{prop:sizeOfRulesFSem}, has a size of at least $\lvert \Sigma\rvert$.
\end{proof}

\begin{proposition*}{prop:sizesComparing}
Let $P$ be a program over $\sign$ and $q\in \Sigma$.
For each rule $r\in \fSPOP(P,q)$ there are at least $2^{D}$ rules in $\fSem(P,q)$, with $D=Min(\lvert H(r)\rvert, \lvert \Sigma\setm\Sigma(r)\rvert)$. 
\end{proposition*}
\begin{proof}
First note that from Thm.~\ref{thm:FSPsingleVariable} we have that $\HT(\fSPOP(P,q))=\HT(\fSem(P,q))$, and therefore the counter-models of $\fSPOP(P,q)$ are exactly the counter-models used to construct the rules of $\fSem(P,q)$.

Now let $r\in \fSPOP(P,q)$ such that $r$ in non-tautological and let $S\subseteq \head{r}\cup \Sigma\setm\Sigma(r)$.
First suppose first that $\tuple{\pbody{r}\cup \nnbody{r}\cup S,\pbody{r}\cup \nnbody{r}\cup S}$ is an HT-model of $\fSPOP(P,q)$. In this case, since $\tuple{\pbody{r},\pbody{r}\cup \nnbody{r}\cup S}$ is always a counter-model of $r$, the following rule is in $\fSem(P,q)$:  
\[(\nnbody{r}\cup S)\la \pbody{r}, \nf (\sign\setm \sign(r)), \nf \nf (\nnbody{r}\cup S).\]

If $\tuple{\pbody{r}\cup \nnbody{r}\cup S,\pbody{r}\cup \nnbody{r}\cup S}$ is not an HT-model of $\fSPOP(P,q)$, then by construction, the following rule is in $\fSem(P,q)$:
\[\la \pbody{r}, \nnbody{r}, S, \nf (\Sigma) .\]
\end{proof}

\begin{lemma}\label{lem_aux_proof}
Let $P$ be a program in normal form over $\Sigma$, $q\in \Sigma$, $X\subseteq Y\subseteq \sign\setminus \{q\}$, and $R$, $R_0$, $R_1$, $R_2$, $R_3$, $R_4$ following subsets of $P$:
\begin{align*}
R_\text{ }&:= \{r\in P\mid q\not\in \Sigma(r)\} 
&
R_2&:= \{r\in P\mid not\ not\ q \in B(r), q\not\in H(r)\}
\\
R_0&:= \{r\in P\mid q\in B(r)\}
&
R_3&:= \{r\in P\mid not\ not\ q \in B(r), q\in H(r)\}
\\
R_1&:= \{r\in P\mid not\ q\in B(r)\}
&
R_4&:= \{r\in P\mid not\ not\ q \not\in B(r), q\in H(r)\}
\end{align*}
Then,
\begin{align*}
\tuple{X,Y}&\models R_0\cup R_2\cup R_3\\
\tuple{X,Y\cup \{q\}}&\models R_0\cup R_1\\
\tuple{X\cup \{q\},Y\cup \{q\}}&\models R_1\cup R_3\cup R_4
\end{align*}
Therefore,
\begin{align*}
\tuple{X,Y}\not\models P&\leftrightarrow\exists r\in R\cup R_1\cup R_4:\tuple{X,Y}\not\models r\\
\tuple{X,Y\cup \{q\}}\not\models P&\leftrightarrow\exists r\in R\cup R_2\cup R_3\cup R_4:\tuple{X,Y\cup \{q\}}\not\models r\\
\tuple{X\cup \{q\},Y\cup \{q\}}\not\models P&\leftrightarrow\exists r\in R\cup R_0\cup R_2:\tuple{X\cup \{q\},Y\cup \{q\}}\not\models r
\end{align*}
Also, given that $Y\models P$ and $Y\cup \{q\}\models P$,
$$\forall r_4\in R_4:\tuple{X,Y}\models r_4\leftrightarrow \tuple{X,Y\cup \{q\}}\models r_4$$ 
and
$$\forall r_2\in R_2:\tuple{X,Y\cup \{q\}}\models r_2\leftrightarrow \tuple{X\cup \{q\},Y\cup \{q\}}\models r_2$$
\end{lemma}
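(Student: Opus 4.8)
The plan is to reason directly from the reduct-based definition of HT-satisfaction, applied rule by rule. For a single rule $r$ and an HT-interpretation $\tuple{X,Z}$, we have $\tuple{X,Z}\models r$ exactly when (the there-condition) $Z\models r$ classically, and (the here-condition) $X\models r^{Z}$, where the reduct $r^{Z}$ is $\head{r}\la\pbody{r}$ provided $\nbody{r}\cap Z=\emptyset$ and $\nnbody{r}\subseteq Z$, and is discarded otherwise. I would first record the elementary facts that $\tuple{X,Z}$ satisfies $q$ iff $q\in X$, satisfies $\nf q$ iff $q\notin Z$, and satisfies $\nf\nf q$ iff $q\in Z$. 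Since $P$ is in normal form, for every rule at most one of $q,\nf q,\nf\nf q$ occurs in the body, and $q\in\head{r}$ forbids $q$ and $\nf q$ in the body; hence $R,R_0,R_1,R_2,R_3,R_4$ are pairwise disjoint and cover $P$, a fact used repeatedly.

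For the three membership statements I would check, for each $R_i$, that both conditions hold, keeping in mind $q\notin X\cup Y$. For $\tuple{X,Y}$: if $r\in R_0$ then $q\in\pbody{r}$ is unmet by $X$ and by $Y$, so the body fails both classically and in the reduct; if $r\in R_2\cup R_3$ then $q\in\nnbody{r}$ with $q\notin Y$ kills the body and removes $r$ from the reduct; either way $r$ is satisfied. For $\tuple{X\cup\{q\},Y\cup\{q\}}$: if $r\in R_1$ then $q\in\nbody{r}$ with $q\in Y\cup\{q\}$ kills the body, and if $r\in R_3\cup R_4$ then $q\in\head{r}$ lies in both worlds, so the head is met here and there; again $r$ holds. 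For $\tuple{X,Y\cup\{q\}}$ and $r\in R_1$ the body is again killed; the only delicate case is $r\in R_0$: the here-condition is fine because $q\in\pbody{r}$ is still unmet by $X$, but the classical there-condition is not automatic, and I would discharge it using the assumption $Y\cup\{q\}\models P$ (so that $Z\models r$ for every $r\in P$). Thus the second membership statement is the one place where a model assumption is genuinely needed; the other two hold outright.

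The three ``therefore'' equivalences are then immediate: since the six sets partition $P$, and a program is violated iff one of its rules is, every rule that can possibly be falsified by a given interpretation must lie outside the sets just shown to be satisfied, which yields precisely the residual sets $R\cup R_1\cup R_4$, $R\cup R_2\cup R_3\cup R_4$, and $R\cup R_0\cup R_2$. For the last two equivalences I would use that, under $Y\models P$ and $Y\cup\{q\}\models P$, the there-condition holds for every rule, so satisfaction collapses to the here-condition. For $r_4\in R_4$, $q$ occurs only in $\head{r_4}$, so the reduct keep-condition and the reduct rule $\head{r_4}\la\pbody{r_4}$ coincide for the there-worlds $Y$ and $Y\cup\{q\}$, and the here-world $X$ is common; hence $\tuple{X,Y}\models r_4$ iff $\tuple{X,Y\cup\{q\}}\models r_4$. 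For $r_2\in R_2$, $q$ occurs only as $\nf\nf q$, so the reduct rule $\head{r_2}\la\pbody{r_2}$ mentions no $q$; since $q\notin\pbody{r_2}\cup\head{r_2}$, adding $q$ to the here-world changes neither whether $\pbody{r_2}$ is covered nor whether $\head{r_2}$ is met, giving $\tuple{X,Y\cup\{q\}}\models r_2$ iff $\tuple{X\cup\{q\},Y\cup\{q\}}\models r_2$.

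I expect the main obstacle to be purely organizational: keeping the two HT-conditions (classical satisfaction at the there-world, reduct satisfaction at the here-world) separate throughout the case analysis, and correctly tracking the double-negation semantics $\tuple{X,Z}\models\nf\nf q$ iff $q\in Z$, which is what makes $R_2$ and $R_3$ behave differently from $R_0$. The only genuinely non-syntactic point is recognising that the $R_0$ case of the second membership claim relies on $Y\cup\{q\}\models P$ rather than following from the shape of the rule alone.
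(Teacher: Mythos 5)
Your proof is correct and, at its core, follows the same strategy as the paper's: classify the rules of $P$ by how $q$ occurs in them (using the normal form to get that $R,R_0,\dots,R_4$ are pairwise disjoint and cover $P$), check each of the three interpretations class by class, derive the three equivalences from coverage, and obtain the final two equivalences by observing that under $Y\models P$ and $Y\cup\{q\}\models P$ satisfaction collapses to the reduct condition. The genuine difference is one of rigor, and it matters: the paper's proof verifies only the ``here'' side of satisfaction (e.g., for the second claim, ``the interpretation neither satisfies $q$, nor $\nf q$''), silently ignoring the classical ``there'' condition that the paper's own definition of HT-model ($Y\models P$ and $X\models P^Y$) imposes, whereas you track both conditions throughout. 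That extra care is exactly what exposes the one delicate point: for $r\in R_0$ and the interpretation $\tuple{X,Y\cup \{q\}}$, the there-condition is not automatic, since $q$ is true in the there-world. Indeed, for $P=\{a\la q\}$ (which is in normal form) and $X=Y=\emptyset$, we have $\tuple{\emptyset,\{q\}}\not\models a\la q$ because $\{q\}$ classically violates the rule; so the lemma's second membership claim, and with it the second ``therefore'' equivalence, is false as literally stated and becomes true only under the additional hypothesis $Y\cup\{q\}\models P$ (or at least $Y\cup\{q\}\models R_0$), which you correctly invoke. This repaired reading is consistent with how the lemma is actually applied in Lemmas~\ref{lem_case_1}--\ref{lem_case_6}: whenever the second equivalence is needed there, a premise guaranteeing $Y\cup\{q\}\models P$ is available, and the remaining uses rely only on the first and third equivalences, which, as you note, hold outright. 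So your proposal is not merely an equivalent write-up; it is a sound version of an argument whose official proof has a small but real gap, and the only criticism is that you should state explicitly that you are strengthening the hypothesis of (or restricting the scope of) the second claim rather than proving the lemma exactly as worded.
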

\begin{proof}
The proof of this result follows from the following simple observations:
\begin{itemize}
    \item $\tuple{X,Y}\models R_0\cup R_2\cup R_3$, because the interpretation neither satisfies $q$, nor $\nf\nf q$.
    \item $\tuple{X,Y\cup \{q\}}\models R_0\cup R_1$, because the interpretation neither satisfies $q$, nor $\nf q$.
    \item $\tuple{X\cup \{q\},Y\cup \{q\}}\models R_1\cup R_3\cup R_4$, because the interpretation does not satisfy $\nf q$ in the rule bodies of $R_1$, and satisfies $q$ in the rule heads of $R_3\cup R_4$
\end{itemize}
Since $\langle R,R_0,R_1,R_2,R_3,R_4\rangle$ is a partition of $P$, and we know which types of rules cannot contradict the respective interpretations, if an interpretation is no model for the program $P$, it must be contradicted by one of the remaining types of rules.

If $Y\models P$ and $Y\cup \{q\}\models P$, then $\forall r_4\in R_4:\tuple{X,Y}\models r_4\leftrightarrow \tuple{X,Y\cup \{q\}}\models r_4$, because rules of $R_4$ are not dependent on whether $q\in Y$.

$\forall r_2\in R_2:\tuple{X,Y\cup \{q\}}\models r_2\leftrightarrow \tuple{X\cup \{q\},Y\cup \{q\}}\models r_2$, because rules of $R_2$ are not dependent on whether $q\in X$.
\end{proof}

\begin{lemma}\label{lem_case_1}
Let $P$ be a program over $\Sigma$, $q\in \Sigma$, $Y\subseteq \sign\setminus \{q\}$ and $\tuple{Y,Y} \not\in \HT(P)$ and $\tuple{Y,Y\cup \{q\}} \not\in \HT(P)$ and $\tuple{Y\cup \{q\}, Y\cup \{q\}}\not\in \HT(P)$.\\
Then $\HT_Y(\fSP{P}{q})=\HT_Y(\f{P}{\{q\}})$ for a representative $\fgt\in\FSPOP$.
\end{lemma}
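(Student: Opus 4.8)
The plan is to show that both sides are empty. For the right-hand side, I would first compute $\Rel_{\tuple{P,\{q\}}}^Y$: the only candidate sets $A\subseteq\{q\}$ are $\emptyset$ and $\{q\}$, and the hypotheses $\tuple{Y,Y}\notin\HT(P)$ and $\tuple{Y\cup\{q\},Y\cup\{q\}}\notin\HT(P)$ exclude both, so $\Rel_{\tuple{P,\{q\}}}^Y=\emptyset$ and hence $\SRel^Y=\emptyset$. Consequently $Y$ contributes no HT-models to $\HT(\f{P}{\{q\}})$ for every $\fgt\in\FSPOP$, i.e.\ $\HT_Y(\f{P}{\{q\}})=\emptyset$; in particular the choice of representative is immaterial. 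It then remains to prove $\HT_Y(\fSP{P}{q})=\emptyset$. Since any $\tuple{X,Y}\in\HT$ forces $\tuple{Y,Y}\in\HT$, and $\tuple{Y,Y}\in\HT(Q)$ iff $Y$ is a classical model of $Q$, it suffices to exhibit a single rule of $\fSP{P}{q}$ violated by $\tuple{Y,Y}$. By Prop.~\ref{prop:normalForm} the final normalization preserves HT-models, so I may equivalently exhibit a rule violated by $\tuple{Y,Y}$ in the pre-normalization program $P''$ of Def.~\ref{def:OurForgetting}.

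Next I would translate the two non-model hypotheses into violated rules via Lemma~\ref{lem_aux_proof}. Note first that $\tuple{Y\cup\{q\},Y\cup\{q\}}\notin\HT(P)$ already gives $\tuple{Y,Y\cup\{q\}}\notin\HT(P)$, so the content of Case~1 is simply that neither $Y$ nor $Y\cup\{q\}$ is a classical model of $P$. Applying Lemma~\ref{lem_aux_proof} to $\tuple{Y,Y}$ yields a violated rule $r_a\in R\cup R_1\cup R_4$, and applying it to $\tuple{Y\cup\{q\},Y\cup\{q\}}$ yields a violated rule $r_b\in R\cup R_0\cup R_2$. If either lies in $R$, it is copied verbatim into $P''$ and, mentioning no $q$, is violated by $\tuple{Y,Y}$ as well; this disposes of the trivial cases. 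Otherwise $r_a\in R_1\cup R_4$ and $r_b\in R_0\cup R_2$.

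For the combination I would split on whether some $R_4$-rule is violated at $\tuple{Y,Y}$. If so, I take $r_a\in R_4$ and pair it with $r_b$: derivation rule \textbf{1a} (when $r_b\in R_0$) or \textbf{1b} (when $r_b\in R_2$) then produces a rule whose body is satisfied by $\tuple{Y,Y}$ and whose head is not, using that $r_4$ being violated gives $\tuple{Y,Y}\models\bodywoq{r_4}$ and $Y\cap\headwoq{r_4}=\emptyset$ (normal form ensures $r_4$ has no $q$ in its body), while $r_b$ being violated at $Y\cup\{q\}$ gives $\tuple{Y,Y}\models\bodywoq{r_b}$ and $\head{r_b}\cap Y=\emptyset$ (here one uses that at a total interpretation $\nf\nf\ell$ and $\ell$ agree). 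If no $R_4$-rule is violated, then $\tuple{Y,Y}\models R_4$ and the only violated rule is some $r_a\in R_1$; the key is then whether $\asdual{q}{R_3\cup R_4}$ contains a set $D$ with $\tuple{Y,Y}\models D$. If it does, derivation rule \textbf{4} with this $D$ yields $\headwoq{r_a}\la\bodywoq{r_a}\cup D$, again violated by $\tuple{Y,Y}$; the side condition $D\cap\nf\bodywoq{r_a}=\emptyset$ is automatic, since $\tuple{Y,Y}$ satisfies both $\bodywoq{r_a}$ and $D$ and so cannot satisfy the negation of any body literal of $r_a$.

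The main obstacle is the remaining cyclic subcase, where $\tuple{Y,Y}\models R_4$, the violated rule is in $R_1$, yet no element of $\asdual{q}{R_3\cup R_4}$ is satisfied by $\tuple{Y,Y}$; this genuinely occurs, e.g.\ for $P=\{q\la s,\nf\nf q;\ c\la\nf q;\ a\la q\}$ with $Y=\{s\}$. The crux is that failure of the as-dual forces a self-cycle rule $r^*\in R_3$ with $\bodywoq{r^*}\subseteq Y$ and $\headwoq{r^*}\cap Y=\emptyset$: absence of a satisfied $D$ means some rule of $R_3\cup R_4$ admits neither a false body literal nor a true head atom at $Y$, and such a rule cannot lie in $R_4$ (it would be violated at $\tuple{Y,Y}$, contradicting $\tuple{Y,Y}\models R_4$), hence it lies in $R_3$. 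With this $r^*$ I would invoke derivation rule \textbf{5}, taking $r_3=r^*$, $r=r_b$, and a witness $D\in\asdual{q}{R_4}$ satisfied by $\tuple{Y,Y}$ (which exists because each $R_4$-rule, being satisfied at the total interpretation with $q\notin Y$, offers a false body literal or a true head atom to select, and the side condition again holds automatically): every block of the generated body---$\bodywoq{r_a}$, $\nf(\head{r_b})$, $\nf(\headwoq{r^*})$, $\nf\nf(\bodywoq{r_b}\cup\bodywoq{r^*})$, and $D$---is then satisfied by $\tuple{Y,Y}$ while its head $\headwoq{r_a}$ is not. Collecting the cases shows $\tuple{Y,Y}\not\models P''$, whence $\HT_Y(\fSP{P}{q})=\emptyset=\HT_Y(\f{P}{\{q\}})$. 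I expect the bookkeeping in this last subcase---establishing the existence of $r^*$ and checking that every block of rule \textbf{5} is satisfied---to be the technically delicate part.
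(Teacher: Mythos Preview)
Your proposal is correct and follows essentially the same approach as the paper's proof: establish $\HT_Y(\f{P}{\{q\}})=\emptyset$ via $\Rel^Y=\emptyset$, then exhibit a rule of $P''$ violated by $\tuple{Y,Y}$ by combining a violated rule from $R_1\cup R_4$ with one from $R_0\cup R_2$, splitting on whether the former lies in $R_4$ (use \textbf{1a}/\textbf{1b}) or in $R_1$ (use \textbf{4} if the as-dual over $R_3\cup R_4$ supplies a satisfied $D$, else extract a witness $r^*\in R_3$ and use \textbf{5}). Your version is in fact slightly more careful than the paper's: you treat the trivial $R$ case explicitly, and you verify the side condition $D\cap\nf\bodywoq{r'}=\emptyset$ for rules \textbf{4} and \textbf{5}, which the paper's proof passes over in silence. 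One cosmetic point: writing ``$\bodywoq{r^*}\subseteq Y$'' is imprecise since the body may contain negated literals; what you mean (and use) is $\tuple{Y,Y}\models\bodywoq{r^*}$.
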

\begin{proof}
In this setting, neither $\{q\}$, nor $\emptyset$ are relevant, i.e. $Rel^Y_{(P,\{q\})} = \emptyset$, and therefore\\
$\HT_Y(\f{P}{\{q\}}) = \emptyset$.\\
From the premises it follows directly that $Y\not\models P$ and $Y\cup \{q\}\not\models P$. The former implies that $\exists r\in R_1\cup R_4:Y\not\models r$. The latter that $\exists r\in R_0\cup R_2:Y\cup \{q\}\not\models r$. Then there are two cases:
\begin{itemize}
    \item $\exists r\in R_0\cup R_2:Y\cup \{q\}\not\models r$ and $\exists r_4\in R_4:Y\not\models r_4$
    \item $\exists r\in R_0\cup R_2:Y\cup \{q\}\not\models r$, $\not\exists r_4\in R_4:Y\not\models r_4$ and $\exists r_1\in R_1:Y\not\models r_1$
\end{itemize}
In the first case, by derivation rule 1a or 1b we construct a new rule\\
$\head{r}\cup \headwoq{r_4}\la \bodywoq{r}\cup \body{r_4}$ or $\head{r}\la \bodywoq{r}\cup \nf\nf(\body{r_4})\cup \nf(\headwoq{r_4})$ respectively, for both of which we have that their heads are false for $Y$ and their bodies are true. Hence $\HT_Y(\fSP{P}{q})=\emptyset$
In the second case, we make the additional assumption that $\not\exists r_3\in R_3:Y\not\models r_3$. Then $\exists D\in \asdual{q}{R_3\cup R_4}:Y\models D$. Therefore $Y\not\models \head{r_1}\la\bodywoq{r_1}\cup D$ which is constructed by 4. Hence $\HT_Y(\fSP{P}{q})=\emptyset$.
If in turn $\exists r_3\in R_3:Y\not\models r_3$, then\\
$Y\not\models \head{r_1}\la\bodywoq{r_1}\cup \nf\nf(\bodywoq{r_3})\cup \nf(\headwoq{r_3})\cup \nf\nf(\bodywoq{r})\cup \nf(\body{r})\cup D$ which is constructed by rule 5. But then also $\HT_Y(\fSP{P}{q})=\emptyset$.
In any case we have $\HT_Y(\fSP{P}{q})=\emptyset$ and therefore $\HT_Y(\fSP{P}{q})=\HT_Y(\f{P}{\{q\}})$.
\end{proof}

\begin{lemma}\label{lem_case_2}
Let $P$ be a program over $\Sigma$, $q\in \Sigma$, $Y\subseteq \sign\setminus \{q\}$ and $\tuple{Y,Y} \not\in \HT(P)$ and $\tuple{Y,Y\cup \{q\}} \not\in \HT(P)$ and $\tuple{Y\cup \{q\},Y\cup \{q\}} \in \HT(P)$.\\
Then $\HT_Y(\fSP{P}{q})=\HT_Y(\f{P}{\{q\}})$ for a representative $\fgt\in\FSPOP$.
\end{lemma}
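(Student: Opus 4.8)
The plan is to compute each side for the fixed $Y$ and then match them interpretation by interpretation, exactly in the model/countermodel style announced for the proof of Thm.~\ref{thm:FSPsingleVariable}. First I would determine the semantic target. With $V=\{q\}$ the only possible $A$ are $\emptyset$ and $\{q\}$: the hypothesis $\tuple{Y,Y}\notin\HT(P)$ excludes $\emptyset$ from $\Rel^Y_{\tuple{P,\{q\}}}$, whereas $\tuple{Y\cup\{q\},Y\cup\{q\}}\in\HT(P)$ together with $\tuple{Y,Y\cup\{q\}}\notin\HT(P)$ (which refutes the only smaller candidate $A'=\emptyset$) places $\{q\}$ in $\Rel^Y_{\tuple{P,\{q\}}}$. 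Hence $\Rel^Y_{\tuple{P,\{q\}}}=\{\{q\}\}$, the family $\{\RA^{Y,A}_{\tuple{P,\{q\}}}\mid A\in\Rel^Y_{\tuple{P,\{q\}}}\}$ is the singleton $\{\RA^{Y,\{q\}}_{\tuple{P,\{q\}}}\}$, and its intersection equals $\RA^{Y,\{q\}}_{\tuple{P,\{q\}}}=\{X\setm\{q\}\mid\tuple{X,Y\cup\{q\}}\in\HT(P)\}$. Therefore the lemma reduces to showing, for every $X'\subseteq Y$, that $\tuple{X',Y}$ is an HT-model of $\fSP{P}{q}$ iff $\tuple{X',Y\cup\{q\}}\in\HT(P)$ or $\tuple{X'\cup\{q\},Y\cup\{q\}}\in\HT(P)$.

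Before the two directions I would record the structural facts of this case. From $\tuple{Y\cup\{q\},Y\cup\{q\}}\in\HT(P)$ we get $Y\cup\{q\}\models P$, hence $Y\models R$; from $\tuple{Y,Y}\notin\HT(P)$ we get $Y\not\models P$, so by Lem.~\ref{lem_aux_proof} every rule of $P$ violated by $\tuple{Y,Y}$ lies in $R_1\cup R_4$. I would also extract the crucial fact that $R_3\cup R_4\neq\emptyset$: since $\tuple{Y,Y\cup\{q\}}\notin\HT(P)$ but $\tuple{Y\cup\{q\},Y\cup\{q\}}\in\HT(P)$, there is a rule $r$ whose reduct relative to $Y\cup\{q\}$ has the form $\head{r}\la\pbody{r}$ with $\pbody{r}\subseteq Y$ and $\head{r}\cap Y=\emptyset$, repaired only by adding $q$ to the here-world, which forces $q\in\head{r}$, i.e.\ $r\in R_3\cup R_4$. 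This guarantees that the $R_3$- and $R_4$-indexed derivation families are actually instantiated. Taking $X'=Y$ in the model direction below then yields $Y\models\fSP{P}{q}$, so $\HT_Y(\fSP{P}{q})$ is indeed governed by the reduct as expected.

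For the model direction I would assume $X'\in\RA^{Y,\{q\}}_{\tuple{P,\{q\}}}$, split on which of the two disjuncts provides the witness, and check family by family that every generated rule is satisfied by $\tuple{X',Y}$: the unchanged rules $R$ are immediate because $\tuple{X',Y}$ agrees with the witness off $q$; the substitutions \textbf{1a}/\textbf{1b} hold because the combined $R_0$/$R_2$ and $R_4$ source rules are satisfied by the witness; and for the model-generating and as-dual families \textbf{2a}, \textbf{2b}, \textbf{3a}, \textbf{3b}, \textbf{4}, \textbf{5}, \textbf{6}, \textbf{7}, each literal introduced (an element of an $\asdual{q}{\cdot}$ set, or a $\nf$/$\nf\nf$ of a head/body) traces back to a $P$-rule whose satisfaction by the witness makes either the new body fail or the new head hold. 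Lem.~\ref{lem_aux_proof} is used repeatedly to know which $R_i$ are automatically satisfied by $\tuple{X',Y\cup\{q\}}$ and by $\tuple{X'\cup\{q\},Y\cup\{q\}}$.

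The countermodel direction is the main obstacle. Assuming both $\tuple{X',Y\cup\{q\}}\notin\HT(P)$ and $\tuple{X'\cup\{q\},Y\cup\{q\}}\notin\HT(P)$, I would produce one violated generated rule. By Lem.~\ref{lem_aux_proof} the first interpretation supplies a violated rule in $R\cup R_2\cup R_3\cup R_4$ and the second one in $R\cup R_0\cup R_2$; a culprit in $R$ finishes immediately, since it survives unchanged in $\fSP{P}{q}$ and already refutes $\tuple{X',Y}$. Otherwise I would combine the two culprits according to where they sit — $R_4$ paired with $R_0$ or $R_2$ yields \textbf{1a}/\textbf{1b}; any $R_3$ culprit (whose availability is secured by $R_3\cup R_4\neq\emptyset$ together with the self-support phenomenon) drives the $R_3$-indexed rules \textbf{2a}, \textbf{2b}, \textbf{3a}, \textbf{3b}, \textbf{5}, \textbf{6}, \textbf{7}, and an $R_2$ culprit (which by the last observation of Lem.~\ref{lem_aux_proof} violates both interpretations) is routed through \textbf{1b}/\textbf{3b} with an $R_3$/$R_4$ partner — and then verify that the selected rule's body is satisfied while its head is refuted by $\tuple{X',Y}$. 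The delicate part is exactly this matching: for each admissible pairing of culprit locations one must exhibit a concrete derivation-rule instance meeting its side conditions, in particular that a suitable $D\in\asdual{q}{\cdot}$ exists and respects guards such as $D\cap\nf\bodywoq{r'}=\emptyset$, and one must check that the final normal-form simplification does not delete the violated rule. Using $q\notin X'$ against $q\in Y\cup\{q\}$ to line up the $\nf$/$\nf\nf$ literals produced by the as-dual is where the real care lies; everything else is routine once the reduction of the first paragraph is in place.
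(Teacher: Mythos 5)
Your first paragraph is correct and coincides with the paper's own opening move: with $V=\{q\}$ the three hypotheses force $\Rel^Y_{\tuple{P,\{q\}}}=\{\{q\}\}$, so the lemma reduces to showing that $\tuple{X,Y}\in\HT_Y(\fSP{P}{q})$ iff $\tuple{X,Y\cup\{q\}}\in\HT(P)$ or $\tuple{X\cup\{q\},Y\cup\{q\}}\in\HT(P)$, and the two inclusions are then to be checked family by family using Lem.~\ref{lem_aux_proof}. The genuine gap is that both of your directions are missing the same key ingredient, which the paper derives from the hypotheses before doing anything else: either (a) there is $r_4\in R_4$ with $\tuple{Y,Y}\not\models r_4$ and $\tuple{Y,Y\cup\{q\}}\not\models r_4$, or (b) there are $r_1\in R_1$ with $\tuple{Y,Y}\not\models r_1$ and $r_3\in R_3$ with $\tuple{Y,Y\cup\{q\}}\not\models r_3$. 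These are culprits at the level of $Y$ itself (body true, head other than $q$ false \emph{in $Y$}), and their role is to falsify every $D\in\asdual{q}{R_3\cup R_4}$ and every $D\in\asdual{q}{R_1\cup R_4}$ in any $\tuple{X,Y}$ — note that the truth of the $\nf$/$\nf\nf$ literals in $D$ depends only on $Y$, not on $X$. You extract only a weakened fragment of this (a rule in $R_3\cup R_4$, summarized as non-emptiness and used to say the derivation families "are instantiated"), you never extract the $R_1\cup R_4$ half, and you never put either to its actual use.

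Concretely, in your model direction the claim that for families \textbf{4} and \textbf{6} each introduced literal "traces back to a $P$-rule whose satisfaction by the witness makes either the new body fail or the new head hold" is false: for $r'\in R_1$ the witness $\tuple{X,Y\cup\{q\}}$ (or $\tuple{X\cup\{q\},Y\cup\{q\}}$) satisfies $r'$ trivially, because $\nf q$ is false when $q$ is in the there-world, so witness satisfaction gives no control over the derived rule $\headwoq{r'}\la\bodywoq{r'}\cup D$; the only way to see that this rule holds at $\tuple{X,Y}$ is to falsify $D$, which requires the $Y$-level culprits (a)/(b), not the witness. Symmetrically, in your countermodel direction the routing "$R_4$ paired with $R_0$ or $R_2$ yields \textbf{1a}/\textbf{1b}" breaks down in the $R_2$ case: a culprit $r_4$ for $\tuple{X,Y\cup\{q\}}$ only yields $\head{r_4}\cap X=\emptyset$, whereas violating the \textbf{1b} rule $\head{r_2}\la\bodywoq{r_2}\cup\nf(\headwoq{r_4})\cup\nf\nf(\body{r_4})$ at $\tuple{X,Y}$ requires $\headwoq{r_4}\cap Y=\emptyset$, which only the premise culprit of (a) supplies; accordingly the paper pairs the $r_2$ culprit with (a) or (b), producing a \textbf{1b} or \textbf{2b} rule (not \textbf{3b}, as you propose). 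Incidentally, the final observation of Lem.~\ref{lem_aux_proof} that you invoke for $R_2$ is stated under the hypothesis $Y\models P$, which fails in this case (the underlying fact about $R_2$ is true unconditionally, but not by that citation). So while your decomposition is exactly the paper's, the case analysis you defer as "the delicate part" is where the entire proof lives, and the facts you have assembled are not sufficient to carry it out.
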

\begin{proof}
In this setting, just $\{q\}$ is relevant, i.e. $Rel^Y_{(P,\{q\})} = \{\{q\}\}$, and therefore\\
$\HT_Y(\f{P}{\{q\}}) =\{\tuple{X\setminus \{q\},Y}\mid \tuple{X,Y\cup \{q\}} \in \HT(P)\}$.\\
First, we proof that $\HT_Y(\fSP{P}{q})\subseteq \HT_Y(\f{P}{\{q\}})$. So we suppose that $\tuple{X,Y}\not\in \HT_Y(\f{P}{\{q\}})$, then $\tuple{X\cup \{q\},Y\cup \{q\}}\not\in \HT(P)$ and $\tuple{X\setminus \{q\},Y\cup \{q\}}\not\in \HT(P)$.\\
From $\tuple{X\cup \{q\},Y\cup \{q\}}\not\in \HT(P)$ it follows that
\begin{itemize}
    \item $\exists r_2\in R_2:\tuple{X\cup \{q\},Y\cup \{q\}}\not\models r_2$, or
    \item $\exists r_0\in R_0:\tuple{X\cup \{q\},Y\cup \{q\}}\not\models r_0$
\end{itemize}
From $\tuple{X\setminus \{q\},Y\cup \{q\}}\not\in \HT(P)$ it follows that
\begin{itemize}
    \item $\exists r_2\in R_2:\tuple{X\setminus \{q\},Y\cup \{q\}}\not\models r_2$, or
    \item $\exists r_3\in R_3:\tuple{X\setminus \{q\},Y\cup \{q\}}\not\models r_3$, or
    \item $\exists r_4\in R_4:\tuple{X\setminus \{q\},Y\cup \{q\}}\not\models r_4$
\end{itemize}
From the premise of the lemma it follows that
\begin{itemize}
    \item $\exists r_4\in R_4:\tuple{Y,Y}\not\models r_4\wedge \tuple{Y,Y\cup \{q\}}\not\models r4$, or
    \item $\exists r_1\in R_1:\tuple{Y,Y}\not\models r_1\wedge \exists r_3\in R_3:\tuple{Y,Y\cup \{q\}}\not\models r3$
\end{itemize}
Going through the cases \dots
\begin{itemize}
    \item If $\exists r_2\in R_2:\tuple{X\cup \{q\},Y\cup \{q\}}\not\models r_2$ and\\
    $\exists r_4\in R_4:\tuple{Y,Y}\not\models r_4\wedge \tuple{Y,Y\cup \{q\}}\not\models r4$, then\\
    $\tuple{X,Y}\not\models \head{r_2}\cup \leftarrow \bodywoq{r_2}\cup \nf(\headwoq{r_4})\cup \nf\nf(\body{r_4})$, which is derived by 1.
    \item If $\exists r_2\in R_2:\tuple{X\cup \{q\},Y\cup \{q\}}\not\models r_2$ and\\
    $\exists r_1\in R_1:\tuple{Y,Y}\not\models r_1\wedge \exists r_3\in R_3:\tuple{Y,Y\cup \{q\}}\not\models r3$, then\\
    $\tuple{X,Y}\not\models \head{r_2}\cup \leftarrow \bodywoq{r_2}\cup \nf(\headwoq{r_3})\cup \nf\nf(\bodywoq{r_3})\cup \nf(\head{r_1})\cup \nf\nf(\bodywoq{r_1})$, which is derived by 2.
    \item If $\exists r_0\in R_0:\tuple{X\cup \{q\},Y\cup \{q\}}\not\models r_0$ and\\
    $\exists r_3\in R_3:\tuple{X\setminus \{q\},Y\cup \{q\}}\not\models r_3$ and\\
    $\exists r_4\in R_4:\tuple{Y,Y}\not\models r_4\wedge \tuple{Y,Y\cup \{q\}}\not\models r4$, then\\
    $\tuple{X,Y}\not\models\head{r_0}\cup \headwoq{r_3} \leftarrow \bodywoq{r_0}\cup \bodywoq{r_3}\cup \nf(\head{r_4})\cup \nf\nf(\bodywoq{r_4})$, which is derived by 2.
    \item If $\exists r_0\in R_0:\tuple{X\cup \{q\},Y\cup \{q\}}\not\models r_0$ and\\
    $\exists r_3\in R_3:\tuple{X\setminus \{q\},Y\cup \{q\}}\not\models r_3$ and\\
    $\exists r_1\in R_1:\tuple{Y,Y}\not\models r_1\wedge \exists r_3'\in R_3:\tuple{Y,Y\cup \{q\}}\not\models r_3'$, then\\
    $\tuple{X,Y}\not\models\head{r_0}\cup \headwoq{r_3} \leftarrow \bodywoq{r_0}\cup \bodywoq{r_3}\cup \nf(\head{r_1})\cup \nf\nf(\bodywoq{r_1})$, which is derived by 2.
    \item If $\exists r_0\in R_0:\tuple{X\cup \{q\},Y\cup \{q\}}\not\models r_0$ and\\
    $\exists r_4\in R_4:\tuple{X\setminus \{q\},Y\cup \{q\}}\not\models r_4$, then\\
    $\tuple{X,Y}\not\models\head{r_0}\cup \headwoq{r_4} \leftarrow \bodywoq{r_0}\cup \body{r_4}$, which is derived by 1.
\end{itemize}
Either way there exists a rule in $\fSP{P}{q}$ that is not satisfied by $\tuple{X,Y}$, hence $\tuple{X,Y}\not\in \HT(\fSP{P}{q})$, therefore $\HT_Y(\fSP{P}{q})\subseteq \HT_Y(\f{P}{\{q\}})$.\\
Second, we proof that $\HT_Y(\fSP{P}{q})\supseteq \HT_Y(\f{P}{\{q\}})$. So we suppose that $\tuple{X,Y}\in \HT_Y(\f{P}{\{q\}})$, then $\tuple{X\cup \{q\},Y\cup \{q\}}\in \HT(P)$ or $\tuple{X\setminus \{q\},Y\cup \{q\}}\in \HT(P)$.\\
From the premise of the lemma it follows that
\begin{itemize}
    \item $\exists r_4\in R_4:\tuple{Y,Y}\not\models r_4\wedge \tuple{Y,Y\cup \{q\}}\not\models r4$, or
    \item $\exists r_1\in R_1:\tuple{Y,Y}\not\models r_1\wedge \exists r_3\in R_3:\tuple{Y,Y\cup \{q\}}\not\models r3$
\end{itemize}
Therefore $\not\exists D\in\asdual{q}{R_3\cup R_4}:\tuple{Y,Y}\models D$ and $\not\exists D\in\asdual{q}{R_1\cup R_4}:\tuple{Y,Y}\models D$, which means that $\tuple{X,Y}$ satisfies all rules derived from 4 and 6.\\
Suppose that $\tuple{X\cup \{q\},Y\cup \{q\}}\in \HT(P)$
\begin{itemize}
    \item $\forall r\in R_0\cup R_2:\tuple{X,Y}\models \head{r}\vee \tuple{X,Y}\not\models \bodywoq{r}$. Therefore $\tuple{X,Y}$ is a model of all rules derived from 1, 2, 3, 5, 7.
\end{itemize}
Suppose that $\tuple{X\setminus \{q\},Y\cup \{q\}}\in \HT(P)$
\begin{itemize}
    \item $\forall r_3\in R_3:\tuple{X,Y}\models \head{r_3}\vee \tuple{X,Y}\not\models \bodywoq{r_3}$. Therefore $\tuple{X,Y}$ is a model of all rules derived from 2, 3, 5, 6, 7.
    \item $\forall r_3\in R_4:\tuple{X,Y}\models \head{r_4}\vee \tuple{X,Y}\not\models \bodywoq{r_4}$. Therefore $\tuple{X,Y}$ is a model of all rules derived from 1.
\end{itemize}
In either case $\tuple{X,Y}\models \fSP{P}{q}$, therefore $\HT_Y(\fSP{P}{q})\supseteq \HT_Y(\f{P}{\{q\}})$, and in the end $\HT_Y(\fSP{P}{q})=\HT_Y(\f{P}{\{q\}})$.
\end{proof}

\begin{lemma}\label{lem_case_3}
Let $P$ be a program over $\Sigma$, $q\in \Sigma$, $Y\subseteq \sign\setminus \{q\}$ and $\tuple{Y,Y} \not\in \HT(P)$ and $\tuple{Y,Y\cup \{q\}} \in \HT(P)$ and $\tuple{Y\cup \{q\},Y\cup \{q\}} \in \HT(P)$.\\
Then $\HT_Y(\fSP{P}{q})=\HT_Y(\f{P}{\{q\}})$ for a representative $\fgt\in\FSPOP$.
\end{lemma}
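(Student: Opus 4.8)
The plan is to first pin down the right-hand side $\HT_Y(\f{P}{\{q\}})$ and then match it on the syntactic side. Since $V=\{q\}$ is a singleton, only $A=\emptyset$ and $A=\{q\}$ can belong to $Rel^Y_{(P,\{q\})}$. The premise $\tuple{Y,Y}\notin\HT(P)$ rules out $A=\emptyset$, while the premise $\tuple{Y,Y\cup\{q\}}\in\HT(P)$ shows that $A=\{q\}$ fails the minimality condition (with witness $A'=\emptyset$). Hence $Rel^Y_{(P,\{q\})}=\emptyset$, and exactly as in Lem.~\ref{lem_case_1} this yields $\HT_Y(\f{P}{\{q\}})=\emptyset$. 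It therefore remains to show $\HT_Y(\fSP{P}{q})=\emptyset$. Because any $\tuple{X,Y}\in\HT(\fSP{P}{q})$ forces $Y\models\fSP{P}{q}$, and normalization preserves HT-models (Prop.~\ref{prop:normalForm}), it suffices to exhibit a single generated rule of $P''$ whose body is true and whose head is false in $Y$, i.e.\ to prove $Y\not\models P''$.

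Next I would locate a violated rule of the original program. From $\tuple{Y,Y}\notin\HT(P)$ we get $Y\not\models P$, so by Lem.~\ref{lem_aux_proof} some rule of $R\cup R_1\cup R_4$ is not satisfied by $\tuple{Y,Y}$. I then exclude the other two possibilities: a violated rule of $R$ is $q$-free, hence also violated by $Y\cup\{q\}$, contradicting $\tuple{Y\cup\{q\},Y\cup\{q\}}\in\HT(P)$; and for $r_4\in R_4$ a short reduct computation (using that $\body{r_4}$ is $q$-free while $q\in\head{r_4}$) shows that $\tuple{Y,Y}\not\models r_4$ would force $\tuple{Y,Y\cup\{q\}}\not\models r_4$, contradicting $\tuple{Y,Y\cup\{q\}}\in\HT(P)$. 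Thus the violated rule is some $r_1\in R_1$, giving $q\notin Y$, $Y\models\bodywoq{r_1}$ and $Y\not\models\headwoq{r_1}$ (recall $q\notin\head{r_1}$ in normal form, so $\head{r_1}=\headwoq{r_1}$).

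Then I would produce the witnessing rule through derivation rule \textbf{4} applied to $r_1$. The key step is to find $D\in\asdual{q}{R_3\cup R_4}$ with $Y\models D$. For this I use $\tuple{Y,Y\cup\{q\}}\in\HT(P)$: evaluating the here-part $Y\models r^{Y\cup\{q\}}$ for each $r\in R_3\cup R_4$ shows that for every such rule either some non-$q$ body literal is false in $Y$ or some non-$q$ head atom is true in $Y$, which is precisely that $Y$ satisfies one of the literals the as-dual collects for that rule; picking one such literal per rule yields $D$. The side condition $D\cap\nf\bodywoq{r_1}=\emptyset$ of rule \textbf{4} is automatic, since $Y\models D$ and $Y\models\bodywoq{r_1}$ cannot both hold if $D$ contained the negation of a literal of $\bodywoq{r_1}$ (that would make $Y$ satisfy a literal and its negation). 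Consequently the rule $\headwoq{r_1}\la\bodywoq{r_1}\cup D$ is generated, its body holds in $Y$ while its head does not, so $Y\not\models P''$ and therefore $\HT_Y(\fSP{P}{q})=\emptyset=\HT_Y(\f{P}{\{q\}})$.

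I expect the main obstacle to be the third step: converting the semantic fact $\tuple{Y,Y\cup\{q\}}\models R_3\cup R_4$ into the combinatorial statement that $\asdual{q}{R_3\cup R_4}$ contains a set satisfied by $Y$. This needs a careful case analysis of where the witness arises in each rule (positive body, negated body, doubly-negated body, or a surviving head atom) together with the sign-simplification conventions such as $\nf\nf\nf p=\nf p$; note also that the corresponding clause of Lem.~\ref{lem_aux_proof} cannot be quoted directly, since it assumes $Y\models P$, which fails here, so the argument has to be re-derived for the pair $\tuple{Y,Y\cup\{q\}}$. By contrast, excluding $R$ and $R_4$ and checking the side condition are routine.
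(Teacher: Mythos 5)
Your proposal is correct and follows essentially the same route as the paper's proof: show $Rel^Y_{(P,\{q\})}=\emptyset$ so the right-hand side is empty, locate a violated rule $r_1\in R_1$ under $\tuple{Y,Y}$, use $\tuple{Y,Y\cup\{q\}}\in\HT(P)$ to extract $D\in\asdual{q}{R_3\cup R_4}$ with $Y\models D$, and conclude that the rule $\headwoq{r_1}\la\bodywoq{r_1}\cup D$ generated by derivation rule \textbf{4} is falsified by $Y$. In fact you are somewhat more careful than the paper, which silently skips both the exclusion of $R$ and $R_4$ as sources of the violation and the verification of the side condition $D\cap\nf\bodywoq{r_1}=\emptyset$ that you correctly show to be automatic.
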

\begin{proof}
In this setting, neither $\{q\}$, nor $\emptyset$ are relevant, i.e. $Rel^Y_{(P,\{q\})} = \emptyset$, and therefore\\
$\HT_Y(\f{P}{\{q\}}) = \emptyset$.\\
From $\tuple{Y,Y} \not\in \HT(P)$ and $\tuple{Y,Y\cup \{q\}} \in \HT(P)$ it follows that $\exists r'\in R_1:\tuple{Y,Y}\models \bodywoq{r'}\wedge \tuple{Y,Y}\not\models \head{r'}$
From $\tuple{Y,Y\cup \{q\}}\models P$ it follows that $\forall r\in R_3\cup R_4:\tuple{Y,Y}\not\models \bodywoq{r}\vee\tuple{Y,Y}\models\headwoq{r}$ and therefore that $\exists D\in \asdual{q}{R_3\cup R_4}:\tuple{Y,Y}\models D$. Derivation 4 constructs $H(r') \leftarrow B^{\setminus q}(r')\cup D$ which is not satisfied by $\tuple{Y,Y}$. Hence $\HT_Y(\fSP{P}{q})=\emptyset$ and therefore $\HT_Y(\fSP{P}{q})=\HT_Y(\f{P}{\{q\}})$.
\end{proof}

\begin{lemma}\label{lem_case_4}
Let $P$ be a program over $\Sigma$, $q\in \Sigma$, $Y\subseteq \sign\setminus \{q\}$ and $\tuple{Y,Y} \in \HT(P)$ and $\tuple{Y,Y\cup \{q\}} \not\in \HT(P)$ and $\tuple{Y\cup \{q\},Y\cup \{q\}} \not\in \HT(P)$.\\
Then $\HT_Y(\fSP{P}{q})=\HT_Y(\f{P}{\{q\}})$ for a representative $\fgt\in\FSPOP$.
\end{lemma}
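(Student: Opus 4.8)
The plan is to mirror the pattern of the previous three case-lemmas: first unwind the right-hand side $\HT_Y(\f{P}{\{q\}})$ semantically, then establish the two inclusions by going rule-by-rule through the derivations, working throughout with $P'=\NF{P}$ and its partition $R,R_0,\dots,R_4$. Since every $A\in\Rel_{\tuple{P,\{q\}}}^Y$ is either $\emptyset$ or $\{q\}$, and the premises give $\tuple{Y,Y}\in\HT(P)$ while $\tuple{Y\cup\{q\},Y\cup\{q\}}\notin\HT(P)$, only $A=\emptyset$ is relevant; thus $\Rel_{\tuple{P,\{q\}}}^Y=\{\emptyset\}$, $\SRel^Y=\{\RA^{Y,\emptyset}_{\tuple{P,\{q\}}}\}$ is a singleton, and $\bigcap\SRel^Y=\{X\mid\tuple{X,Y}\in\HT(P)\}$ (using $X\subseteq Y\subseteq\sign\setm\{q\}$). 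Hence $\HT_Y(\f{P}{\{q\}})=\HT_Y(P)=\HT_Y(P')$ for every $\fgt\in\FSPOP$. As $\fSP{P}{q}=\NF{P''}$ and $\NF{P''}\equiv P''$ by Prop.~\ref{prop:normalForm}, the goal reduces to $\HT_Y(P'')=\HT_Y(P')$. I would also record that the three premises collapse to $Y\models P'$ together with $Y\cup\{q\}\not\models P'$, and that by Lem.~\ref{lem_aux_proof} the latter yields a witness rule $r^*\in R_0\cup R_2$ with $Y\models\bodywoq{r^*}$ and $Y\not\models\head{r^*}$.

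The technical core I would isolate is an as-dual correspondence: for $S\subseteq R_0\cup R_2$ (resp. $S\subseteq R_3\cup R_4$), some $D\in\asdual{q}{S}$ satisfies $Y\models D$ exactly when every rule of $S$ is satisfied independently of $q$ at $Y$ (some body literal false or some non-$q$ head atom in $Y$); and if one rule of $S$ fails this, then no element of $\asdual{q}{S}$ is satisfied by $Y$, since each admissible choice for that rule is falsified by $Y$. In particular the witness $r^*$ shows no $D\in\asdual{q}{R_0\cup R_2\setm\{r_0\}}$ is $Y$-satisfied unless $r_0=r^*$. I would also note the elementary filter fact: if $\tuple{X,Y}$ satisfies both $\bodywoq{r'}$ and a set $D$ of (possibly doubly) negated literals, then $D\cap\nf\bodywoq{r'}=\emptyset$ automatically, so the side conditions $D\cap\nf\bodywoq{r'}=\emptyset$ in \textbf{4}--\textbf{6} come for free whenever the generated body is satisfied.

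For $\HT_Y(P'')\subseteq\HT_Y(P')$ I would argue contrapositively. If $\tuple{X,Y}\notin\HT(P')$ then (as $Y\models P'$) Lem.~\ref{lem_aux_proof} gives a violated rule in $R\cup R_1\cup R_4$. A violated $r\in R$ is itself in $P''$ (and its violation propagates to whatever minimal rule replaces it under $\NF{\cdot}$). For a violated $r'\in R_1\cup R_4$ we have $\tuple{X,Y}\models\bodywoq{r'}$ and $\tuple{X,Y}\not\models\headwoq{r'}$, and I split on $R_3$: if every $r_3\in R_3$ is satisfied independently of $q$ at $Y$, the correspondence yields $D\in\asdual{q}{R_3\cup R_4}$ with $Y\models D$, so \textbf{4} produces the violated rule $\headwoq{r'}\la\bodywoq{r'}\cup D$; otherwise pick $r_3^*\in R_3$ with $Y\models\bodywoq{r_3^*}$ and $Y\cap\headwoq{r_3^*}=\emptyset$ and apply \textbf{5} with $r=r^*$, $r_3=r_3^*$ and a $Y$-satisfied $D\in\asdual{q}{R_4}$, checking that $\nf(\head{r^*})$, $\nf(\headwoq{r_3^*})$, $\nf\nf(\bodywoq{r^*})$, $\nf\nf(\bodywoq{r_3^*})$ and $D$ all hold at $\tuple{X,Y}$, so that the body is satisfied while $\headwoq{r'}$ is not.

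For the reverse inclusion I would take $\tuple{X,Y}\in\HT(P')$ and show every rule of $P''$ is satisfied. Rules from $R$ and those produced by \textbf{1a}, \textbf{4}, \textbf{5}, \textbf{6} are immediate: whenever the generated body holds it contains $\body{r_4}$ (resp. $\bodywoq{r'}$ together with the free $\nf q$ or the head atom $q$), so $\tuple{X,Y}\models P'$ forces the corresponding $\headwoq{r_4}$ or $\headwoq{r'}$ into the head. For the dangerous $q$-supported heads I would instead show the bodies are unsatisfiable at level $Y$: for \textbf{1b}, \textbf{2a}, \textbf{2b} the body forces $Y\models\bodywoq{r'}$ together with $Y\cap\headwoq{r'}=\emptyset$ (or the analogue for $r_4$), contradicting $Y\models r'$; for \textbf{3a}, \textbf{3b}, \textbf{7} the body requires $Y\models D$ for some $D\in\asdual{q}{R_0\cup R_2\setm\{r_0\}}$ and $\nf\nf h(r_0)$, which $r^*$ defeats --- if $r^*\neq r_0$ no such $D$ is $Y$-satisfied, and if $r^*=r_0$ then $h(r_0)\notin Y$ kills $\nf\nf h(r_0)$. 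I expect the main obstacle to be exactly this last block: making the as-dual correspondence fully rigorous (including the degenerate cases where $\bodywoq{\cdot}$ or $\headwoq{\cdot}$ is empty, which force $\asdual{q}{\cdot}=\emptyset$) and threading the single witness $r^*$ through \textbf{3a}, \textbf{3b}, \textbf{7}, with the reassurance that the final normalization is absorbed uniformly by $\NF{P''}\equiv P''$.
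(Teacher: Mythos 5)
Your proof is correct and follows essentially the same route as the paper's: reduce the right-hand side to $\HT_Y(P)$ via $\Rel_{\tuple{P,\{q\}}}^Y=\{\emptyset\}$, then prove both inclusions using the classification of violated rules (Lem.~\ref{lem_aux_proof}), the as-dual correspondence at level $Y$, derivation rules \textbf{4}/\textbf{5} for the forward direction, and head-satisfaction or vacuous-body arguments for the converse. If anything, your write-up is tighter than the paper's own, which glosses over the side conditions $D\cap\nf\bodywoq{r'}=\emptyset$, the case where the witness rule $r^*$ coincides with the excluded rule $r_0$ in \textbf{3a}/\textbf{3b}/\textbf{7} (your $\nf\nf h(r_0)$ argument), and the fact that \textbf{1b}/\textbf{2a}/\textbf{2b} require the classical $Y$-level reading of $Y\models r'$ rather than the HT-level property of $\tuple{X,Y}$.
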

\begin{proof}
In this setting, just $\emptyset$ is relevant, i.e. $Rel^Y_{(P,\{q\})} = \{\emptyset\}$, and therefore\\
$\HT_Y(\f{P}{\{q\}})=\HT_Y(P)$.\\
First, we proof that $\HT_Y(\fSP{P}{q})\subseteq \HT_Y(\f{P}{\{q\}})$. So we suppose that $\tuple{X,Y}\not\in \HT_Y(\f{P}{\{q\}})$, then $\tuple{X,Y}\not\in \HT(P)$.\\
From $\tuple{X,Y}\not\in \HT(P)$ it follows that
\begin{itemize}
    \item $\exists r_1\in R_1: \tuple{X,Y}\not\models r_1$, or
    \item $\exists r_4\in R_4: \tuple{X,Y}\not\models r_4$
\end{itemize}
From the premise of the lemma it follows that
\begin{itemize}
    \item $\exists r_2\in R_2:\tuple{X\setminus \{q\},Y\cup \{q\}}\not\models r_2\wedge\tuple{X\cup \{q\},Y\cup \{q\}}\not\models r_2$, or
    \item $\exists r_0\in R_0:\tuple{X\setminus \{q\},Y\cup \{q\}}\not\models r_0$ and $\exists r_3\in R_3:\tuple{X\cup \{q\},Y\cup \{q\}}\not\models r_3$
\end{itemize}
Going through the cases:
\begin{itemize}
    \item In case that $\exists r_1\in R_1: \tuple{X,Y}\not\models r_1$, and $\exists r_2\in R_2:\tuple{X\setminus \{q\},Y\cup \{q\}}\not\models r_2\wedge\tuple{X\cup \{q\},Y\cup \{q\}}\not\models r_2$, we make the additional assumption that $\exists r_3\in R_3:\tuple{X,Y}\not\models r_3$
    from the premise $\tuple{Y,Y} \in \HT(P)$, we get that $\exists D\in\asdual{q}{R_4}:\tuple{X,Y}\models D$, then $\tuple{X,Y}\not\models \headwoq{r_1} \leftarrow \bodywoq{r_1}\cup \nf(\head{r_2}\cup \headwoq{r_3})\cup \nf\nf(\bodywoq{r_2}\cup \bodywoq{r_3})\cup D$, which is derived by 5.
    \item If in turn $\exists r_1\in R_1: \tuple{X,Y}\not\models r_1$, and $\exists r_2\in R_2:\tuple{X\setminus \{q\},Y\cup \{q\}}\not\models r_2\wedge\tuple{X\cup \{q\},Y\cup \{q\}}\not\models r_2$, but $\not\exists r_3\in R_3:\tuple{X,Y}\not\models r_3$, then we have that $\exists D\in\asdual{q}{R_3\cup R_4}:\tuple{X,Y}\models D$ and therefore\\
    $\tuple{X,Y}\not\models \headwoq{r_1} \leftarrow \bodywoq{r_1}\cup D$, which is derived by 4.
    \item If $\exists r_1\in R_1: \tuple{X,Y}\not\models r_1$, and $\exists r_0\in R_0:\tuple{X\setminus \{q\},Y\cup \{q\}}\not\models r_0$ and $\exists r_3\in R_3:\tuple{X\cup \{q\},Y\cup \{q\}}\not\models r_3$
    from the premise $\tuple{Y,Y} \in \HT(P)$, we get that $\exists D\in\asdual{q}{R_4}:\tuple{X,Y}\models D$, then $\tuple{X,Y}\not\models \headwoq{r_1} \leftarrow \bodywoq{r_1}\cup \nf(\head{r_0}\cup \headwoq{r_3})\cup \nf\nf(\bodywoq{r_0}\cup \bodywoq{r_3})\cup D$, which is derived by 5.
    \item All cases with $\exists r_4\in R_4: \tuple{X,Y}\not\models r_4$ are absolutely analogous to the three cases above.
\end{itemize}
Second, we proof that $\HT_Y(\fSP{P}{q})\supseteq \HT_Y(\f{P}{\{q\}})$. So we suppose that $\tuple{X,Y}\in \HT_Y(\f{P}{\{q\}})$, therefore $\tuple{X,Y}\in \HT(P)$.\\
From $\tuple{X,Y}\in \HT(P)$ it follows that $\forall r'\in R_1\cup R_4:\tuple{X,Y}\models\headwoq{r'}\vee\tuple{X,Y}\not\models\bodywoq{r'}$, therefore $\tuple{X,Y}$ satisfies all rules derived by 1, 2, 4, 5, 6.\\
From $\tuple{Y\cup \{q\},Y\cup \{q\}}\not\models P$ it follows that $\exists r\in R_0\cup R_2:\tuple{Y,Y}\not\models r$ and therefore $\forall D\in \asdual{q}{R_0\cup R_2}:\tuple{Y,Y}\not\models D$, which means that $\tuple{X,Y}$ satisfies all rules derived by 3 and 7.
So $\tuple{X,Y}\models \fSP{P}{q}$, therefore $\HT_Y(\fSP{P}{q})\supseteq \HT_Y(\f{P}{\{q\}})$, and in the end $\HT_Y(\fSP{P}{q})=\HT_Y(\f{P}{\{q\}})$.
\end{proof}

\begin{lemma}\label{lem_case_5}
Let $P$ be a program over $\Sigma$, $q\in \Sigma$, $Y\subseteq \sign\setminus \{q\}$ and $\tuple{Y,Y} \in \HT(P)$ and $\tuple{Y,Y\cup \{q\}} \not\in \HT(P)$ and $\tuple{Y\cup \{q\},Y\cup \{q\}} \in \HT(P)$.\\
Then $\HT_Y(\fSP{P}{q})=\HT_Y(\f{P}{\{q\}})$ for a representative $\fgt\in\FSPOP$.
\end{lemma}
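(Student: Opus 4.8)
The plan is to follow the same two-inclusion scheme used for Lem.~\ref{lem_case_2} and Lem.~\ref{lem_case_4}, observing that the present case is exactly their ``intersection''. First I would determine $\Rel^Y_{\tuple{P,\{q\}}}$. Since $\tuple{Y,Y}\in\HT(P)$, the subset $\emptyset$ is relevant, and since $\tuple{Y\cup\{q\},Y\cup\{q\}}\in\HT(P)$ while $\tuple{Y,Y\cup\{q\}}\notin\HT(P)$, the subset $\{q\}$ is relevant and minimal; hence $\Rel^Y_{\tuple{P,\{q\}}}=\{\emptyset,\{q\}\}$ and $\SRel^Y=\{\RA^{Y,\emptyset}_{\tuple{P,\{q\}}},\RA^{Y,\{q\}}_{\tuple{P,\{q\}}}\}$. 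Consequently $\HT_Y(\f{P}{\{q\}})$ consists of those $\tuple{X,Y}$ with $X\in\RA^{Y,\emptyset}_{\tuple{P,\{q\}}}\cap\RA^{Y,\{q\}}_{\tuple{P,\{q\}}}$, i.e.\ with $\tuple{X,Y}\in\HT(P)$ and, simultaneously, $\tuple{X,Y\cup\{q\}}\in\HT(P)$ or $\tuple{X\cup\{q\},Y\cup\{q\}}\in\HT(P)$. Thus the target is the intersection of the target of Lem.~\ref{lem_case_4} (membership in $\HT(P)$) with that of Lem.~\ref{lem_case_2} (membership in $\RA^{Y,\{q\}}_{\tuple{P,\{q\}}}$).

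Next I would extract the structural consequences of the premises through Lem.~\ref{lem_aux_proof}. From $\tuple{Y,Y}\in\HT(P)$ and $\tuple{Y\cup\{q\},Y\cup\{q\}}\in\HT(P)$ we get $Y\models P$ and $Y\cup\{q\}\models P$, so both coincidence statements of Lem.~\ref{lem_aux_proof} are available (for $R_4$ between the first two interpretations, for $R_2$ between the last two). Using the characterization of Lem.~\ref{lem_aux_proof} together with these observations, every rule of $R\cup R_2\cup R_4$ is satisfied by $\tuple{Y,Y\cup\{q\}}$, and those of $R_0\cup R_1$ are satisfied automatically; hence $\tuple{Y,Y\cup\{q\}}\notin\HT(P)$ forces a violated generating rule $r_3\in R_3$ with $\tuple{Y,Y}\models\bodywoq{r_3}$ and $\headwoq{r_3}\cap Y=\emptyset$. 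This pinned $r_3$ is the single witness the premises provide, and it is precisely what the self-cycle schemas (\textbf{2a}, \textbf{2b}, \textbf{3a}, \textbf{3b}, \textbf{5}, \textbf{6}, \textbf{7}) require. Unlike in Lem.~\ref{lem_case_2}, here $\tuple{Y,Y}\in\HT(P)$, so no rule of $R_1\cup R_4$ is violated by $\tuple{Y,Y}$, which is why the constructions must be driven by $r_3$ rather than by a ``there''-violation.

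For $\HT_Y(\fSP{P}{q})\subseteq\HT_Y(\f{P}{\{q\}})$ I would argue contrapositively: if $\tuple{X,Y}$ is not in the intersection it fails one of the two memberships. If $\tuple{X,Y}\notin\HT(P)$, Lem.~\ref{lem_aux_proof} yields a violated $r'\in R\cup R_1\cup R_4$; a violated rule of $R$ is itself in $\fSP{P}{q}$, and otherwise, exactly as in the $\subseteq$-part of Lem.~\ref{lem_case_4}, one exhibits a violated instance of \textbf{4} (when $\tuple{X,Y}$ models $R_3\cup R_4$, choosing $D\in\asdual{q}{R_3\cup R_4}$ with $\tuple{X,Y}\models D$) or of \textbf{5}/\textbf{6} (otherwise, using $r_3$). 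If instead $X\notin\RA^{Y,\{q\}}_{\tuple{P,\{q\}}}$, then both $\tuple{X,Y\cup\{q\}}\notin\HT(P)$ and $\tuple{X\cup\{q\},Y\cup\{q\}}\notin\HT(P)$, and Lem.~\ref{lem_aux_proof} supplies violated rules in $R_0\cup R_2$ and in $R_2\cup R_3\cup R_4$; combined with the pinned $r_3$ one builds a violated instance among the head-in-$R_0\cup R_2$ schemas \textbf{1a}, \textbf{1b}, \textbf{2a}, \textbf{2b}, \textbf{3a}, \textbf{3b}, \textbf{7}, following the bookkeeping of the $\subseteq$-part of Lem.~\ref{lem_case_2}. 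For the converse inclusion, assuming $\tuple{X,Y}$ lies in the intersection, $\tuple{X,Y}\in\HT(P)$ makes every rule with head $\headwoq{r'}$ for $r'\in R_1\cup R_4$, i.e.\ \textbf{4}, \textbf{5}, \textbf{6}, satisfied, while the disjunct $\tuple{X,Y\cup\{q\}}\in\HT(P)$ or $\tuple{X\cup\{q\},Y\cup\{q\}}\in\HT(P)$ makes every rule with head in $R_0\cup R_2$ (possibly joined with $\headwoq{r_3}$) satisfied, covering the remaining schemas; the coincidence facts are used to transfer each ``head-or-not-body'' observation back to $\tuple{X,Y}$.

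The main obstacle I anticipate is the as-dual bookkeeping in the $\subseteq$ direction: for each failure mode one must produce a concrete partition witness $D$ (of $R_3\cup R_4$, of $R_4$, or of $R_0\cup R_2\setminus\{r\}$) satisfied by the relevant interpretation and meeting the syntactic side conditions such as $D\cap\nf\bodywoq{r'}=\emptyset$, while choosing the correct head atoms $h(\cdot)$ so that the constructed rule has a true body and a false head. The delicate point specific to this case is that, because $\tuple{Y,Y}\in\HT(P)$, the only witness is the single generating rule $r_3$, so every self-cycle schema must be fed from $r_3$ together with the $R_0$/$R_2$- or $R_1$/$R_4$-violations coming from the failed membership; verifying that these always combine into a rule actually generated by Def.~\ref{def:OurForgetting} is where the argument is most technical.
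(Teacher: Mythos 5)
Your setup matches the paper's proof and is in places more careful than it: you correctly compute $\Rel^Y_{\tuple{P,\{q\}}}=\{\emptyset,\{q\}\}$, read membership in $\RA^{Y,\{q\}}_{\tuple{P,\{q\}}}$ disjunctively, and actually derive (rather than merely assert, as the paper does) that the premises pin down a violated rule $r_3\in R_3$ with $\bodywoq{r_3}$ true and $\headwoq{r_3}$ false under $Y$. The two-inclusion scheme and your sketch of the $\supseteq$ direction also follow the paper's argument.

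The genuine gap is in the $\subseteq$ direction: the witness-selection plan you give cannot be executed, because under this lemma's premises the schemas inherited from Lem.~\ref{lem_case_2} and Lem.~\ref{lem_case_4} never fire. Precisely because of the pinned $r_3$, no element of $\asdual{q}{R_3\cup R_4}$ is satisfied by any $\tuple{X,Y}$ (every partition must take, for $r_3$, either $\nf l$ with $l\in\bodywoq{r_3}$ or $\nf\nf h$ with $h\in\headwoq{r_3}$, but all literals of $\bodywoq{r_3}$ are true and all atoms of $\headwoq{r_3}$ false under $Y$), so schema \textbf{4} yields no violated rule; since $Y\cup\{q\}\models P$, every $r\in R_0\cup R_2$ has a head atom in $Y$ or a false non-$q$ body literal, so the component $\nf(\head{r})\cup\nf\nf(\bodywoq{r})$ of schema \textbf{5} is never satisfied; and since $Y\models P$, the components $\nf(\headwoq{r'})\cup\nf\nf(\bodywoq{r'})$ with $r'\in R_1\cup R_4$ of schemas \textbf{1b}, \textbf{2a}, \textbf{2b} are never satisfied --- yet these are exactly the witnesses that Lem.~\ref{lem_case_2}'s bookkeeping produces. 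The paper's proof instead uses only schema \textbf{6} when $\tuple{X,Y}\notin\HT(P)$, and schemas \textbf{3a}, \textbf{3b}, \textbf{7} when $X\notin\RA^{Y,\{q\}}_{\tuple{P,\{q\}}}$, and to make these fire it first extracts from the premises the two existence facts your proposal flags as an ``obstacle'' but never establishes: some $D\in\asdual{q}{R_1\cup R_4}$ and some $D\in\asdual{q}{R_0\cup R_2}$ are satisfied by $\tuple{Y,Y}$ (both follow from $Y\models P$ and $Y\cup\{q\}\models P$ by picking, per rule, a true head atom or a false body literal), together with the fact that the double-negated atoms $h(\cdot)$ can be chosen inside $Y$. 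Finally, you omit the dichotomy that is the entire reason schema \textbf{7} exists: if the rule $r_3'\in R_3$ violated by $\tuple{X,Y\cup\{q\}}$ is itself violated by $\tuple{Y,Y\cup\{q\}}$, schema \textbf{3a} supplies the witness, but if not, one needs a second, distinct rule of $R_3$ (the pinned $r_3$) and the rule that \textbf{7} builds from the pair $r_3,r_3'$. Without these points, the most technical part of the lemma remains unproved.
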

\begin{proof}
In this setting, both $\emptyset$ and $\{q\}$ are relevant, i.e. $Rel^Y_{(P,\{q\})} = \{\emptyset, \{q\}\}$, and therefore\\
$\HT_Y(\f{P}{\{q\}}) =\{\tuple{X\setminus \{q\},Y}\mid \tuple{X,Y} \in \HT(P) \wedge \tuple{X,Y\cup \{q\}} \in \HT(P)\}$.\\
First, we proof that $\HT_Y(\fSP{P}{q})\subseteq \HT_Y(\f{P}{\{q\}})$. So we suppose that $\tuple{X,Y}\not\in \HT_Y(\f{P}{\{q\}})$, then $\tuple{X,Y} \not\in \HT(P)$ or $\tuple{X,Y\cup \{q\}} \not\in \HT(P)$.\\
If $\tuple{X,Y} \not\in \HT(P)$ then
\begin{itemize}
    \item $\exists r_1\in R_1:\tuple{X,Y}\not\models r_1$, or
    \item $\exists r_4\in R_4:\tuple{X,Y}\not\models r_4$.
\end{itemize}
If $\tuple{X,Y\cup \{q\}} \not\in \HT(P)$ then
\begin{itemize}
    \item $\exists r_2\in R_2:\tuple{X,Y\cup \{q\}}\not\models r_2$, or
    \item $\exists r_0\in R_0:\tuple{X\cup \{q\},Y\cup \{q\}}\not\models r_0\wedge\exists r_3'\in R_3:\tuple{X,Y\cup \{q\}}\not\models r_3'$
\end{itemize}
From the premises of the theorem follows that
\begin{itemize}
    \item $\exists r_3\in R_3:\tuple{Y,Y\cup \{q\}}\not\models r_3$, and
    \item $\exists D\in \asdual{q}{R_0\cup R_2}:\tuple{Y,Y}\models D$, and
    \item $\exists D\in \asdual{q}{R_1\cup R_4}:\tuple{Y,Y}\models D$.
\end{itemize}
In the case that $\tuple{X,Y} \not\in \HT(P)$ derivation rule 6 either produces\\
$\headwoq{r_1}\leftarrow \bodywoq{r_1}\cup \{not\ not\ h(r_1)\}\cup D\cup \nf\nf(\bodywoq{r_3})\cup \nf(\headwoq{r_3})$ or
$\headwoq{r_4}\leftarrow \bodywoq{r_4}\cup \{not\ not\ h(r_4)\}\cup D\cup \nf\nf(\bodywoq{r_3})\cup \nf(\headwoq{r_3})$
which contradicts $\tuple{X,Y}$.
In the case that $\exists r_2\in R_2:\tuple{X,Y\cup \{q\}}\not\models r_2$ derivation rule 3 produces\\
$\head{r_2} \leftarrow \bodywoq{r_2}\cup \{not\ not\ h(r_2))\}\cup D\cup \nf\nf(\bodywoq{r_3})\cup \nf(\headwoq{r_3})$
which contradicts $\tuple{X,Y}$. 

Then, if $\exists r_0\in R_0:\tuple{X\cup \{q\},Y\cup \{q\}}\not\models r_0\wedge\exists r_3'\in R_3:\tuple{X,Y\cup \{q\}}\not\models r_3'$, either $r_3'$ contradicts $\tuple{Y,Y\cup \{q\}}$ then
$\head{r_0} \leftarrow \bodywoq{r_0}\cup \{not\ not\ h(r_0))\}\cup D\cup \bodywoq{r_3'}\cup \nf(\headwoq{r_3'})$ which is constructed by 3 contradicts $\tuple{X,Y}$. If $r_3'$ does not contradict $\tuple{Y,Y\cup \{q\}}$ there is another $r_3\in R_3$ with $\tuple{Y,Y\cup \{q\}}\not\models r_3$ which is also used in the construction of $H(r_0)\cup H^{\setminus q}(r_3')\leftarrow B^{\setminus q}(r_0)\cup\{not\ not\ h(r_0)\}\cup B^{\setminus q}(r_3')\cup \nf\nf(B^{\setminus q}(r_3))\cup \nf(H^{\setminus q}(r_3))\cup D$. This rule contradicts $\tuple{X,Y}$.
Therefore in any case $\tuple{X,Y}\not\in\fSP{P}{q}$.\\
Second, we proof that $\HT_Y(\fSP{P}{q})\supseteq \HT_Y(\f{P}{\{q\}})$. So we suppose that $\tuple{X,Y}\in \HT_Y(\f{P}{\{q\}})$, then $\tuple{X,Y}\in \HT(P)$ and $\tuple{X,Y\cup \{q\}}\in \HT(P)$.\\
From $\tuple{X,Y}\in \HT(P)$ it follows that $\forall r'\in R_1\cup R_4:\tuple{X,Y}\models\headwoq{r'}\vee\tuple{X,Y}\not\models\bodywoq{r'}$, therefore $\tuple{X,Y}$ satisfies all rules derived by 1, 2, 4, 5, 6.\\
If $\tuple{X\cup \{q\},Y\cup \{q\}}\models P$ then $\forall r\in R_0\cup R_2: \tuple{X,Y}\models\head{r}\vee\tuple{X,Y}\not\models\bodywoq{r}$. Then $\tuple{X,Y}$ satisfies all rules derived by 1, 2, 3, 7.\\
If $\tuple{X\setminus \{q\},Y\cup \{q\}}\models P$ then $\forall r_3\in R_3: \tuple{X,Y}\models\head{r_3}\vee\tuple{X,Y}\not\models\bodywoq{r_3}$. Then $\tuple{X,Y}$ satisfies all rules derived by 2, 3, 5, 6, 7.\\
In either case $\tuple{X,Y}\models \fSP{P}{q}$, therefore $\HT_Y(\fSP{P}{q})\supseteq \HT_Y(\f{P}{\{q\}})$, and in the end $\HT_Y(\fSP{P}{q})=\HT_Y(\f{P}{\{q\}})$.
\end{proof}

\begin{lemma}\label{lem_case_6}
Let $P$ be a program over $\Sigma$, $q\in \Sigma$, $Y\subseteq \sign\setminus \{q\}$ and $\tuple{Y,Y} \in \HT(P)$ and $\tuple{Y,Y\cup \{q\}} \in \HT(P)$ and $\tuple{Y\cup \{q\},Y\cup \{q\}} \in \HT(P)$.\\
Then $\HT_Y(\fSP{P}{q})=\HT_Y(\f{P}{\{q\}})$ for a representative $\fgt\in\FSPOP$.
\end{lemma}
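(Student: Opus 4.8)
The plan is to follow the same template as Lemmas~\ref{lem_case_1}--\ref{lem_case_6}: first pin down the target set $\HT_Y(\f{P}{\{q\}})$ from the definition of $\FSPOP$, and then establish the two inclusions against $\HT_Y(\fSP{P}{q})$. Since $\tuple{Y,Y}\in\HT(P)$ we have $\emptyset\in\Rel_{\tuple{P,\{q\}}}^Y$, while $\{q\}\notin\Rel_{\tuple{P,\{q\}}}^Y$ because $\tuple{Y,Y\cup\{q\}}\in\HT(P)$ witnesses that the proper subset $\emptyset\subset\{q\}$ already yields an HT-model with second component $Y\cup\{q\}$. Hence $\Rel_{\tuple{P,\{q\}}}^Y=\{\emptyset\}$, so $\SRel^Y$ is the singleton $\{\RA^{Y,\emptyset}_{\tuple{P,\{q\}}}\}$ and $\bigcap\SRel^Y=\{X\mid\tuple{X,Y}\in\HT(P)\}$ (using $X\subseteq Y$, so $q\notin X$). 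Thus $\HT_Y(\f{P}{\{q\}})=\HT_Y(P)$, exactly as in Lem.~\ref{lem_case_4}, and it remains to show $\HT_Y(\fSP{P}{q})=\HT_Y(P)$. I would work throughout with the pre-normalization program $P''$ of Def.~\ref{def:OurForgetting}, which is legitimate because $\fSP{P}{q}=\NF{P''}\equiv P''$ by Prop.~\ref{prop:normalForm}, so the two have identical HT-models.

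For the inclusion $\HT_Y(\fSP{P}{q})\subseteq\HT_Y(P)$, I would assume $\tuple{X,Y}\notin\HT(P)$ and exhibit a rule of $P''$ that $\tuple{X,Y}$ refutes. By Lem.~\ref{lem_aux_proof}, $\tuple{X,Y}\not\models P$ forces a witness $r\in R\cup R_1\cup R_4$. If $r\in R$, it is copied verbatim into $P''$ and refuted. If $r'\in R_1\cup R_4$, then $\tuple{X,Y}\models\bodywoq{r'}$ and $\tuple{X,Y}\not\models\headwoq{r'}$; the key auxiliary claim is that $\tuple{Y,Y\cup\{q\}}\in\HT(P)$ yields some $D\in\asdual{q}{R_3\cup R_4}$ with $\tuple{Y,Y}\models D$, obtained by choosing, for each rule of $R_3\cup R_4$, either a body literal that $\tuple{Y,Y}$ falsifies or a head atom that $\tuple{Y,Y}$ verifies (one of the two exists precisely because $\tuple{Y,Y\cup\{q\}}$ models that rule). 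As every element of $D$ is of the form $\nf\,\cdot$ or $\nf\nf\,\cdot$, its truth depends only on $Y$, so $\tuple{X,Y}\models D$ as well; moreover the side condition $D\cap\nf\bodywoq{r'}=\emptyset$ of \textbf{4} holds automatically, since $\tuple{X,Y}$ satisfies all literals of both $\bodywoq{r'}$ and $D$, so no literal of $\bodywoq{r'}$ can have its complement in $D$. Hence \textbf{4} contributes $\headwoq{r'}\la\bodywoq{r'}\cup D$, which $\tuple{X,Y}$ refutes. The corner cases $R_3\cup R_4=\emptyset$ (giving $D=\emptyset$) and the impossibility of $q\la\in P$ under the premises need a line each.

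For the converse inclusion I would assume $\tuple{X,Y}\in\HT(P)$ and check every derivation schema. From $\tuple{X,Y}\models r'$ for all $r'\in R_1\cup R_4$ one gets $\tuple{X,Y}\models\headwoq{r'}$ or $\tuple{X,Y}\not\models\bodywoq{r'}$; since \textbf{4}, \textbf{5}, \textbf{6} have head $\headwoq{r'}$ and a body containing $\bodywoq{r'}$, they are all satisfied, and the same observation applied to the $r_4$-part of \textbf{1a} settles that schema. The point of this case is that the remaining, model-generating schemas \textbf{1b}, \textbf{2a}, \textbf{2b}, \textbf{3a}, \textbf{3b}, \textbf{7} are \emph{vacuously} satisfied: each carries in its body a blocking pair $\nf(\headwoq{s})$ together with $\nf\nf(\bodywoq{s})$ (or $\bodywoq{s}$ itself) for some rule $s$. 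For \textbf{1b}, \textbf{2a}, \textbf{2b} the relevant $s$ is the $r'\in R_1\cup R_4$ occurring negated in the body, and a satisfied body would give $\headwoq{r'}\cap Y=\emptyset$ and $\tuple{Y,Y}\models\bodywoq{r'}$, i.e.\ $\tuple{Y,Y}\not\models r'$, contradicting $\tuple{Y,Y}\in\HT(P)$. For \textbf{3a}, \textbf{3b}, \textbf{7} the relevant $s$ is a generating rule $r_3$ (resp.\ $r_3'$) of $R_3$, and a satisfied body would give $\headwoq{r_3}\cap Y=\emptyset$ together with $\tuple{Y,Y}\models\bodywoq{r_3}$ (transferring positive atoms from $X$ to $Y$ via $X\subseteq Y$), i.e.\ $\tuple{Y,Y\cup\{q\}}\not\models r_3$, contradicting $\tuple{Y,Y\cup\{q\}}\in\HT(P)$. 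Hence none of these bodies is satisfiable and $\tuple{X,Y}\models P''$.

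I expect the bulk of the work, and the main obstacle, to be the $\supseteq$ direction: verifying, schema by schema, that each of the six model-generating rules really does embed such a blocking pair, and that the two ``all-models'' premises translate precisely into the unsatisfiability of those pairs. The auxiliary existence claim for $D\in\asdual{q}{R_3\cup R_4}$ also needs care, as does the repeated use of the two elementary facts that positive-atom satisfaction transfers from $\tuple{X,Y}$ to $\tuple{Y,Y}$ and that $\nf$/$\nf\nf$ literals depend only on $Y$.
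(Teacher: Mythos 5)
Your proposal is correct, and its skeleton is the same as the paper's: identify $\HT_Y(\f{P}{\{q\}})=\HT_Y(P)$ from $\Rel^Y_{\tuple{P,\{q\}}}=\{\emptyset\}$, then prove the two inclusions, refuting a schema-\textbf{4} rule for $\subseteq$ and checking every derivation schema for $\supseteq$. The one genuine divergence is the $\supseteq$ direction for schemas \textbf{3a}, \textbf{3b} and \textbf{7}: the paper splits on whether $\tuple{X\cup\{q\},Y\cup\{q\}}\models P$, handling one branch via satisfaction of the rules in $R_0\cup R_2$ and the other by arguing that every $D\in\asdual{q}{R_0\cup R_2}$ is falsified at $\tuple{Y,Y}$, whereas you dispose of all three schemas uniformly through the blocking pair $\nf(\headwoq{r_3})$ together with $\bodywoq{r_3}$ (or $\nf\nf(\bodywoq{r_3})$), whose satisfaction would force $\tuple{Y,Y\cup\{q\}}\not\models r_3$ and thus contradict the premise --- no case split needed. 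Your route is arguably cleaner here: the paper's second branch is stated loosely (its claim ``$\exists r\in R_0\cup R_2:\tuple{Y,Y}\not\models r$'' cannot hold literally, since every rule of $R_0\cup R_2$ has a $q$-dependent body literal that is false at $\tuple{Y,Y}$; it must be read as referring to the rule violated at $\tuple{X\cup\{q\},Y\cup\{q\}}$), while your vacuity argument is direct and driven only by the stated premises. You also fill in details the paper elides: the explicit construction of $D\in\asdual{q}{R_3\cup R_4}$ with $\tuple{Y,Y}\models D$, obtained rule-by-rule from $\tuple{Y,Y\cup\{q\}}$ being a model of $R_3\cup R_4$; the observation that the side condition $D\cap\nf\bodywoq{r'}=\emptyset$ of schema \textbf{4} holds automatically because $\tuple{X,Y}$ satisfies both $\bodywoq{r'}$ and $D$; and the witness case $r\in R$, which the paper's proof omits. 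Both arguments establish the same lemma; yours trades the paper's appeal to the as-dual over $R_0\cup R_2$ for a premise-driven unsatisfiability argument that is easier to verify.
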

\begin{proof}
In this setting, just $\emptyset$ is relevant, i.e. $Rel^Y_{(P,\{q\})} = \{\emptyset\}$, and therefore\\
$\HT_Y(\f{P}{\{q\}})=\HT_Y(P)$.\\
First, we proof that $\HT_Y(\fSP{P}{q})\subseteq \HT_Y(\f{P}{\{q\}})$. So we suppose that $\tuple{X,Y}\not\in \HT_Y(\f{P}{\{q\}})$, then $\tuple{X,Y}\not\in \HT(P)$.\\
From $\tuple{X,Y}\not\in \HT(P)$ it follows that
\begin{itemize}
    \item $\exists r_1\in R_1: \tuple{X,Y}\not\models r_1$, or
    \item $\exists r_4\in R_4: \tuple{X,Y}\not\models r_4$
\end{itemize}
From the premise of the lemma it follows that $\exists D\in\asdual{q}{R_3\cup R_4}:\tuple{X,Y}\models D$. Therefore, in either case the rule that is derived by 4 contradicts $\tuple{X,Y}$.\\
Second, we proof that $\HT_Y(\fSP{P}{q})\supseteq \HT_Y(\f{P}{\{q\}})$. So we suppose that $\tuple{X,Y}\in \HT_Y(\f{P}{\{q\}})$, then $\tuple{X,Y}\in \HT(P)$.\\
From $\tuple{X,Y}\in \HT(P)$ it follows that $\forall r'\in R_1\cup R_4:\tuple{X,Y}\models\headwoq{r'}\vee\tuple{X,Y}\not\models\bodywoq{r'}$, therefore $\tuple{X,Y}$ satisfies all rules derived by 1, 2, 4, 5, 6.\\
There are two cases, either $\tuple{X\cup \{q\},Y\cup \{q\}}\models P$, then $\forall r\in R_0\cup R_2: \tuple{X,Y}\models\head{r}\vee\tuple{X,Y}\not\models\bodywoq{r}$. Then $\tuple{X,Y}$ satisfies all rules derived by 1, 2, 3, 7.\\
Or $\tuple{X\cup \{q\},Y\cup \{q\}}\not\models P$, then $\exists r\in R_0\cup R_2:\tuple{Y,Y}\not\models r$ and therefore $\forall D\in \asdual{q}{R_0\cup R_2}:\tuple{Y,Y}\not\models D$, which means that $\tuple{X,Y}$ satisfies all rules derived by 3 and 7.\\
In either case $\tuple{X,Y}\models \fSP{P}{q}$, therefore $\HT_Y(\fSP{P}{q})\supseteq \HT_Y(\f{P}{\{q\}})$, and in the end $\HT_Y(\fSP{P}{q})=\HT_Y(\f{P}{\{q\}})$.
\end{proof}

\section{Detailed Comparison}
In this appendix, we provide more detailed information on the comparison for each of the three existing approaches on syntactic forgetting in answer set programming.
We clearly identify their short-comings and limitations in comparison to our novel operator.

To make this material self-contained, in each case, we first recall their concise definition, as presented in \cite{GoncalvesKL16}, and then discuss the crucial differences.
\subsection{Strong and Weak Forgetting}
Zhang and Foo \shortcite{ZhangF06} introduced two syntactic operators for normal logic programs, termed Strong and Weak Forgetting.
Both start with computing a reduction corresponding to the well-known weak partial evaluation (WGPPE) \cite{BrassD99}, defined as follows: for a normal logic program $P$ and $q\in\sign$, $R(P,q)$ is the set of all rules in $P$ and all rules of the form $\rhead{r_1}\la \rbody{r_1}\setminus\{q\}\cup \rbody{r_2}$ for each $r_1,r_2\in P$ s.t.\ $q\in\rbody{r_1}$ and $\rhead{r_2}=q$.
Then, the two operators differ on how they subsequently remove rules containing $q$, the atom to be forgotten.
In Strong Forgetting, all rules containing $q$ are simply removed:
\[
\fgt_{\strong}(P,q) = \{r\in R(P,q)\mid q\not\in \sign(r)\}
\] 
In Weak Forgetting, rules with occurrences of $\nf q$ in the body are kept, after $\nf q$ is removed.
\begin{align*}
\fgt_{\weak}(P,q) & = \{\rhead{r}\la\rbody{r}\setminus\{\nf q\}\mid \\
&  \;\;\;\;\;\; r\in R(P,q), q\not\in \rhead{r}\cup\rbody{r}\}
\end{align*}
The motivation for this difference is whether such $\nf q$ is seen as support for the rule head (Strong) or not (Weak). 
Both operators are closed for normal programs.

It is easy to identify the one communality of these two syntactic operators and the new operator $\fgt_{SP}$. 
If we restrict step {\bf 1a} in Def.~\ref{def:OurForgetting} to normal logic programs, then it does correspond to the calculation of the new rules obtained in $R(P,q)$.

The operator $\fgt_{SP}$ is, of course, applied to a larger class of programs, which is important and necessary as forgetting while satisfying \pSP\ cannot be closed for normal programs (cf.~\cite{KnorrA14}), but there are further crucial differences.
First, the normalization applied in \cite{ZhangF06} is limited to the elimination of the two kinds of tautological rules (cf.~Def.~2 in \cite{ZhangF06}) possible for normal programs (out of the three cases we consider).
Our normalization in Def.~\ref{def:normalForm} includes steps 3.\ and 4.\ which are applicable to normal programs, but not considered in \cite{ZhangF06}.
More importantly, both operators would not do any actual transformations/replacements w.r.t.\ negated atoms in the body.
Rather, either the rules are simply deleted (Strong Forgetting), or only the negated atoms are deleted.
This, yields counter-intuitive results of forgetting for which \pSP\ is not satisfied even though forgetting while satisfying \pSP\ would be possible.

\begin{example}\label{ex:showcase}
Consider forgetting about $q$ from $P=\{p\la \nf q;$ $q\la \nf c\}$.    
In both cases, $R(P,q)=P$, since (WGPPE) is not applicable.
Then, in Strong Forgetting all rules are deleted, i.e., $\fgt_{\strong}(P,q)=\emptyset$, whereas in Weak Forgetting, in the first rule, $\nf q$ is deleted only and the second rule entirely, i.e., $\fgt_{\weak}(P,q)=\{p\la\}$.
The desired result, that, e.g., would allow us to infer $p$ if a fact $c\la$ is added to the program, would contain (only) the rule $p\la \nf\nf c$ instead.
Hence, in both cases, undesired results are obtained.
\end{example}
In summary, while both operators behave in a desired manner when forgetting only involves positive atoms in the rule bodies, in the presence of a negated atom (to be forgotten) in some body, they cease to be of interest, as strong and weak forgetting resume to essentially strong and weak deletion with undesired effects on forgetting results, both, arguably from an intuitive point of view, as well as due to the fact that \pSP\ is trivially not satisfied.

\subsection{Semantic Forgetting}

Eiter and Wang \shortcite{EiterW08} proposed Semantic Forgetting to improve on some of the shortcomings of the two purely syntax-based operators $\fgt_{\strong}$ and $\fgt_{\weak}$.
Semantic Forgetting introduces a class of operators for consistent disjunctive programs\footnote{Actually, classical negation can occur in scope of $\nf$, but due to the restriction to consistent programs, this difference is of no effect \cite{GL91}, so we ignore it here.} defined as follows
\[
\classF_{\sem} = \{\fgt\mid \as{\f{P}{V}} = \Min{\as{P}_{\parallel V}}\}
\]
where $\Min{S}$ refers to the minimal elements in $S$.
The basic idea is to characterize a result of forgetting just by its answer sets, obtained by considering only the minimal sets among the answer sets of $P$ ignoring $V$. 
Three concrete algorithms are presented, two based on semantic considerations and one syntactic.
Unlike the former, the latter is not closed for normal and disjunctive programs, since double negation is required in general \cite{EiterW08}.

\begin{figure}[t!]
\includegraphics[width=0.87\textwidth]{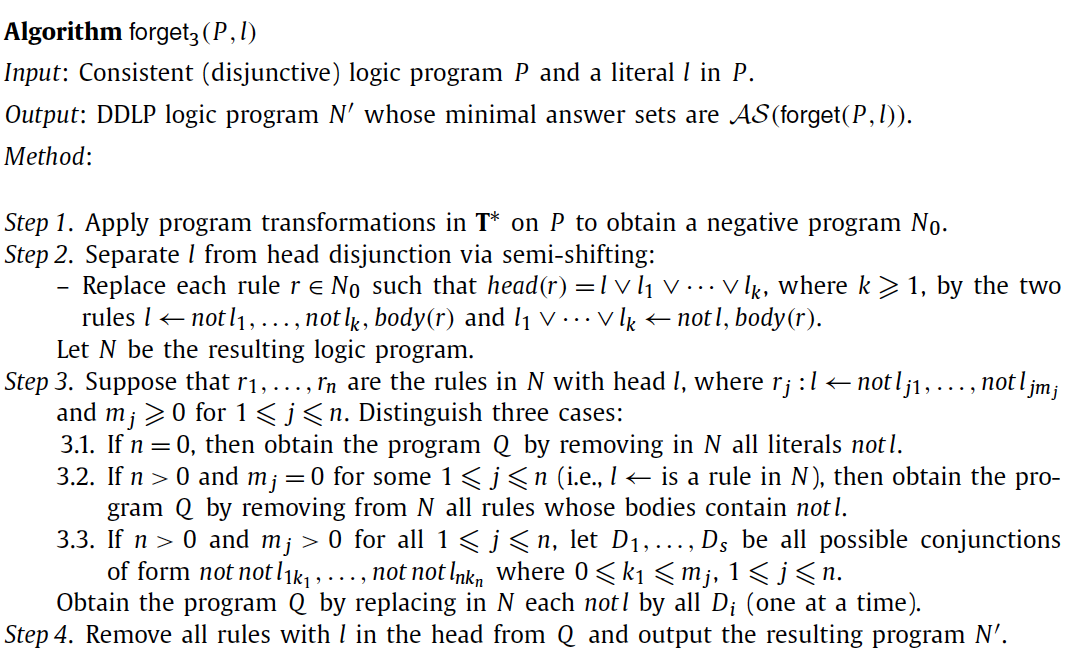}
\centering
\caption{Syntactic algorithm $\forget_3$}
\label{fig:forget3}
\end{figure}

We now focus on the detailed comparison between this syntactic operator, called $\forget_3$, and $\fgt_{SP}$.
To ease the comparison, we recall it in Fig.~\ref{fig:forget3}.

To begin with, $\forget_3$ requires a consistent disjunctive program whereas $\fgt_{SP}$ is more general and can be applied to programs without answer sets and, of course, to the general case of extended programs.\footnote{Actually, forgetting a single literal is considered for $\forget_3$, but again, since for consistent programs, we can always transform the program with literals (including classical negation) into one without, this difference is not of importance here.}
In terms of output, $\forget_3$ produces an extended program (called DDLP in \cite{EiterW08}) whose minimal answer sets coincide with $\as{\f{P}{l}}$ for all $\fgt\in \classF_{\sem}$. 
The operator $\fgt_{SP}$ returns an extended program, that corresponds to the semantic definition of class $\FSPOP$, i.e., if $\Omega$ is not satisfied, i.e., \pSP\ holds for this forgetting instance, then (without any minimization) the answer sets of the result correspond precisely to those of the input program (modulo the forgotten atoms). Otherwise, if $\Omega$ is satisfied, then the answer sets of the result are a superset of those of the input program (again modulo the forgotten atoms).

Now, although at first glance, Step 1 of $\forget_3$ may seem very similar to the normalization applied here for $\fSPOP$, there are crucial differences.
In more detail, Step 1 considers a collection of program transformations, namely, \emph{Elimination of Tautologies}, \emph{Elimination of Head Redundancy}, \emph{Positive Reduction}, \emph{Negative Reduction}, \emph{Elimination of Implications}, \emph{Elimination of Contradictions}, and \emph{Unfolding} (please refer to 4.3.1 in \cite{EiterW08} for their concise definitions).
Among these seven, four transformations have a correspondence in the construction of the normal form, since Elimination of Tautologies and Elimination of Contractions correspond to 1.\ of Def.~\ref{def:constructionNormalForm}, Elimination of Head Redundancy corresponds to 3.\ of Def.~\ref{def:constructionNormalForm}, and Elimination of Implications to 4.\ of Def.~\ref{def:constructionNormalForm}.
Note that there is no correspondence to 2.\ of Def.~\ref{def:constructionNormalForm} as $\forget_3$ does not consider double negation.
The other three transformations are not part of the normal form considered here, and, as such, to a large extent responsible for the inadmissibility of $\forget_3$ as a syntactic operator for satisfying \pSP. 

Positive Reduction essentially removes negated atoms $\nf c$ from the body of a rule if there is no rule in the given program whose head contains $c$.
This is a reasonable step in the context of Semantic Forgetting, where one is only interested in the answer sets of the current program.
If we want to take into account the answer sets of the program together with other rules over the remaining language, then this clearly is not suitable.

\begin{example}
Recall the program from Ex.~\ref{ex:showcase} where we forget about $q$ from $P=\{p\la \nf q;$ $q\la \nf c\}$. By Positive Reduction, $\nf c$ is eliminated in Step 1, leading eventually to the forgetting result $\emptyset$, which, as argued in Ex.~\ref{ex:showcase} is not desired. Rather a single rule $\{p\la\nf\nf c\}$ would be expected.
\end{example}
This is also the reason why the forgetting result for the example program $P'$ in Sec.~\ref{sec:relWork} returns $\{p\la\}$ and not $\{p\la \nf c\}$.
In fact, positive reduction alone already suffices to not use $\forget_3$, but there are further problems.

At first glance, Unfolding seems to coincide with the idea of WGPPE in \cite{ZhangF06} and step {\bf 1a} in Def.~\ref{def:OurForgetting}.
The crucial difference is that the latter only apply this to the atom to be forgotten, whereas Unfolding is applied to all occurrences of positive atoms.
This, again, is suitable for the approach of Semantic Forgetting, as it does not jeopardise the answer sets of the program itself, and it subsequently allows for a simpler definition of forgetting (as no positive atoms can occur in the rule bodies of a program in negative program).
However, this step causes several problems including the satisfaction of \pSP.

\begin{example}\label{ex:HornProgramDifference}
Consider forgetting about $q$ from the program $P=\{a\la b;$ $b\la a;$ $q\la\}$.
This program says that $q$ is true, and that both $a$ and $b$ are false, but if we manage to derive one of them, then also the other has to be true.

Now, by Unfolding, we can replace $a\la b$ with $a\la a$. 
This rule is a tautology, so it can be removed.
Finally, rule $b\la a$ is removed also by unfolding (as a special case of unfolding as argued likewise in \cite{EiterW08} for the running example in Sec.~4.3.1 justifying the removal of $r'_5$).
The result is indeed a negative program, but forgetting about q yields the empty program, and the dependency between $a$ and $b$ (if one is true, then the other also has to be true) has been lost in the course of forgetting using $\forget_3$.
\end{example}
Hence, general Unfolding is unsuitable when forgetting while satisfying \pSP.
In addition, it considerably reduces the similarity between the input program and its forgetting result, unnecessarily increasing the number of rules in the resulting program, affecting also the rules that do not contain the atom(s) to be forgotten.

Negative Reduction, which basically deletes rules whenever they contain a negated body atom for which the program contains a fact, further increases the difference between the original program and its output, again effecting atoms the rules that do not contain the atom(s) to be forgotten.

Step 2 of $\forget_3$ causes further differences: its objective is to essentially separate occurrences of atoms in disjunctions so that subsequently the atom to be forgotten appears alone in the rule head.
This transformation does not preserve strong equivalence and is not part of the definition of $\fSPOP$.

The construction in Step 3 is a simplified version of the construction of $\asdual{q}{P}$ and case {\bf 4} of Def.~\ref{def:OurForgetting}, simplified due to the fact that the input not allows double negation.
Finally, Step 4, corresponds to removing/not including any rules containing the atom to be forgotten.

In summary, there are commonalities between $\forget_3$ and $\fSPOP$, but, even restricted to Horn programs (as in Ex.~\ref{ex:HornProgramDifference}), there are crucial differences which make the usage of $\forget_3$ inadmissible, since we want that \pSP\ is satisfied whenever this is possible. 
The essential reason for that is that several of the syntactic transformations applied in $\forget_3$ are justified by the equivalence of answer sets only, which is not sufficient for \pSP. 
We want to stress again that $\forget_3$ cannot be iterated, since, in general, it produces extended programs (with double negation).
The operator $\fSPOP$ overcomes this problem, but, due to this, requires a more involved definition that handles double negation.

\subsection{Strong AS-Forgetting}
Knorr and Alferes \shortcite{KnorrA14} introduced Strong AS-Forgetting with the aim of preserving not only the answer sets of $P$ itself but also those of $P\cup R$ for any $R$ over the signature without the atoms to be forgotten, in the sense of \pSP.
The notion is defined abstractly for classes of programs $\mathcal{C}$.
\begin{align*}
\classF_{\Sas} & = \{\fgt\mid \as{\f{P}{V}\cup R}=\as{P\cup R}_{\parallel V} \text{ for all}\\
& \;\;\;\;\; \text{ programs } R\in \mathcal{C} \text{ with } \sign(R)\subseteq \sign(P)\setminus V\}
\end{align*}
A concrete operator is defined for a non-standard class of programs (with double negation and without disjunction), but not even closed for normal programs.

While Strong AS-Forgetting provides the expected results in certain cases, its applicability is severely limited to certain programs within a non-standard class with double negation, but without disjunction. Additionally, its result often does not belong to the class of programs for which it is defined, preventing its iterative use. $\fSPOP$ overcomes all the shortcomings of strong as-forgetting and satisfies \pSP\ whenever this is possible.

\end{document}